\theoremstyle{plain}
\newtheorem{theorem}{Theorem}[section]
\newtheorem{proposition}[theorem]{Proposition}
\theoremstyle{definition}
\theoremstyle{remark}
\newtheorem{remark}[theorem]{Remark}
\icmltitlerunning{Adaptive Preconditioners Trigger Loss Spikes in Adam}
\begin{document}

\twocolumn[
  \icmltitle{Adaptive Preconditioners Trigger Loss Spikes in Adam}



  \icmlsetsymbol{equal}{*}

  \begin{icmlauthorlist}
    \icmlauthor{Zhiwei Bai}{equal,ins,sms}
    \icmlauthor{Zhangchen Zhou}{equal,ins,sms}
    \icmlauthor{Jiajie Zhao}{ins,sms}
    \icmlauthor{Xiaolong Li}{ins,sms}
    \icmlauthor{Zhiyu Li}{mem,iaar}
    \icmlauthor{Feiyu Xiong}{mem,iaar}
    \icmlauthor{Hongkang Yang}{mem,iaar}
    \icmlauthor{Yaoyu Zhang}{ins,sms}
    \icmlauthor{Zhi-Qin John Xu}{ins,sms,ser}
  \end{icmlauthorlist}

  \icmlaffiliation{ins}{Institute of Natural Sciences, MOE-LSC, Shanghai Jiao Tong University}
  \icmlaffiliation{sms}{School of Mathematical Sciences, Shanghai Jiao Tong University}
  \icmlaffiliation{mem}{MemTensor (Shanghai) Technology Co., Ltd.}
  \icmlaffiliation{iaar}{Institute for Advanced Algorithms Research, Shanghai}
  \icmlaffiliation{ser}{Shanghai Seres Information Technology Co., Ltd, Shanghai 200040, China}

  \icmlcorrespondingauthor{Yaoyu Zhang}{zhyy.sjtu@sjtu.edu.cn}
  \icmlcorrespondingauthor{Zhi-Qin John Xu}{xuzhiqin@sjtu.edu.cn}

  \icmlkeywords{Loss spike, Adam, Training instability}

  \vskip 0.3in
]



\printAffiliationsAndNotice{\icmlEqualContribution}

\begin{abstract}
  Loss spikes commonly emerge during neural network training with the Adam optimizer across diverse architectures and scales, yet their underlying mechanism remains elusive.  While previous explanations attribute these phenomena to sharper loss landscapes at lower loss, we show that landscape geometry alone is insufficient to explain the phenomenon. In this work, we pinpoint the root cause in the internal dynamics of Adam's second moment estimator. We identify a critical ``decoupling'' mechanism where the adaptive preconditioner $v_t$ fails to track the instantaneous squared gradients $g_t^2$, causing the adaptive mechanism to effectively fail. This decoupling allows the preconditioner to decay autonomously despite rising gradients, which pushes the maximum eigenvalue of the preconditioned Hessian beyond the stability threshold $2/\eta$ for sustained periods, manifesting as dramatic loss spikes. Through a quadratic approximation analysis, we theoretically and experimentally characterize five distinct stages of spike evolution and propose a predictor for anticipating spikes based on gradient-directional curvature. We empirically find that the proposed loss spike mechanism, although derived from simplified models, generalizes well to practical scenarios ranging from small neural networks to large-scale Transformers.
  
\end{abstract}
\section{Introduction}

Neural network optimization remains complex and unpredictable despite significant advances in training methodologies. One particularly intriguing phenomenon is the ``loss spike''—a sudden, sharp surge in the loss function that subsequently subsides. As illustrated in Fig.~\ref{fig:multi-spike}, these spikes differ markedly from normal fluctuations, resembling systematic instabilities rather than random noise. Though observed across diverse architectures and datasets, their underlying mechanisms remain poorly understood. This poses a critical question: should practitioners intervene to eliminate these spikes, or might they actually benefit optimization? Answering this requires deeper understanding of when, how, and why loss spikes occur.

Previous research has attempted to explain loss spikes through loss landscape geometry~\citep{ma2022beyond, li2023loss}. The lower-loss-as-sharper (LLAS) hypothesis~\citep{li2023loss} suggests that lower-loss regions correspond to sharper curvature, potentially causing instability. However, this explanation fails to account for the behavior of adaptive optimizers like Adam~\citep{kingma2014adam}, which consistently exhibit spikes even in simple scenarios with well-understood geometry. For instance, as shown in Fig.~\ref{fig:1d-quadratic}(a), Adam produces loss spikes on a quadratic function even with learning rates well below theoretical stability thresholds, while gradient descent~(GD) converges smoothly. Since quadratic functions have constant curvature, landscape geometry alone cannot explain this behavior.
Furthermore, while previous work has identified the Edge of Stability (EoS) phenomenon in GD~\citep{cohen2021gradient,wu2018sgd,xing2018walk,ahn2022understanding,lyu2022understanding,arora2022understanding,wang2022analyzing,cohen2023adaptive}, where loss decreases non-monotonically while the largest Hessian eigenvalue hovers around $2/\eta$ ($\eta$ is the learning rate), loss spikes represent more \textit{dramatic instabilities} than typical EoS behavior. \textbf{The precise relationship between these instabilities and observed spikes remains unclear—instability may manifest as oscillations or spikes~\citep{ma2022qualitative}, but the specific mechanism governing spike occurrence is not well understood.}

\begin{figure*}[t]
    \centering
    \subfloat[FNN]{\includegraphics[height=0.16\textwidth]{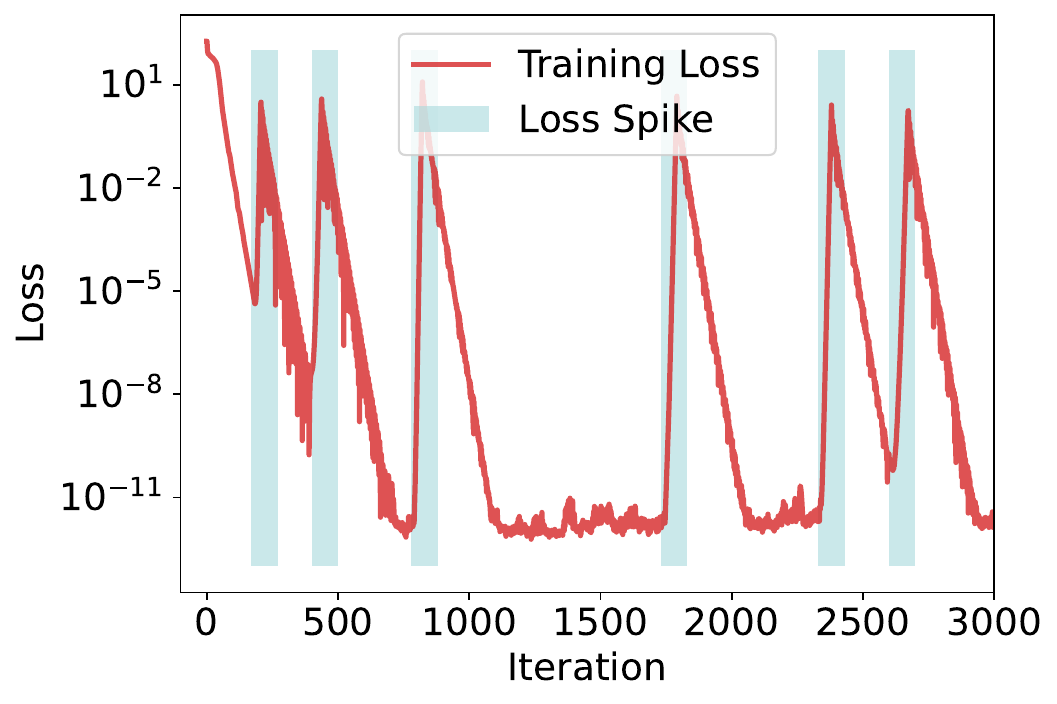}}
    \subfloat[CNN]{\includegraphics[height=0.16\textwidth]{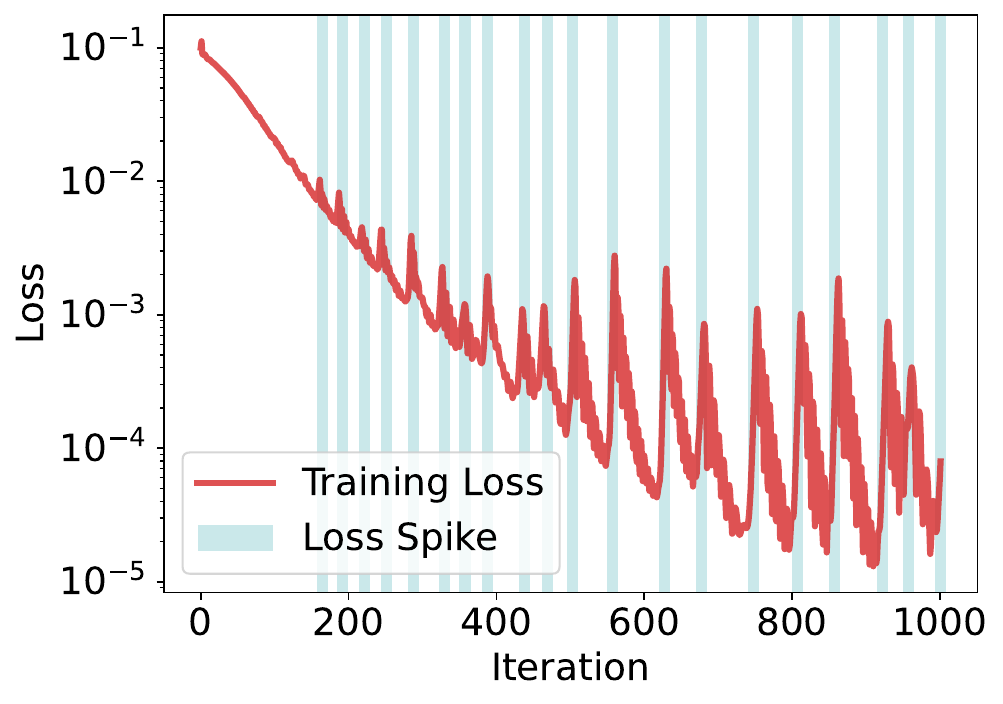}}
    \subfloat[Transformer (10M)]{\includegraphics[height=0.16\textwidth]{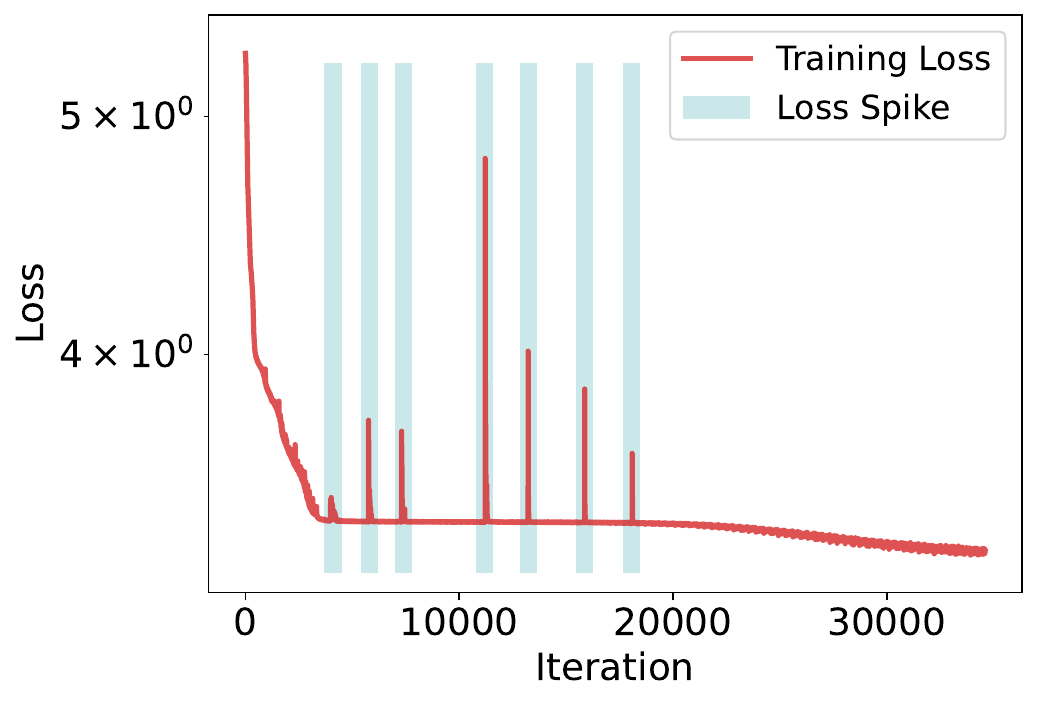}}
    \subfloat[Transformer (187M)]{\includegraphics[height=0.16\textwidth]{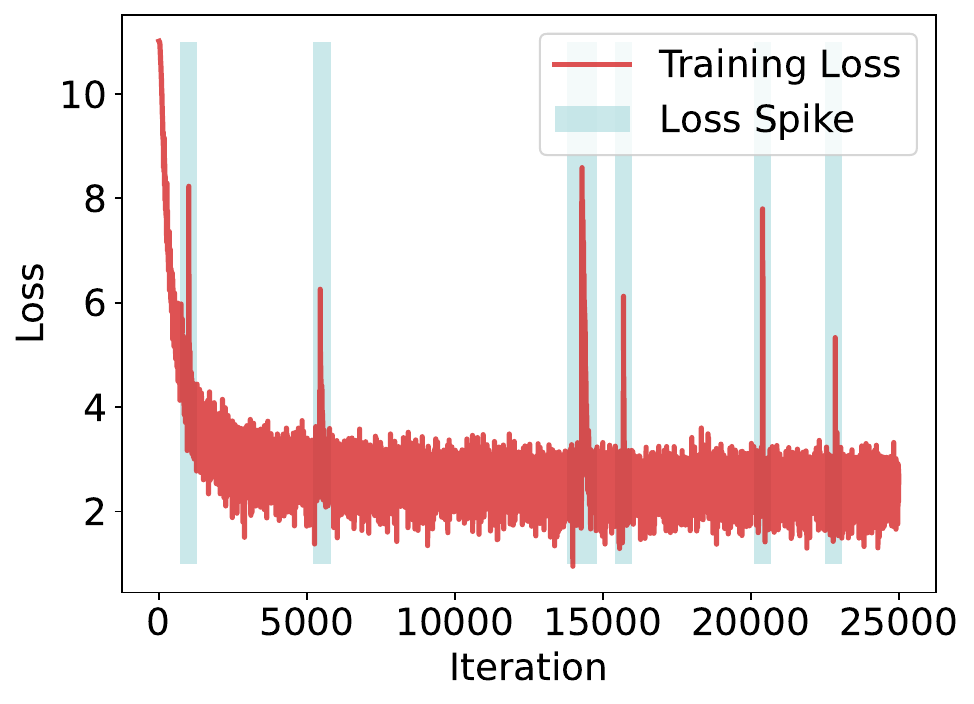}}
    \caption{Loss spikes across architectures: (a) FNNs for function approximation. (b) CNNs on CIFAR10. (c-d) Transformers on language tasks. See experimental details in App.~\ref{app:experimental_setup}.}
    \label{fig:multi-spike}
\end{figure*}

In this work, we present a mechanistic analysis of loss spikes in Adam optimization. Our key finding is that these spikes arise not primarily from complex loss landscape geometry, but from the internal dynamics of Adam's second moment estimator. We demonstrate both theoretically and experimentally that Adam's stability is governed by a preconditioned Hessian. We identify a critical ``\textbf{decoupling}'' mechanism where the adaptive preconditioner $v_t$ fails to track the instantaneous squared gradients $g_t^2$, causing the adaptive mechanism to effectively fail. Although the derived mechanism and indicators are from simple model analysis, we find that the proposed loss spike mechanics generalize well to realistic, high-dimensional settings. In addition, we find that directly reducing $\beta_2$ effectively mitigates loss spikes.


Our main contributions are:

(i) We show that landscape geometry alone is insufficient to explain loss spikes—Adam's adaptive preconditioners can independently cause spikes in practical training. This mechanism differs from the previous lower-loss-as-sharper~(LLAS) landscape hypothesis~\citep{li2023loss} (see Sec.~\ref{sec:diff_Adam_gd}, Sec.~\ref{sec:Adam_stability}, and Sec.~\ref{sec:spike_mechanism}).

(ii) Although the AEoS literature~\cite{cohen2023adaptive} mentions similar preconditioned Hessian perspective, the precise relationship with severe spikes remains unclear. We identify a critical ``decoupling'' mechanism where $v_t$ fails to track $g_t^2$, allowing the preconditioner to decay autonomously despite rising gradients. This pushes the maximum eigenvalue of the preconditioned Hessian beyond $2/\eta$ for sustained periods, manifesting as dramatic loss spikes. Conversely, momentary overshoots manifest as typical EoS oscillations rather than spikes (see Sec.~\ref{sec:Adam_stability}, Sec.~\ref{sec:preconditoner_spike}, and Sec.~\ref{sec:loss_spike_practical}).

(iii) We propose $\lambda_{\mathrm{grad}}(\hat{\vH}_t)$, a predictor for anticipating spikes based on curvature in the gradient direction. We empirically show this predictor is more accurate than established $\lambda_{\max}$ in forecasting spike onset, and validate practical strategies for mitigating spikes (see Sec.~\ref{sec:loss_spike_predictor} and Sec.~\ref{sec:loss_spike_practical}).

\section{Related Works}

\textbf{Edge of Stability (EoS).}  
Various works~\citep{cohen2021gradient, wu2018sgd, xing2018walk, ahn2022understanding, lyu2022understanding, arora2022understanding, Jastrzebski2020The, jastrzębski2018on, lewkowycz2020large} have investigated the \emph{Edge of Stability} (EoS), a phenomenon where gradient descent progressively increases the sharpness of the loss landscape—a process known as \emph{progressive sharpening}—until the maximum Hessian eigenvalue stabilizes near the threshold \(2/\eta\), while the loss continues to decrease non-monotonically. \citet{ma2022beyond} proposed a subquadratic structure near local minima, where sharpness increases when the loss decreases along the gradient direction, providing a theoretical account of this behavior. Other studies~\citep{damian2023selfstabilization, wang2022analyzing} show that when \(\lambda_{\max} > 2/\eta\), self-stabilization mechanisms can reduce sharpness and restore stability. More recently, \citet{cohen2023adaptive} extended the EoS framework to adaptive optimizers, introducing the concept of \emph{Adaptive Edge of Stability} (AEoS). Furthermore, \citet{cohen2025understanding} also developed the concept of central flow to study the average trajectory of oscillatory dynamics during EoS. \textbf{While EoS has been widely explored, its direct association with loss spikes has yet to be thoroughly investigated.}

\textbf{Convergence Analysis of Adam.}  
Numerous works have analyzed the convergence behavior of adaptive gradient methods~\citep{chen2018on, li2019convergence, xie2020linear, defossez2022a, da2020general, shi2021rmsprop,zou2019sufficient, zhou2024on}. In particular, \citet{j.2018on} demonstrated that Adam may fail to converge even in simple convex settings, prompting a series of variants~\citep{liuvariance,taniguchiadopt}. \citet{zhang2022adam} showed that in the case of learning rate decay Adam can converge to a neighborhood of critical points when $\beta_2$ is large, and $\beta_1 < \sqrt{\beta_2}$. Additionally, recent continuous-time ODE approximations and uniform a prior bounds have advanced the global stability and convergence rate analysis of Adam~\citep{dereich2025sharp, dereich2025ode,dereich2025adam, dereich2026uniform}.

\textbf{Loss Spike Analysis.}   \citet{chowdhery2023palm} reported that restarting training from an earlier checkpoint and skipping the spiking data batch can mitigate spikes in large models. \citet{molybog2023theory} found that the gradient and second-moment estimates of shallow layer parameters can decay to near-zero and then spike upon encountering a large gradient. \citet{li2023loss} argued that spikes occur in sharp regions of the loss landscape with a lower-loss-as-sharper (LLAS) structure. \citet{ma2022qualitative} qualitatively demonstrated that Adam's hyperparameters impact the occurrence of spikes or oscillations. 
\textbf{Although previous studies have uncovered parts of the puzzle surrounding spikes, this work provides a more detailed and comprehensive understanding of the spike formation.}
\section{Distinct Loss Spike Mechanism in Adam}
\label{sec:diff_Adam_gd}
\textbf{Adam Algorithm.} The Adam algorithm is widely used in training Transformer models and is widely observed to be more prone to cause loss spikes. Adam maintains exponential moving averages of gradients (first moment) and squared gradients (second moment) to speed up training:
\begin{align}
\vm_t &= \beta_1 \vm_{t-1} + (1-\beta_1)\vg_t, \nonumber \\
\vv_t &= \beta_2 \vv_{t-1} + (1-\beta_2)\vg_t^2,\label{eq:vt_Adam}
\end{align}
where $\vg_t := \nabla L(\vtheta_t)$ is the gradient, and $\beta_1, \beta_2 \in [0, 1)$ are hyperparameters controlling the exponential decay rates (default values: $\beta_1=0.9, \beta_2=0.999$). To counteract the initialization bias toward zero, these moments are corrected: $\hat{\vm}_t = \frac{\vm_t}{1-\beta_1^t}, \quad
\hat{\vv}_t = \frac{\vv_t}{1-\beta_2^t}$.
The parameter update rule is:
\begin{equation}
\vtheta_{t+1} = \vtheta_t - \eta \frac{\hat{\vm}_t}{\sqrt{\hat{\vv}_t}+\varepsilon},
\label{eq:Adam-rule}
\end{equation}
where $\eta > 0$ is the learning rate and $\varepsilon > 0$ is a small constant (default $10^{-8}$ in PyTorch).

\begin{figure*}[htb]
    \centering
    \subfloat[Loss]{\includegraphics[height=0.16\textwidth]{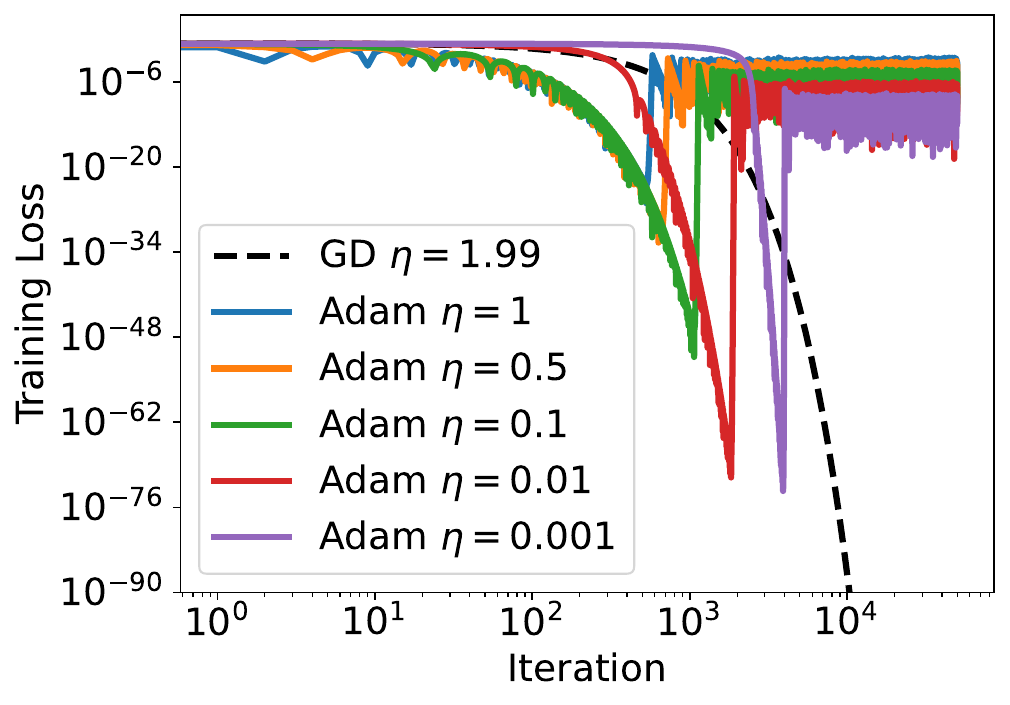}}
\subfloat[$\sqrt{\hat{v}_t} \propto \eta$]{\includegraphics[height=0.16\textwidth]{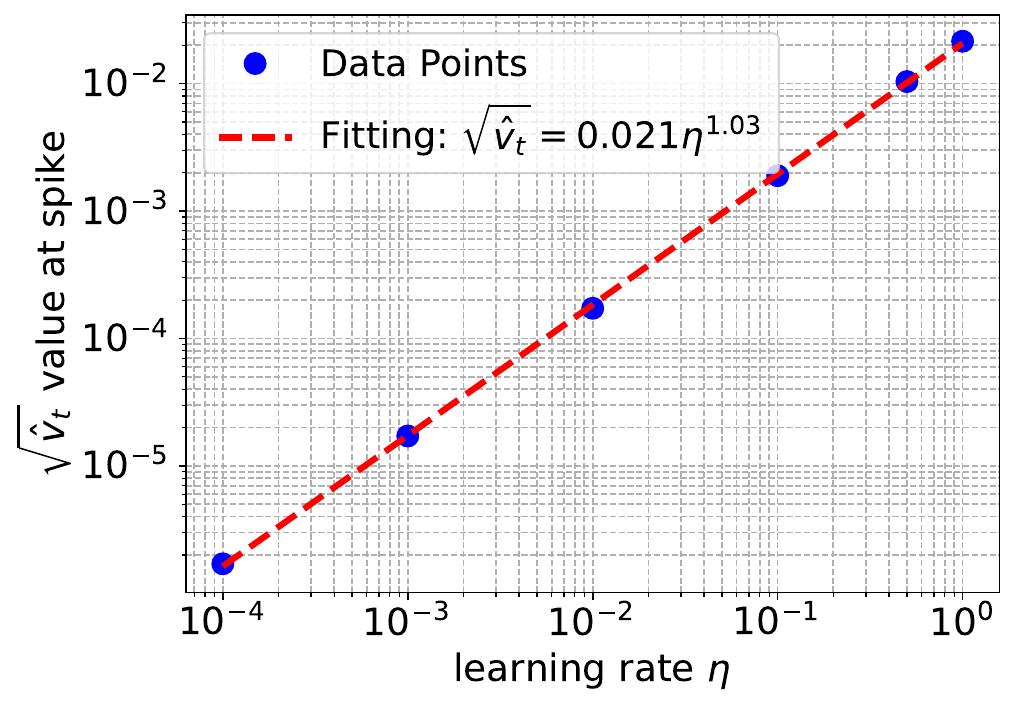}}
    \subfloat[Evolution of $\frac{\eta}{\sqrt{\hat{v}_t}}$]{\includegraphics[height=0.16\textwidth]{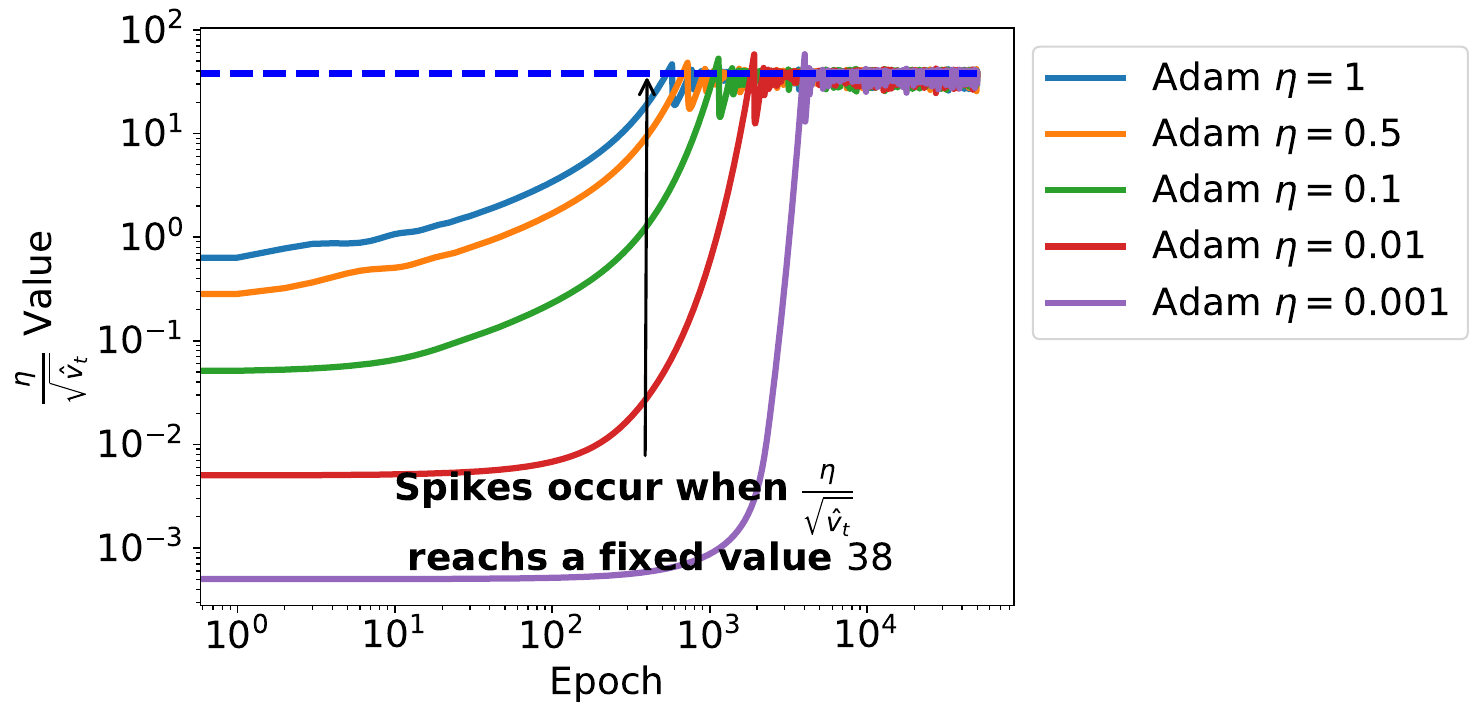}}
    \subfloat[Spike Illustration]{\includegraphics[height=0.16\textwidth]{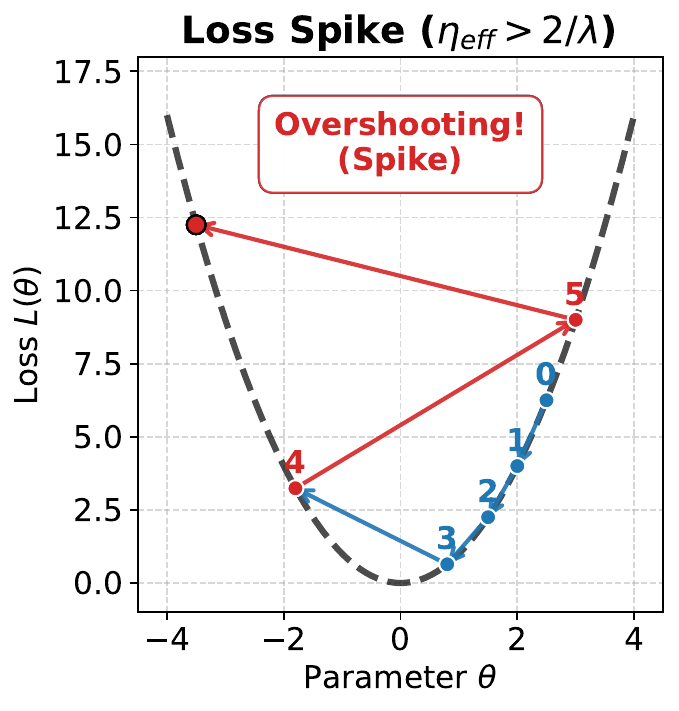}}
    \caption{Optimization of $L(\theta)=\frac{1}{2}\theta^2$. (a) Loss trajectories during Adam and GD training across various learning rates. Curves of different colors represent Adam's training loss, which initially decreases steadily before abruptly spiking to significantly higher values. (b) The relationship between learning rate and $\sqrt{\hat{v}_t}$ value at spike occurrence follows a power law, appearing as a straight line with a slope of approximately $1$ in log-log scale. (c) Under different learning rates, the ratio $\eta/\sqrt{\hat{v}_t}$ consistently reaches a nearly identical threshold value immediately before the loss begins to spike. (d) Illustration of a spike that arises when the effective learning rate exceeds $2/\lambda$.}
    \label{fig:1d-quadratic}
\end{figure*}
\textbf{Differences in Spike Behavior Between GD and Adam.} Adaptive methods like Adam exhibit fundamentally different behavior compared to standard GD. A notable distinction is that Adam can encounter convergence difficulties even with simple quadratic functions and very small learning rates. For the quadratic function $f(\theta)=\frac{1}{2}\theta^2$, it is well established that GD converges when the learning rate $\eta< 2 /\lambda_{\max}=2$ (depicted by the black dashed line in Fig.~\ref{fig:1d-quadratic}(a)). However, Adam displays more intricate dynamics. As illustrated in Fig.~\ref{fig:1d-quadratic}(a), Adam with a learning rate $\eta\ll 2$ (using hyperparameters $\beta_1 = 0.9, \beta_2 = 0.99, \varepsilon=10^{-8}$) still fails to converge. This non-convergence manifests in the distinctive colored curves in Fig.~\ref{fig:1d-quadratic}(a), where the training loss initially decreases steadily before abruptly spiking to a substantially higher magnitude. 

Fig.~\ref{fig:1d-quadratic}(b) further examines the relationship between Adam's second moment  $\sqrt{\hat{v}_t}$ at spike occurrence and learning rate. From Fig.~\ref{fig:1d-quadratic}(b), we observe that smaller learning rates correspond to smaller $\sqrt{\hat{v}_t}$ values when spikes occur, with the relationship appearing linear in log-log scale with a slope near~1. For one-dimensional quadratic optimization, $\eta/\sqrt{\hat{v}_t}$ can be interpreted as the effective learning rate and it increases as training progresses because $\sqrt{\hat{v}_t}$ diminishes alongside the gradient $g_t$ according to Eq.~\eqref{eq:vt_Adam}.  Experimentally, Fig.~\ref{fig:1d-quadratic}(c) confirms that this ratio increases until reaching a nearly consistent threshold value $38$ (see Prop.~\ref{proposition:momentum} for a theoretical explanation), at which point the loss spike invariably occurs. While straightforward, this analysis provides valuable intuition for the emergence of spikes. However, it is important to note that in high-dimensional optimization scenarios, \(\sqrt{\hat{\vv}}_t\) becomes a vector rather than a scalar, rendering the notion of an effective learning rate inapplicable. In the following section, we will quantitatively characterize Adam's spike behavior in more general settings.

\section{Loss Spike Analysis of Adam}
\label{sec:Adam_spike}
\textbf{Quadratic Approximation.} To understand the mechanism of loss spikes, we use classic linear stability analysis, which begins with a quadratic approximation. Consider optimizing a loss function $L(\boldsymbol{\theta})$ with respect to parameters $\boldsymbol{\theta} \in \mathbb{R}^M$. Around any current training step point $\theta_0$ (which we use as a local reference for the instantaneous update window), we approximate the loss using a second-order Taylor expansion: 
\begin{equation} L(\boldsymbol{\theta}_0 + \delta\boldsymbol{\theta}) \approx \tilde{L}(\delta\boldsymbol{\theta}) := L(\boldsymbol{\theta}_0) + \nabla L(\boldsymbol{\theta}_0)^{\top} \delta\boldsymbol{\theta} + \tfrac{1}{2} \delta\boldsymbol{\theta}^{\top} \boldsymbol{H} \delta\boldsymbol{\theta}, \label{eq:second-approximation} 
\end{equation} 
where $\tilde{L}$ is quadratic in $\delta\vtheta$, $\nabla L(\boldsymbol{\theta}_0)$ is the gradient, and $\boldsymbol{H} := \boldsymbol{H}(\vtheta_0) = \nabla^2 L(\boldsymbol{\theta}_0)$ is the local Hessian at $\boldsymbol{\theta}_0$.

\textbf{Stability Analysis.} For the local quadratic model $\tilde{L}(\delta\vtheta)$ optimized with GD at learning rate $\eta$, the deviation $\delta \boldsymbol{\theta}_t$ evolves as: 
\begin{align} \delta \boldsymbol{\theta}_{t+1} &\approx \delta \boldsymbol{\theta}_{t} - \eta \nabla \tilde{L}(\delta\boldsymbol{\theta}_t) = \delta \boldsymbol{\theta}_{t} - \eta (\nabla L(\boldsymbol{\theta}_0) + \boldsymbol{H}\delta\boldsymbol{\theta}_t) \nonumber \\
&= (\boldsymbol{I} - \eta \boldsymbol{H})\delta\boldsymbol{\theta}_t - \eta \nabla L(\boldsymbol{\theta}_0). \label{eq:stability-GD} \end{align} 
Optimization becomes unstable along the maximum eigendirection when $\lambda_{\max}(\boldsymbol{H}) > 2/\eta$.

\textbf{Practical Stability Condition.} In neural network optimization, the loss landscape—and consequently the Hessian—evolves continuously as parameters update. However, the local Hessian stability condition indeed ensures stable loss decrease at each iteration, as formalized below.

\begin{proposition}[see App.~\ref{app:A} Prop.~\ref{oldproposition:decent_lemma} for proof]
\label{prop:decent_lemma}
Let $L:\mathbb{R}^M\to\mathbb{R}$ be twice continuously differentiable. For any iterate $\vtheta_t$, define the gradient $\vg_t:=\nabla L(\vtheta_t)$ and, for fixed learning rate $\eta>0$, define the local directional maximum Hessian
$\bar\lambda_t \;:=\; \sup_{s\in[0,1]}\; \lambda_{\max}\!\big(\nabla^2 L(\vtheta_t - s\eta \vg_t)\big),$
the maximum eigenvalue of the Hessian along the line segment from $\vtheta_t$ to $\vtheta_{t+1}=\vtheta_t-\eta \vg_t$.
If $\eta < \frac{2}{\bar\lambda_t},$ the GD step $\vtheta_{t+1}=\vtheta_t-\eta \vg_t$ satisfies:
$$
L(\vtheta_{t+1}) \le L(\vtheta_t) - \eta\Big(1-\frac{\eta\bar\lambda_t}{2}\Big)\|\vg_t\|^2.
$$
In particular, whenever $\eta\in\big(0,2/\bar\lambda_t\big)$ and $\vg_t\neq 0$, we have strict decrease $L(\vtheta_{t+1})<L(\vtheta_t)$.
\end{proposition}

In practice, since learning rates are typically small, we can monitor \textbf{the step-wise stability condition $\lambda_{\max}(\boldsymbol{H}_t) \leq 2/\eta$ as a proxy}. When this condition is \textit{persistently} violated, a loss spike is likely to occur.

\subsection{Adam's Preconditioned Hessian and Stability}
\label{sec:Adam_stability}
\textbf{Stability Analysis of Adaptive Mechanism.} To analyze Adam's stability conditions, we first examine the adaptive mechanism by setting $\beta_1 = 0$, ignoring momentum effects. Following the Taylor expansion approach from Eq.~\eqref{eq:second-approximation}, we have: 
\begin{align} &\delta \boldsymbol{\theta}_{t+1} \approx \delta \boldsymbol{\theta}_{t} - \eta \frac{\nabla \tilde{L}(\delta\boldsymbol{\theta}_t)}{\sqrt{\hat{\boldsymbol{v}}_t} + \varepsilon} \nonumber \\
&= \left(\boldsymbol{I} - \eta \text{diag}\left(\frac{1}{\sqrt{\hat{\boldsymbol{v}}_t} + \varepsilon}\right)\boldsymbol{H}\right)\delta \boldsymbol{\theta}_t - \eta \frac{\nabla L(\boldsymbol{\theta}_0)}{\sqrt{\hat{\boldsymbol{v}}_t} + \varepsilon}. \end{align} 
Stability requires the spectral radius $\rho\left(\boldsymbol{I} - \eta \hat{\boldsymbol{H}}\right) < 1$, where $\hat{\boldsymbol{H}}=\text{diag}((\sqrt{\hat v_t}+\varepsilon)^{-1})\boldsymbol{H}$ is the ``adaptive preconditioned Hessian''. Although asymmetric, $\hat{\boldsymbol{H}}$ is diagonalizable with strictly real eigenvalues\footnote{Let $\boldsymbol{D}_t = \text{diag}\left(1/(\sqrt{\hat{\boldsymbol{v}}_t}+\varepsilon)\right)$, which is positive definite. We can express the preconditioned Hessian as $\hat{\boldsymbol{H}} = \boldsymbol{D}_t \boldsymbol{H} = \boldsymbol{D}_t^{1/2} (\boldsymbol{D}_t^{1/2} \boldsymbol{H} \boldsymbol{D}_t^{1/2}) \boldsymbol{D}_t^{-1/2}$. Since $\boldsymbol{D}_t^{1/2} \boldsymbol{H} \boldsymbol{D}_t^{1/2}$ is symmetric, $\hat{\boldsymbol{H}}$ is similar to a symmetric matrix, guaranteeing that it has real eigenvalues and is fully diagonalizable ($\hat{\boldsymbol{H}} = \boldsymbol{P} \boldsymbol{\Lambda} \boldsymbol{P}^{-1}$).} (see App.~\ref{app:A} Lem.~\ref{oldlemma:diagonal} for proof), yielding the stability condition $\lambda_{\max}(\hat{\boldsymbol{H}})<2 / \eta$.

\textbf{Stability Analysis of Momentum Mechanism.} With momentum ($\beta_1 > 0$), we analyze the update rule $\boldsymbol{\theta}_{t+1} = \boldsymbol{\theta}_{t} - \eta \boldsymbol{m}_t$. Following the same Taylor expansion approach: 
$\delta \boldsymbol{\theta}_{t+1} \approx \delta \boldsymbol{\theta}_{t} - \eta (\beta_1\boldsymbol{m}_{t-1} + (1-\beta_1) (\nabla L(\boldsymbol{\theta}_0) + \boldsymbol{H}\delta\boldsymbol{\theta}_t))$. Substituting $\eta \boldsymbol{m}_{t-1} = \delta \boldsymbol{\theta}_{t-1} - \delta \boldsymbol{\theta}_t$ gives: 
\begin{align}
&\delta \boldsymbol{\theta}_{t+1}\approx
\left[(1+\beta_1)\boldsymbol{I} - \eta(1-\beta_1)\boldsymbol{H}\right]\delta \boldsymbol{\theta}_{t}  \nonumber\\
&- \beta_1 \delta\boldsymbol{\theta}_{t-1} - \eta(1-\beta_1)\nabla L(\boldsymbol{\theta}_0). \label{eq:momentum}
\end{align}

\begin{proposition}[see App.~\ref{app:A} Prop.~\ref{oldproposition:momentum} for proof]
\label{prop:momentum}
Consider the three-term recursive iteration Eq.~\eqref{eq:momentum} with learning rate $\eta>0$ and momentum parameter $\beta_1\in[0,1)$. The linearized system at $\boldsymbol{\theta}_0$ is asymptotically stable in a specific eigendirection if and only if the corresponding eigenvalue $\lambda_i$ satisfies:
  $$0 < \lambda_i < \frac{2}{\eta} \cdot \frac{1 + \beta_1}{1 - \beta_1}.$$ 
  Consequently, to prevent instabilities that manifest as loss spikes in positive-curvature directions, the maximum positive eigenvalue must satisfy $\lambda_{\max}^{}\!\left(\frac{1-\beta_1}{1+\beta_1}\,\boldsymbol{H}\right) < \frac{2}{\eta}$.
\label{proposition:momentum}
\end{proposition}

\begin{remark}[\textbf{Analysis of Stable and Unstable Directions.}]
When the upper stability bound is violated ($\lambda_i > \frac{2}{\eta} \frac{1+\beta_1}{1-\beta_1}$) in positive-curvature directions, the corresponding parameter components undergo explosive exponential growth. Although the loss may simultaneously decrease along other stable or negative-curvature directions, a persistent instability in these sharp positive directions will eventually dominate the overall dynamics, macroscopically manifesting as a sharp loss spike. This competing dynamic highlights a fundamental trade-off: global landscape metrics or generic eigenvalue tracking may fail to capture the localized onset of inflation. Consequently, monitoring the gradient-directional curvature becomes essential for detecting impending loss spikes, a mechanism we formally develop and validate in Sec.~\ref{sec:loss_spike_predictor}.
\label{rem:stability_tradeoff}
\end{remark}


\textbf{Comprehensive Stability Analysis of Adam: A Dual Preconditioning Perspective.} 
The core idea of the stability framework lies in recognizing Adam not merely as an algorithmic update rule, but as a dual-preconditioning operator on the Hessian of the loss landscape. Specifically, Adam introduces two structurally independent preconditioning mechanisms: (1) a \textit{temporal preconditioning} governed by momentum ($\beta_1$), which scales the effective curvature by $\frac{1-\beta_1}{1+\beta_1}$ (as shown in Prop.~\ref{prop:momentum}); and (2) a \textit{spatial metric preconditioning} governed by the adaptive term ($\beta_2$), which transforms the coordinate system via $\text{diag}\left(1/(\sqrt{\hat{\boldsymbol{v}}_t} + \varepsilon)\right)$. Because these two mechanisms operate on decoupled dimensions (temporal inertia versus spatial scaling), their effects can be composited multiplicatively. Integrating both mechanisms and the momentum bias correction $\hat{\boldsymbol{m}}_t = \frac{\boldsymbol{m}_t}{1-\beta_1^t}$, the comprehensive ``Adam preconditioned Hessian\footnote{This preconditioner jointly incorporates the independent effects of $\beta_1$ and $\beta_2$, unifying the stability threshold at $2/\eta$. While the formulation differs slightly from that in~\citet{cohen2023adaptive}, the two definitions are essentially equivalent under this dual-preconditioning view.}'' becomes: 

\begin{figure*}[htb]
\centering
\subfloat[Violent Spike]{\includegraphics[height=0.15\textwidth]{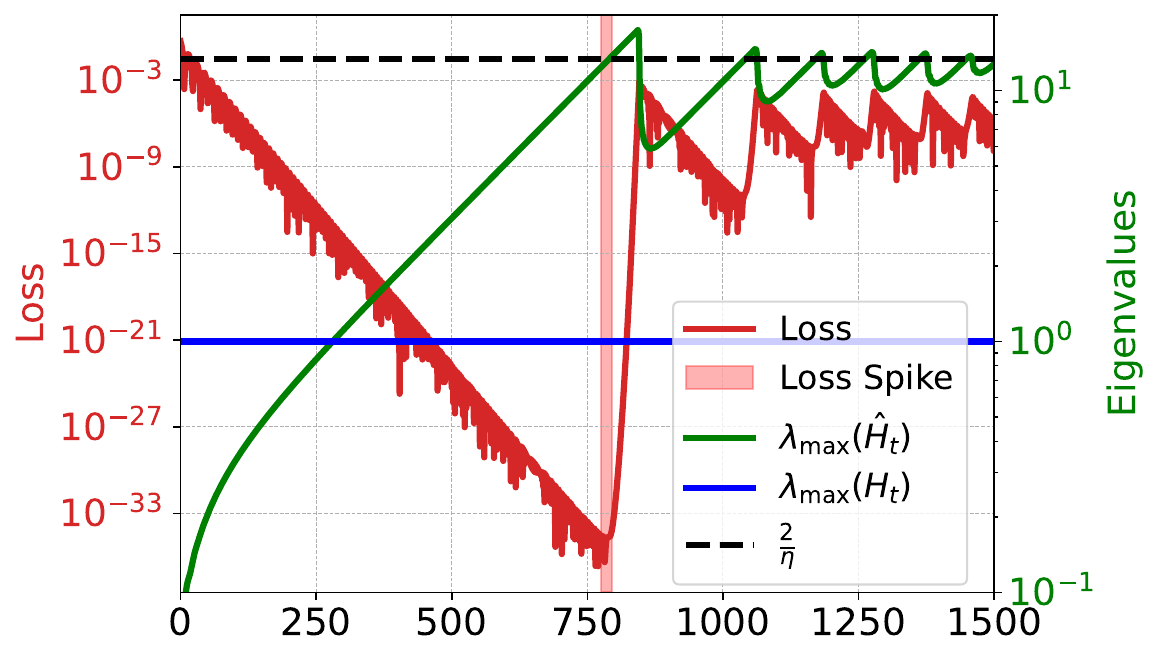}}
\subfloat[Decoupling: $\sqrt{v_t}$ vs. $g_t$]{\includegraphics[height=0.15\textwidth]{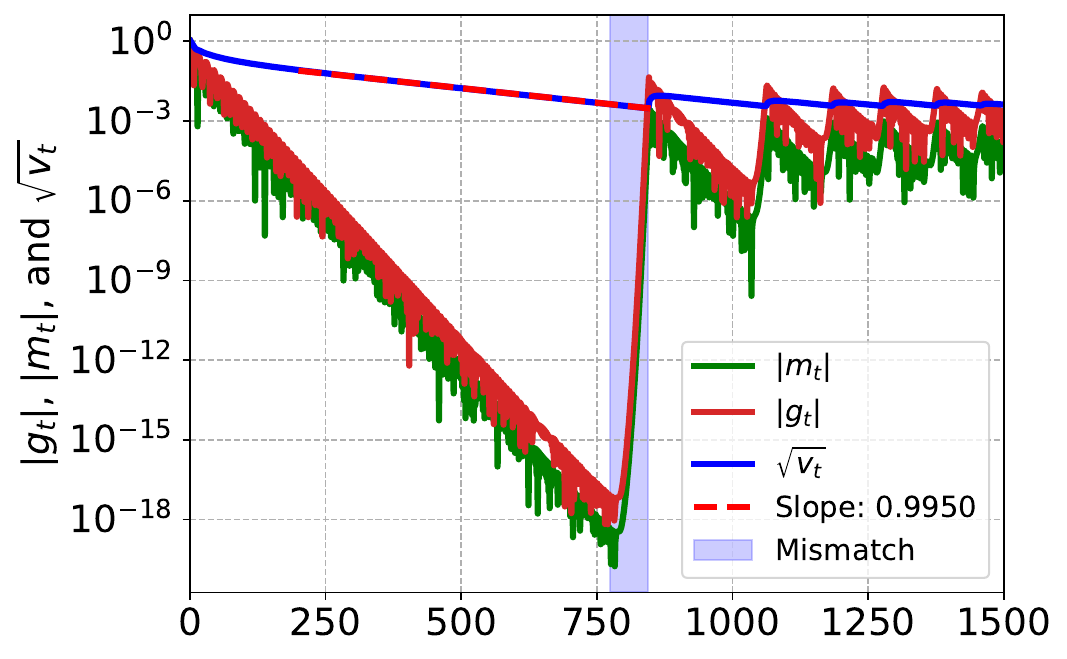}}
\subfloat[Oscillation (EoS)]{\includegraphics[height=0.15\textwidth]{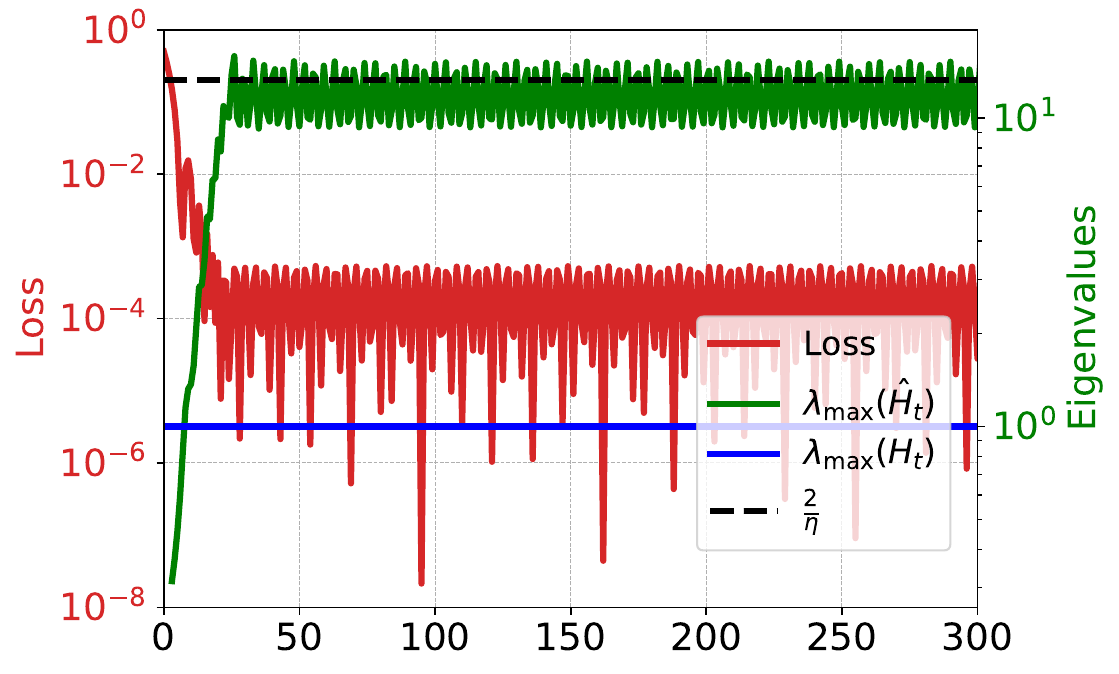}}
\subfloat[Coupling: $\sqrt{v_t}$ vs. $g_t$]{\includegraphics[height=0.15\textwidth]{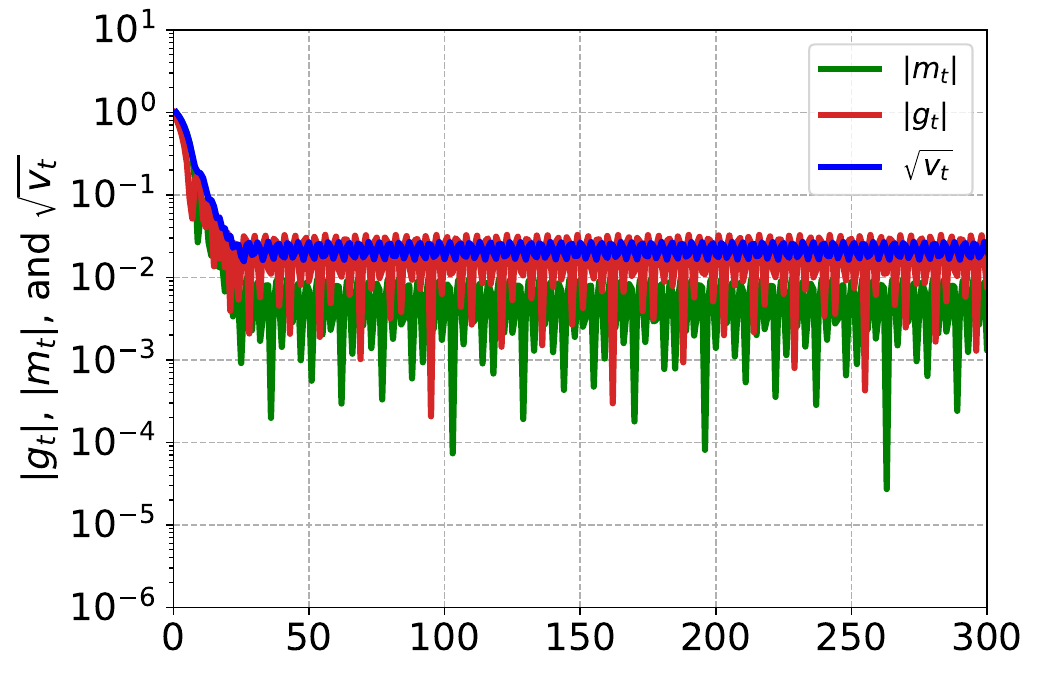}}
\caption{Behaviors of Adam on $L(\theta) = \frac{1}{2}\theta^2$. \textbf{(a, b) Spike Behavior:} The red span marks the loss spike. The blue span highlights the \textit{mismatch phase}, where the preconditioner $\sqrt{v_t}$ continues to decay autonomously (fitted by red dashed line $\alpha^t$) despite the rising gradient $g_t$. This lag causes sustained instability. \textbf{(c, d) Oscillation Behavior:} The preconditioner $\sqrt{v_t}$ remains tightly coupled with $g_t$, resulting in immediate corrections and stable oscillations at the Edge of Stability.}
\label{fig:1d-quadratic-diff-beta2}
\end{figure*}

\begin{equation} 
\label{eq:precondition_hessian} 
\hat{\boldsymbol{H}}_t = \frac{1}{1-\beta_1^t}\frac{1-\beta_1}{1+\beta_1}\text{diag}\left(\frac{1}{\sqrt{\hat{\boldsymbol{v}}_t} + \varepsilon}\right)\boldsymbol{H}_t. 
\end{equation} 
In Sec.~\ref{sec:loss_spike_predictor}, we experimentally validate that this preconditioned step-wise instability criterion $\lambda_{\max}(\hat{\boldsymbol{H}}_t) > 2/\eta$ accurately predicts loss spikes in one-dimensional case.

\begin{remark}[\textbf{Justification of Integrating Momentum and Adaptive Mechanisms.}]
Strictly speaking, compositing the spatial and temporal preconditioners into a single matrix $\hat{\boldsymbol{H}}_t$ to evaluate step-wise stability mathematically relies on a \textit{frozen-time approximation}. Incorporating Adam's full update rule yields a Linear Time-Varying (LTV) system. However, this theoretical heuristic is highly justified by the intrinsic timescale separation in Adam's design: typical hyperparameters (e.g., $\beta_2 = 0.999$) render the spatial metric $\hat{\boldsymbol{v}}_t$ a ``slow'' variable compared to the ``fast'' temporal gradient dynamics. Because $\hat{\boldsymbol{v}}_t$ is quasi-static within a short local optimization window (especially during the decoupling phase analyzed in Sec.~\ref{sec:preconditoner_spike}), the spatial and temporal preconditioning effects do not dynamically interfere. This structural independence allows us to safely treat the system as locally Linear Time-Invariant (LTI) to derive the unified threshold $\lambda_{\max}(\hat{\boldsymbol{H}}_t) < 2/\eta$, a mechanism overwhelmingly corroborated by our empirical validations.
\end{remark}

\begin{remark}[\textbf{Clarification on Step-wise vs. Global Approximation.}] It is crucial to emphasize that our quadratic model is employed for \textit{step-wise, localized} stability analysis, rather than predicting the entire macroscopic trajectory of a loss spike. We do not assume a globally fixed Hessian $\boldsymbol{H}(\boldsymbol{\theta}_0)$ governs the large-displacement dynamics across the loss landscape. Instead, we continuously evaluate the local preconditioned Hessian $\hat{\boldsymbol{H}}_t$ at each instantaneous iteration $t$. As formalized in Prop.~\ref{prop:decent_lemma}, a continuous decrease in loss fundamentally requires the step-wise spectral radius to be less than $2/\eta$. While the developed loss spike is a highly non-linear macroscopic event, its \textit{trigger}---the localized overshooting that breaches the current basin---is rigorously captured by the instantaneous violation of this local step-wise stability threshold ($\lambda_{\max}(\hat{\boldsymbol{H}}_t) > 2/\eta$).
\end{remark}

\begin{figure*}[htb]
    \centering
    \subfloat[GD (loss)]{\includegraphics[height=0.165\linewidth]{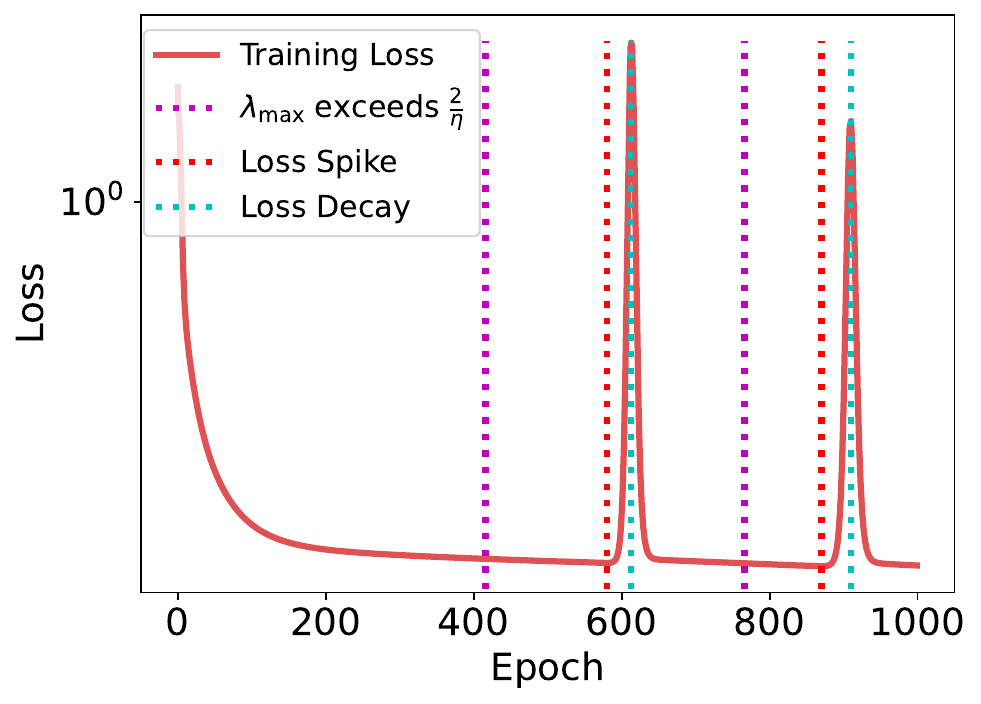}}
    \subfloat[GD (eigenvalues)]{\includegraphics[height=0.165\linewidth]{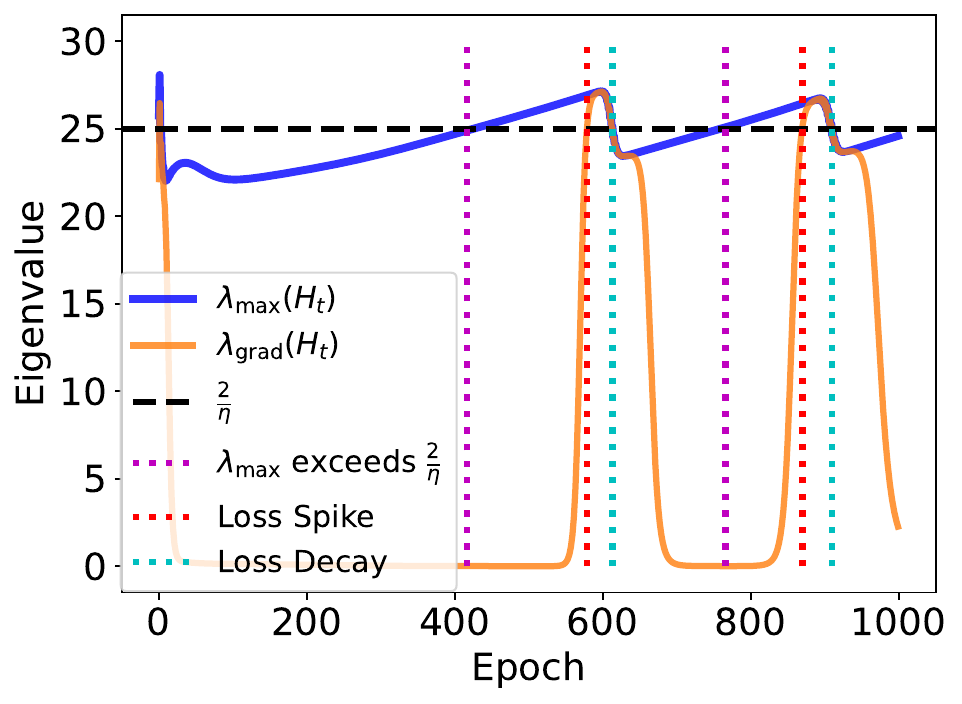}}
    \subfloat[Adam (loss)]{\includegraphics[height=0.165\linewidth]{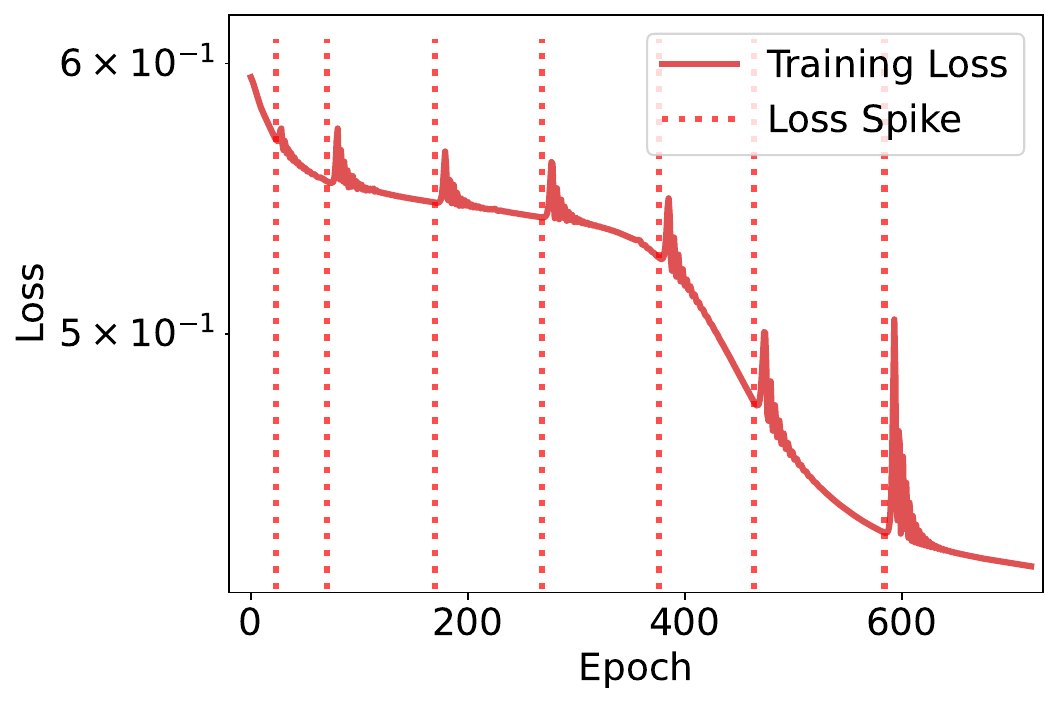}}
    \subfloat[Adam (eigenvalues)]{\includegraphics[height=0.165\linewidth]{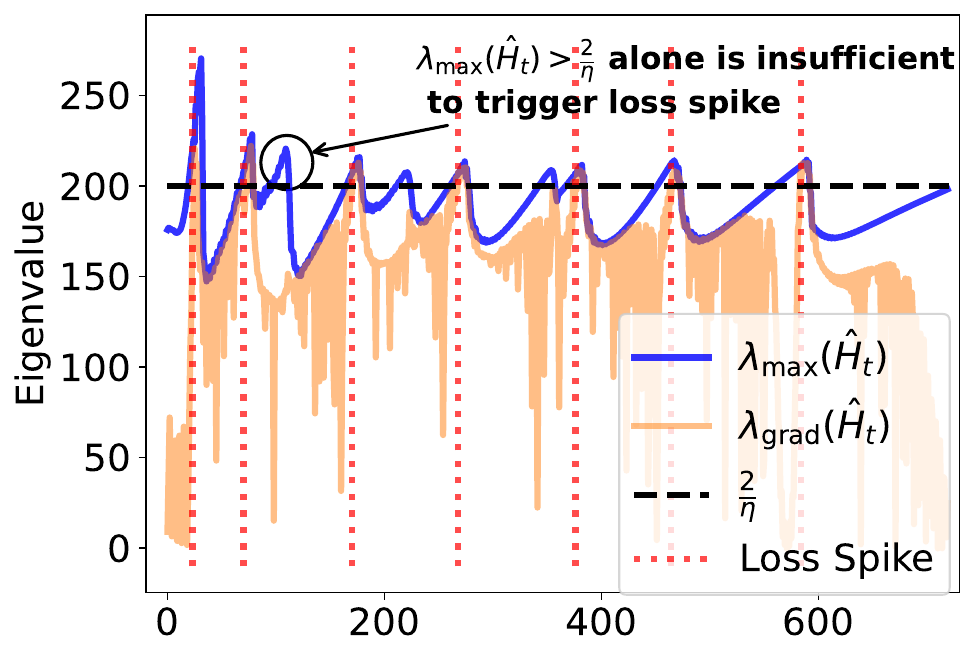}}
    \caption{Experimental validation of the gradient-directional loss spike predictor. A two-layer fully connected neural network (width $20$) is trained on 200 randomly sampled data points to fit \(f(x) = \sin(x) + \sin(4x)\). (a–b) GD with learning rate \(\eta = 0.08\). (c–d) Adam with learning rate \(\eta = 0.01\), \(\beta_1 = 0.9\), \(\beta_2 = 0.999\).}
    \label{fig:GD-1d-spike}
\end{figure*}

\subsection{Decoupling of Preconditioner and Gradient Precipitates Loss Spikes} 
\label{sec:preconditoner_spike}
In this section, we demonstrate how Adam's internal dynamics---specifically the interaction between $v_t$ and $g_t^2$---can drive the optimizer to breach the stability threshold for sustained periods. We analyze these dynamics through controlled experiments on the canonical quadratic objective. 

\textbf{Decoupling Causes Sustained Instability (Spikes).}
Fig.~\ref{fig:1d-quadratic-diff-beta2}(a--b) shows results using the standard setting $\beta_1 = 0.9$ and $\beta_2 = 0.99$. Initially, the loss decreases gradually. A spike emerges at epoch $782$, precisely when the effective curvature $\lambda_{\max}(\hat{\boldsymbol{H}}_t)$ exceeds the stability threshold $2/\eta$. The mechanism unfolds as follows: As training progresses, the decay of $v_t$ naturally drives $\lambda_{\max}(\hat{\boldsymbol{H}}_t)$ upward. Once it crosses $2/\eta$, the system enters an unstable regime where gradients begin to grow. Ideally, the adaptive mechanism should counter this by increasing $v_t$ to reduce the effective step size. However, we observe a critical \textbf{mismatch phase} (blue span in Fig.~\ref{fig:1d-quadratic-diff-beta2}(b)) where the adaptive mechanism fails. Although the first moment $m_t$ tracks the rising gradient $g_t$, the second moment $v_t$ decouples from the instantaneous squared gradient $g_t^2$.
Despite rising gradients, $v_t$ continues to decay autonomously, dominated by its history term: $v_t \approx \beta_2 v_{t-1}$. This is empirically confirmed by the red dashed line in Fig.~\ref{fig:1d-quadratic-diff-beta2}(b), which fits the decay as $\sqrt{v_t} \propto \alpha^t$ with $\alpha \approx \sqrt{\beta_2} \approx 0.995$.
Due to this decoupling, $v_t$ is ``blind'' to the rising instability, allowing $\lambda_{\max}(\hat{\boldsymbol{H}}_t)$ to continue rising well above $2/\eta$. This sustained violation accumulates energy until epoch $845$, when the gradient finally grows large enough to dominate the momentum term in $v_t$. Only then does $v_t$ increase, pulling $\lambda_{\max}(\hat{\vH}_t)$ back below $2 / \eta$ and allowing the loss to recover.

\textbf{Coupling Prevents Spikes (Oscillations).}
In contrast, Fig.~\ref{fig:1d-quadratic-diff-beta2}(c-d) illustrates a ``coupled'' example using $\beta_1 = 0.6$ and $\beta_2 = 0.5$. Here, $\sqrt{v_t}$ remains tightly coupled with the gradient norm throughout training. As $\lambda_{\max}(\hat{\boldsymbol{H}}_t)$ approaches $2/\eta$, the highly responsive $v_t$ (due to low $\beta_2$) immediately reacts to any increase in gradients. This prevents the effective curvature from significantly exceeding the threshold. Consequently, instead of a catastrophic spike, the system settles into typical Edge of Stability (EoS) oscillations, where $\lambda_{\max}(\hat{\boldsymbol{H}}_t)$ hovers around $2/\eta$.

\begin{remark}
Although increasing $\beta_1$ reduces the preconditioned Hessian curvature, this effect is fundamentally limited: as $v_t$ can decay toward zero, the preconditioned curvature can grow unbounded and eventually dominate. Therefore, controlling spikes critically depends on regulating the behavior of $v_t$, rather than $\beta_1$. We empirically observe that $v_t-g_t^2$ decoupling is exacerbated by larger $\beta_2$ as higher $\beta_2$ increases the inertia of the second-moment estimator. In the no-inertia limit ( $\beta_1, \beta_2 \rightarrow 0$ ), Adam reduces to SignGD, which is memoryless and thus immune to this decoupling effect —resulting in only oscillations rather than severe spikes.
\end{remark}

\begin{remark}
\textbf{(Intuition for Decoupling)} The decoupling of $v_t$ from $g_t^2$ is a direct consequence of small parameter updates. Consider the effective update $\vtheta_{t+1} = \vtheta_t - \eta \frac{\vg_t}{\sqrt{\vv}_t}$. When the update magnitude $\frac{\vg_t}{\sqrt{\vv}_t}$ is small, the current gradient contribution in $\vv_t = \beta_2 \vv_{t-1} + (1-\beta_2)\vg_t^2$ becomes negligible compared to the history term. Consequently, the dynamics degenerate to $\vv_t \approx \beta_2 \vv_{t-1}$, causing $\vv_t$ to decouple from the current geometry and decay autonomously. 
\end{remark}

\subsection{Precise Loss Spike Prediction via Gradient-Directional Curvature} 
\label{sec:loss_spike_predictor} 
In high-dimensional optimization, when $\lambda_{\max}({\vH}_t) > 2/\eta$, instability occurs primarily along the corresponding unstable eigendirection, while other directions may remain stable. As shown in our 2D experiments with GD (Figs.~\ref{fig:GD-1d-2d} and~\ref{fig:GD-1d-2d-trajectory}), even when $\lambda_{\max}(\vH_t)$ exceeds $2/\eta$, loss can continue decreasing until oscillations along the unstable direction grow sufficiently large to cause loss increase. Thus, exceeding $\lambda_{\max}(\vH_t) > 2/\eta$ does not immediately trigger a spike.

To precisely predict loss spikes, we analyze the loss change between consecutive steps using a second-order Taylor expansion: $L(\boldsymbol{\theta}_{t+1}) \approx L(\boldsymbol{\theta}_t) + \nabla L(\boldsymbol{\theta}_t)^\top (\boldsymbol{\theta}_{t+1} - \boldsymbol{\theta}_t) + \frac{1}{2} (\boldsymbol{\theta}_{t+1} - \boldsymbol{\theta}_t)^\top \boldsymbol{H}_t (\boldsymbol{\theta}_{t+1} - \boldsymbol{\theta}_t).$ Substituting the GD update $\boldsymbol{\theta}_{t+1} - \boldsymbol{\theta}_t = -\eta \nabla L(\boldsymbol{\theta}_t)$: $L(\boldsymbol{\theta}_{t+1}) - L(\boldsymbol{\theta}_t) \approx -\eta \|\nabla L(\boldsymbol{\theta}_t)\|^2 + \frac{1}{2} \eta^2 \nabla L(\boldsymbol{\theta}_t)^\top \boldsymbol{H}_t \nabla L(\boldsymbol{\theta}_t).$ A loss increase (necessary for a loss spike) occurs when this expression is positive, yielding the condition (see Theorem~\ref{oldthm:spike_iff} for a general result and rigorous proof): 
\begin{equation} 
\lambda_{\text{grad}}(\boldsymbol{H}_t) := \frac{\nabla L(\boldsymbol{\theta}_t)^\top \boldsymbol{H}_t \nabla L(\boldsymbol{\theta}_t)}{\|\nabla L(\boldsymbol{\theta}_t)\|^2} > \frac{2}{\eta}. 
\label{eq:GD-lambda-grad} 
\end{equation}
Here, $\lambda_{\text{grad}}(\vH_t)\leq \lambda_{\text{max}}(\vH_t)$ represents the curvature along the gradient. For Adam, we define the analogous predictor as $\lambda_{\text{grad}}(\hat{\boldsymbol{H}_t}) := \frac{\nabla L(\boldsymbol{\theta}_t)^\top \hat{\boldsymbol{H}_t} \nabla L(\boldsymbol{\theta}_t)}{\|\nabla L(\boldsymbol{\theta}_t)\|^2}$, where $\hat{\boldsymbol{H}}_t$ is the preconditioned Hessian from Eq.~\eqref{eq:precondition_hessian}.

\textbf{Experimental Verification.} We validate our predictor using a two-layer network trained to fit $f(x) = \sin(x) + \sin(4x)$, tracking both $\lambda_{\max}(\boldsymbol{H}_t)$ and $\lambda_{\text{grad}}(\boldsymbol{H}_t)$ during training. For GD (Fig.~\ref{fig:GD-1d-spike}(a–b)), two loss spikes occur. At epoch $416$, although $\lambda_{\max}(\boldsymbol{H}_t)$ exceeds $2/\eta$, loss continues decreasing. The spike occurs only when $\lambda_{\text{grad}}(\boldsymbol{H}_t)$ also exceeds $2/\eta$. For Adam (Fig.~\ref{fig:GD-1d-spike}(c–d)), $7$ distinct spikes occur, while $\lambda_{\max}(\hat{\boldsymbol{H}}_t)$ exceeds $2/\eta$ at $10$ time steps. Crucially, spikes occur only when $\lambda_{\text{grad}}(\hat{\boldsymbol{H}}_t) > 2/\eta$, confirming that $\lambda_{\max}(\hat{\boldsymbol{H}}_t)$ alone is insufficient for spike prediction. See verifications of Transformers in Figs.~\ref{app:fig:3x2x} and \ref{fig:vary_beta2}.


\section{Five-Stage Characterization for Loss Spike Mechanics in Adam}
\label{sec:spike_mechanism}

Building on our theoretical and empirical findings, we summarize a five-stage progression that characterizes how loss spikes form and resolve during Adam optimization (Fig.~\ref{fig:five_stage}).

\begin{figure*}[!t]
    \centering
    \includegraphics[width=1.0\linewidth]{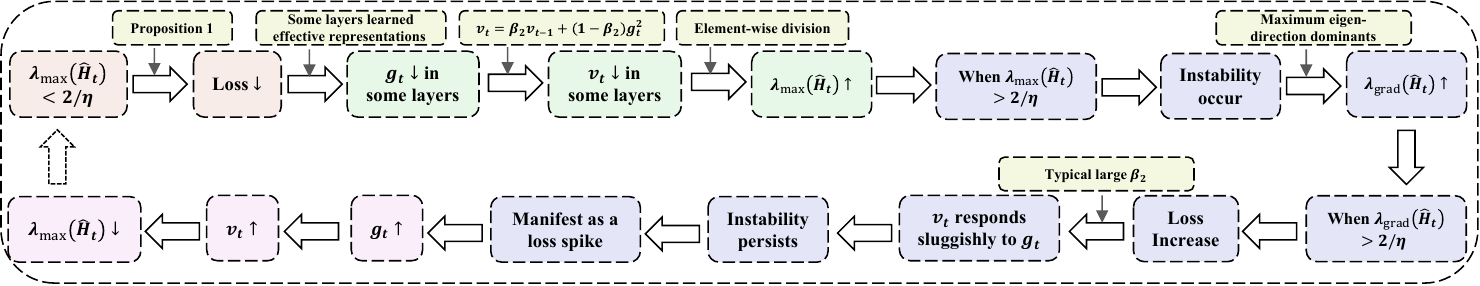}
    \caption{Five-stage progression for loss spike mechanics in Adam.}
    \label{fig:five_stage}
\end{figure*}

\textbf{Stage 1: Stable Loss Decrease.} Training loss decreases steadily with no abnormalities observed.

\textbf{Stage 2: Preconditioner Decay.} As training progresses, gradients in some layers diminish as effective representations are learned. The corresponding second moment estimates $v_t$ also decrease. Due to the element-wise division in Eq.~\eqref{eq:precondition_hessian}, this causes $\lambda_{\max}(\hat{\boldsymbol{H}}_t)$ to gradually increase.

\textbf{Stage 3: Spike Onset.} Instability begins when $\lambda_{\max}(\hat{\boldsymbol{H}}_t)$ exceeds the stability threshold $2/\eta$. Initially localized, the instability intensifies as the gradient aligns with max eigen-direction. A loss increase occurs only when the gradient curvature $\lambda_{\text{grad}}(\hat{\boldsymbol{H}}_t)$ also exceeds $2/\eta$. With typical large values $\beta_2 \in [0.95, 0.9999]$, $\boldsymbol{v}_t$ responds sluggishly to gradient information, causing $\lambda_{\text{grad}}(\hat{\boldsymbol{H}}_t)$ to persistently exceed $2/\eta$ and thus manifesting as a
dramatic loss spike.

\textbf{Stage 4: Preconditioner Growth.} As the spike intensifies, gradients grow larger. When gradients become sufficiently large to influence $\boldsymbol{v}_t$, the decay of $\boldsymbol{v}_t$ halts and reverses. This growth in $\boldsymbol{v}_t$ reduces $\lambda_{\text{max}}(\hat{\boldsymbol{H}_t})$, helping restore stability.

\textbf{Stage 5: Loss Decrease.} When $\lambda_{\text{max}}(\hat{\boldsymbol{H}_t})$ falls below $2/\eta$, the optimizer regains stability. Loss resumes decreasing, completing the spike cycle and returning to Stage 1.

We also provide a rigorous mathematical five-stage characterization for quadratic optimization:
\begin{theorem}[\textbf{Five Stages of Adam for Quadratic Optimization} (see App.~\ref{app:A} Thm.~\ref{oldthm:adam-spike} and Fig.~\ref{fig:five_t} for details and proof)]
\label{thm:adam-spike}
Consider the 1D loss $L(\theta) = \frac{1}{2}\theta^2$, optimized using Adam with $\beta_1 = 0$, $\beta_2 \in (0,1)$, and $\eta > 0$. The update rules are: $\theta_{t+1} = \left(1 - \frac{\eta}{\sqrt{v_t}} \right) \theta_t, \quad v_{t+1} = \beta_2 v_t + (1 - \beta_2) \theta_t^2.$ Assume $v_0 = \theta_0^2$ and $|\theta_0| > \frac{\eta}{2}$. Then there exist integers $t_0 < t_1 < t_2 < t_3 < t_4 < t_5<\infty$ such that the iterates ${(\theta_t,v_t)}$ exhibit the five stages described above in intervals $[t_i, t_{i+1})$, respectively. 
\end{theorem}
Furthermore, we show that common learning rate decay strategies are insufficient to avoid this unstable behavior for sufficiently large $\beta_2$, suggesting its inevitability:
\begin{theorem}[\textbf{Decaying Learning Rate Scheduler } (see App.~\ref{app:A} Thm.~\ref{oldthm:lr_decay} for proof)]
Consider the same setup as Thm.~\ref{thm:adam-spike} with decaying learning rate $\eta_t = \eta_0(t+1)^{-\alpha}$ where $\alpha \in (0,1)$. Assume $v_0=\theta_0^2$ and $|\theta_0|>2\eta_0>0$. Assume $\beta_2$ is sufficiently close to $1$. 
Then the stability condition $|1-\frac{\eta_t}{\sqrt{v_t}}|<1$ cannot hold for all $t\in \sN^+$.
\end{theorem}

\section{Empirical Validation of Loss Spike Mechanics in Adam}
\label{sec:loss_spike_practical}
Although we derived the mechanism and indicators from simple model analysis, we find that the proposed loss spike mechanics generalize well to realistic, high-dimensional settings. To track theoretical indicators in practical networks, we compute $\lambda_{\max}$ and $\lambda_{\mathrm{grad}}$ via efficient Hessian-vector products, bypassing full Hessian computation. Experiment details are provided in App.~\ref{app:experimental_setup}, with additional validation experiments (including CNN models) in App.~\ref{app:sec:supp_exp}.


\subsection{FNN for Function Approximation}


\begin{figure*}[htb]
    \centering
    \subfloat[Loss and gradient]{\includegraphics[height=0.18\linewidth]{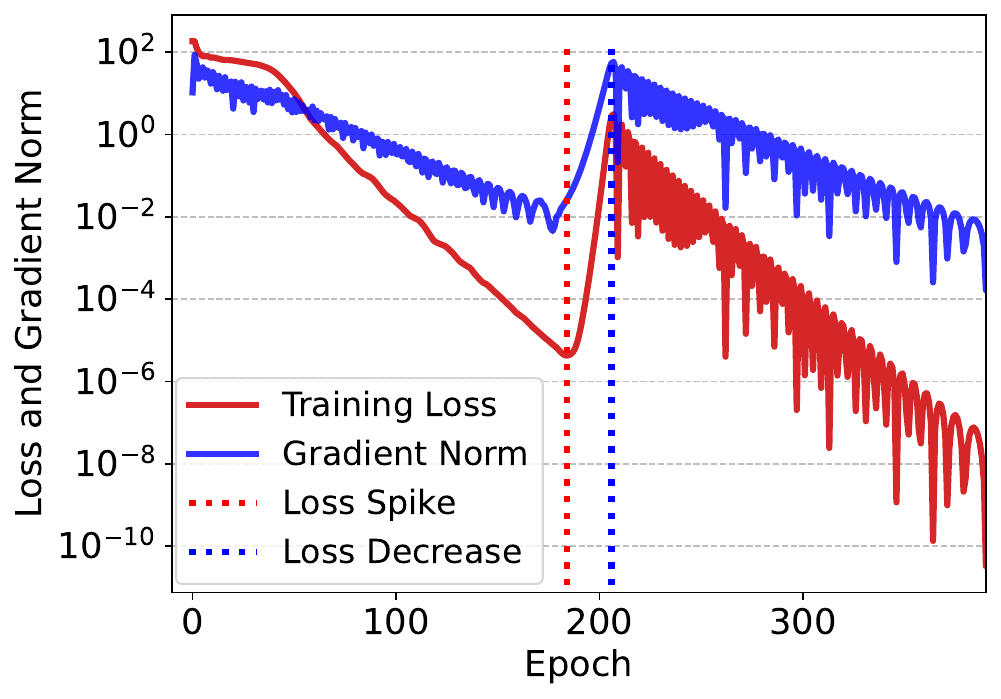}}\hfill
    \subfloat[Eigenvalues]{\includegraphics[height=0.18\linewidth]{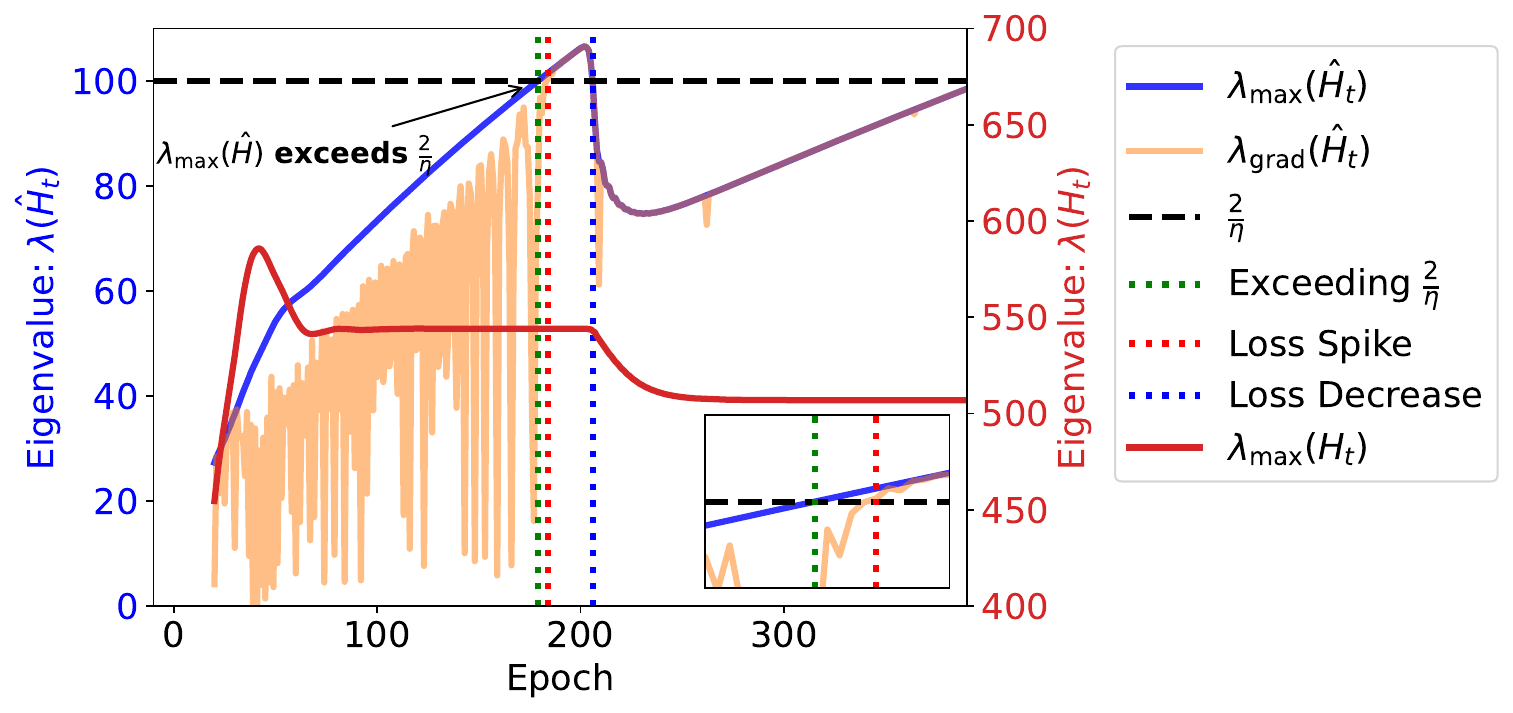}}\hfill
    \subfloat[Second moment evolution]{\includegraphics[height=0.18\linewidth]{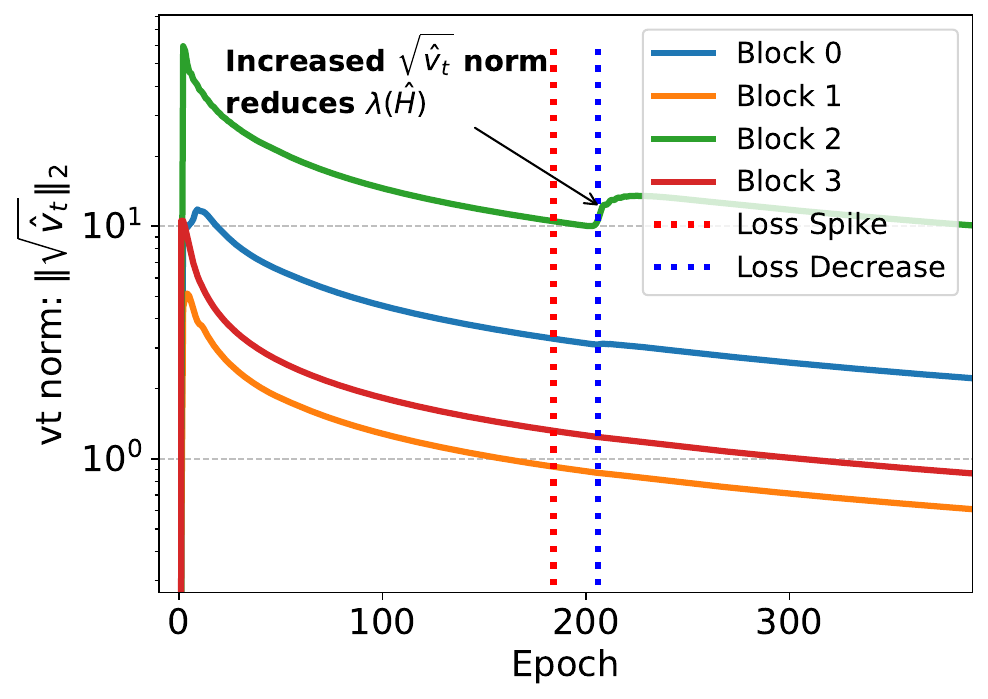}}\\
    \subfloat[Cosine similarity]{\includegraphics[height=0.18\linewidth]{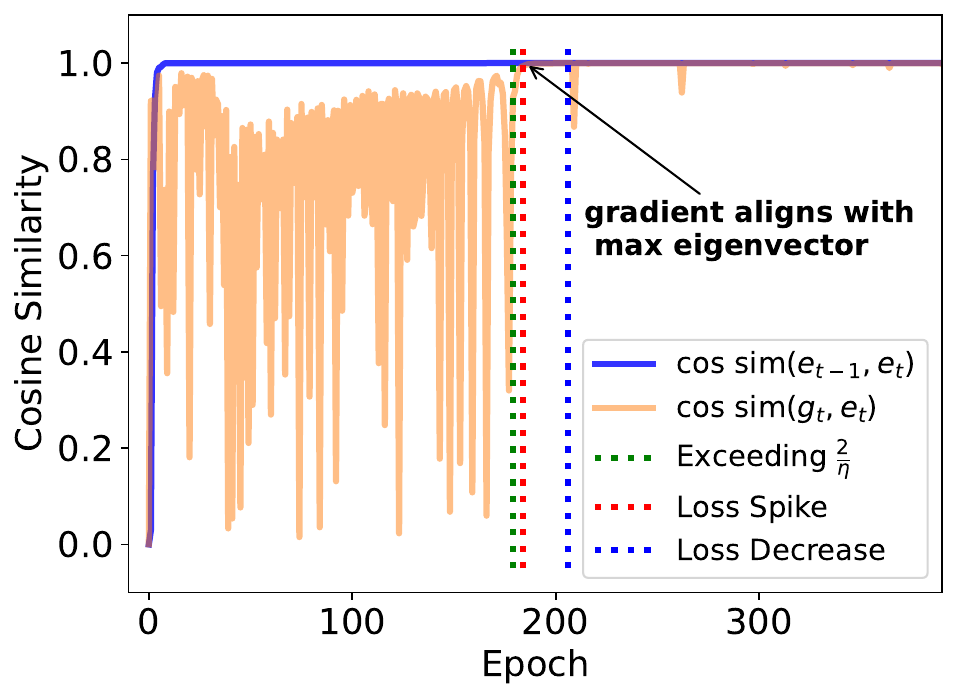}}\hfill
    \subfloat[Trajectory projection]{\includegraphics[height=0.18\linewidth]{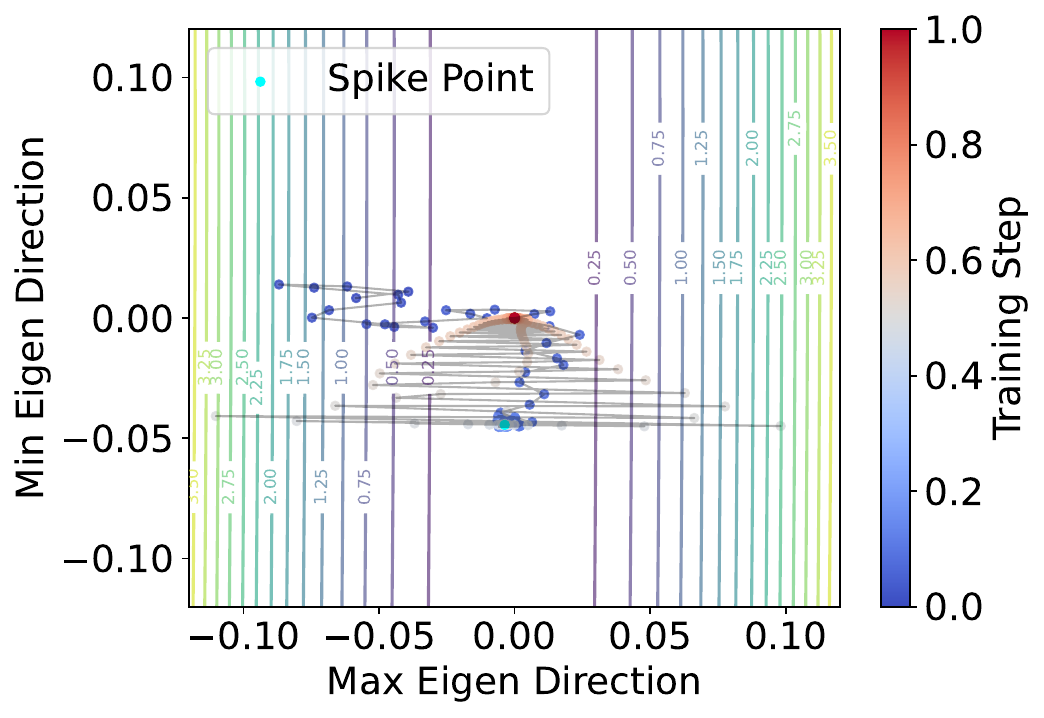}} \hfill
    \subfloat[Increase $\varepsilon$]{\includegraphics[height=0.18\textwidth]{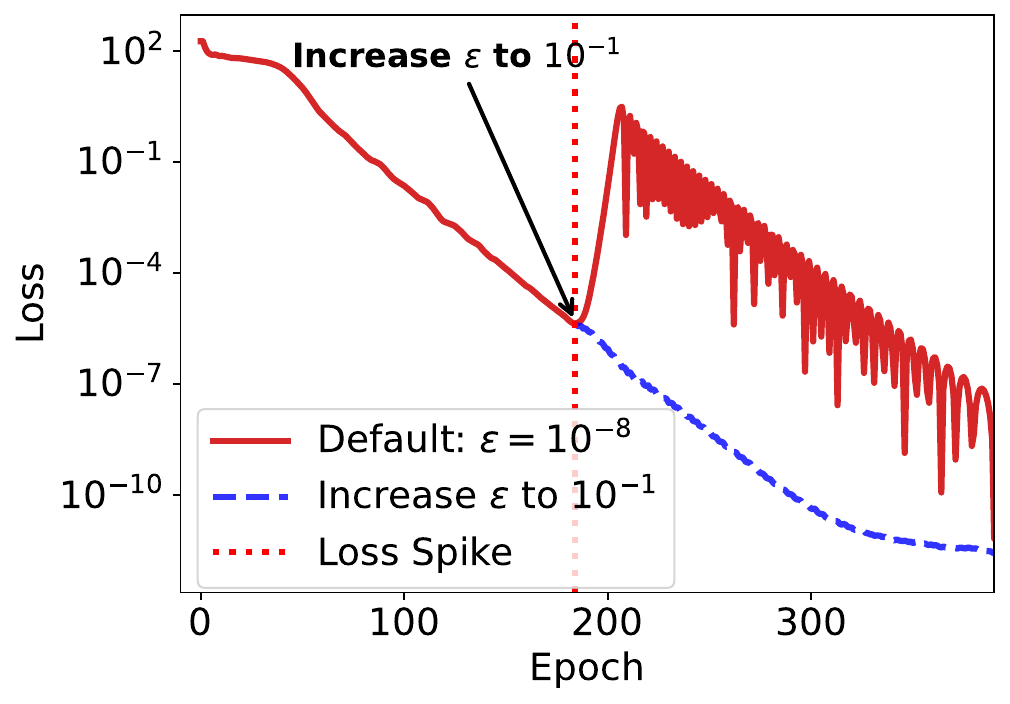}}
    \caption{(a) Training loss and gradient norm over time. (b) Evolution of critical eigenvalues: original Hessian maximum eigenvalue $\lambda_{\max}(\boldsymbol{H}_t)$, preconditioned Hessian maximum eigenvalue $\lambda_{\max}(\hat{\boldsymbol{H}_t})$ and gradient-directional eigenvalue $\lambda_{\mathrm{grad}}(\hat{\boldsymbol{H}_t})$ relative to $2/\eta$. (c) $L_2$-norm of second moment $||\sqrt{\hat{\vv}_t}||_2$ of different parameter blocks during training.
    (d) Cosine similarity between maximum eigenvectors in two consecutive epochs (blue) and between gradient and current maximum eigenvector (orange).  (e) Training trajectory projected onto maximum and minimum Hessian eigenvectors at epoch $390$. The colorbar for training steps is normalized to the range $[0, 1]$, where $0$ corresponds to epoch $28$ and $1$ corresponds to epoch $390$. (f) Increase the default $\varepsilon$ in Eq.~\eqref{eq:Adam-rule} to $0.1$ at epoch $184$.}
    \label{fig:two-layer-NN}
\end{figure*}
We trained a two-layer fully connected network on a 50-dimensional function approximation task using Adam with $\beta_1=0.9, \beta_2=0.999$. The optimization dynamics mirror our quadratic analysis: both loss and gradient norm decrease rapidly before experiencing a sharp spike (Fig.~\ref{fig:two-layer-NN}(a)).

\textbf{Eigenvalue Evolution and Spike Timing:} Fig.~\ref{fig:two-layer-NN}(b) shows that $\lambda_{\max}(\boldsymbol{H}_t)$ stabilizes quickly while $\lambda_{\max}(\hat{\boldsymbol{H}}_t)$ continues increasing due to decreasing $\vv_t$ (Fig.~\ref{fig:two-layer-NN}(c)). Crucially, although $\lambda_{\max}(\hat{\boldsymbol{H}}_t)$ surpasses the stability threshold $2/\eta$ at epoch 179, the spike occurs precisely at epoch 184 when $\lambda_{\mathrm{grad}}(\hat{\boldsymbol{H}}_t)$ exceeds $2/\eta$, confirming the effectiveness.

\textbf{Second Moment $\vv_t$ Dynamics:} Fig.~\ref{fig:two-layer-NN}(c) shows the evolution of second-moment norms $\sqrt{\hat{\boldsymbol{v}}_t}$ for each parameter block. Before the spike, the gradient norm $\|\boldsymbol{g}_t\| \approx 10^{-2}$ becomes much smaller than $\|\sqrt{\hat{\boldsymbol{v}}_t}\|$, causing $\boldsymbol{v}_t$ to decay automatically at rate $\beta_2$. During the spike, gradient norms increase while $\hat{\boldsymbol{v}}_t$ continues decreasing due to its sluggish response. Once gradients become sufficiently large, $\boldsymbol{v}_t$ rises rapidly, driving $\lambda_{\max}(\hat{\boldsymbol{H}}_t)$ below $2/\eta$ and allowing loss descent to resume at epoch $206$. 

\textbf{Validation of Quadratic Analysis.} The cosine similarity between maximum eigenvectors of $\boldsymbol{H}_t$ across consecutive steps approaches 1 early in training (Fig.~\ref{fig:two-layer-NN}(d)), validating our quadratic analysis. Fig.~\ref{fig:two-layer-NN}(e) confirms that spikes occur when gradients align with the maximum curvature direction by projecting the trajectory onto maximum and minimum eigenvectors. 
To suppress the spike, a straightforward method involves increasing $\varepsilon$ in Eq.~\eqref{eq:Adam-rule}. As demonstrated in Fig.~\ref{fig:two-layer-NN}(f), increasing $\varepsilon$ to $0.1$ at spike onset effectively eliminates the instability.

\subsection{Transformer Models for Language Tasks}
\label{sec:transformer_mini_batch}
\begin{figure*}[!htb]
    \centering
    \subfloat[]{\includegraphics[height=0.18\linewidth]{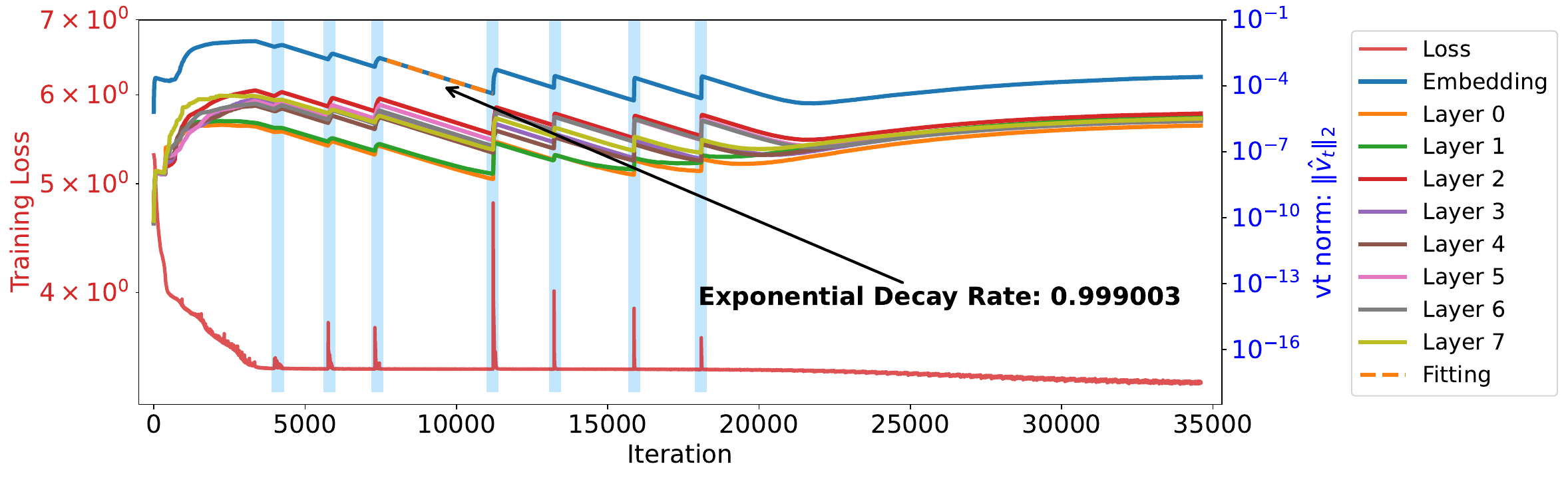}}$\hspace{0.05cm}$
    \subfloat[]{\includegraphics[height=0.18\linewidth]{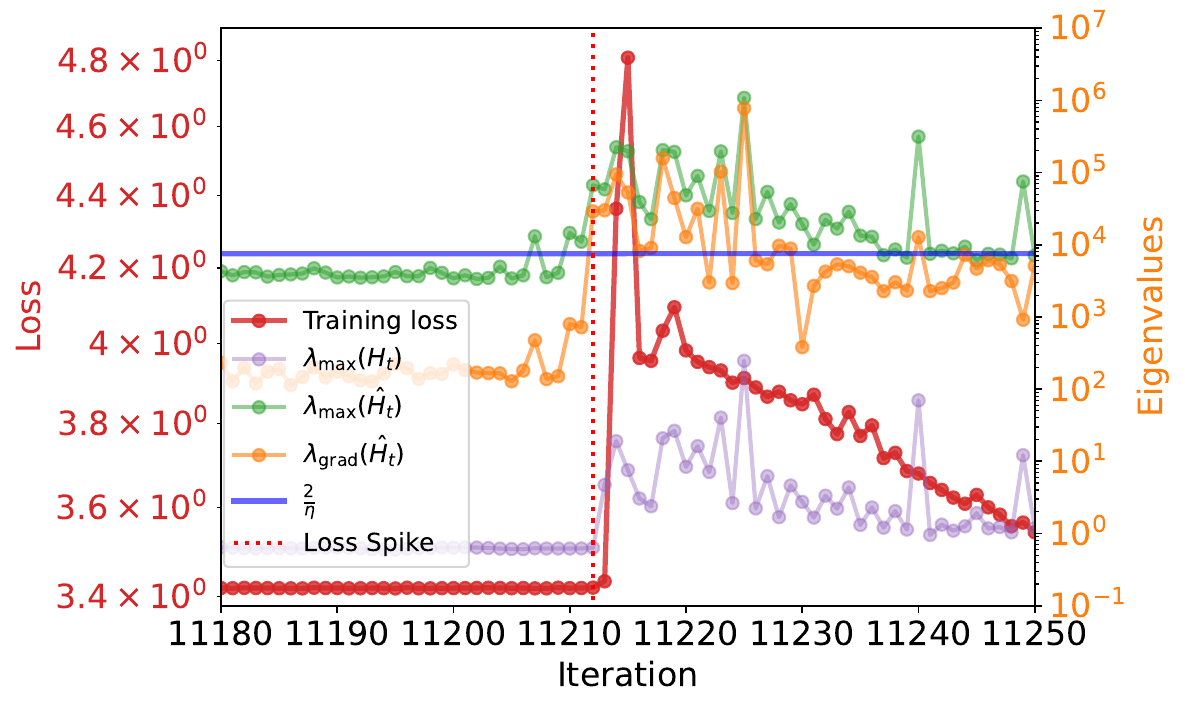}}\\
    \subfloat[]{\includegraphics[height=0.18\linewidth]{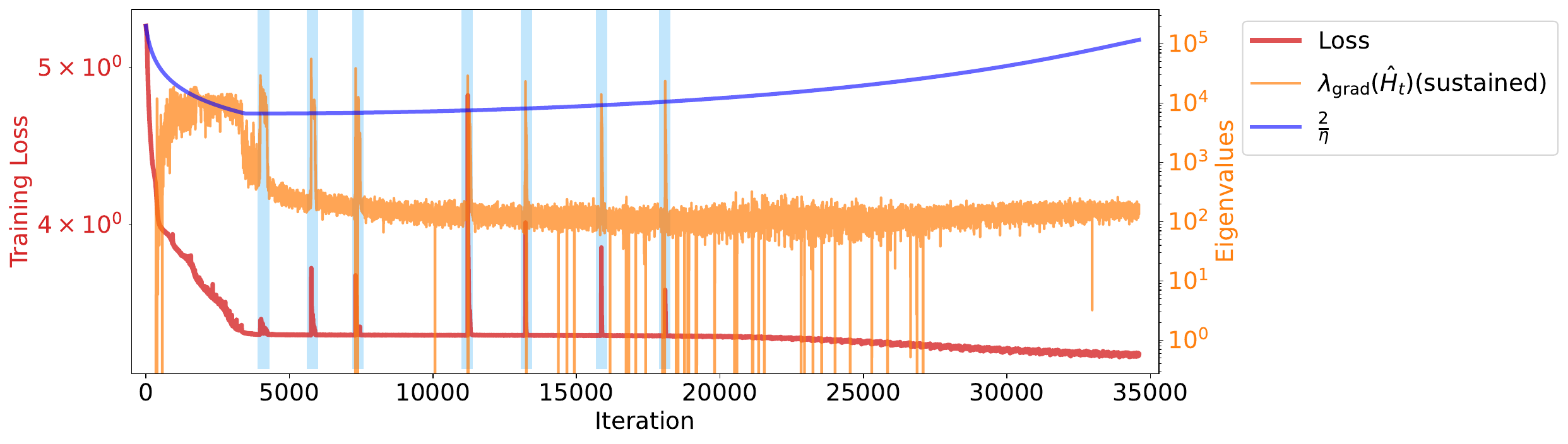}}$\hspace{0.00cm}$
    \subfloat[]{\includegraphics[height=0.18\linewidth]{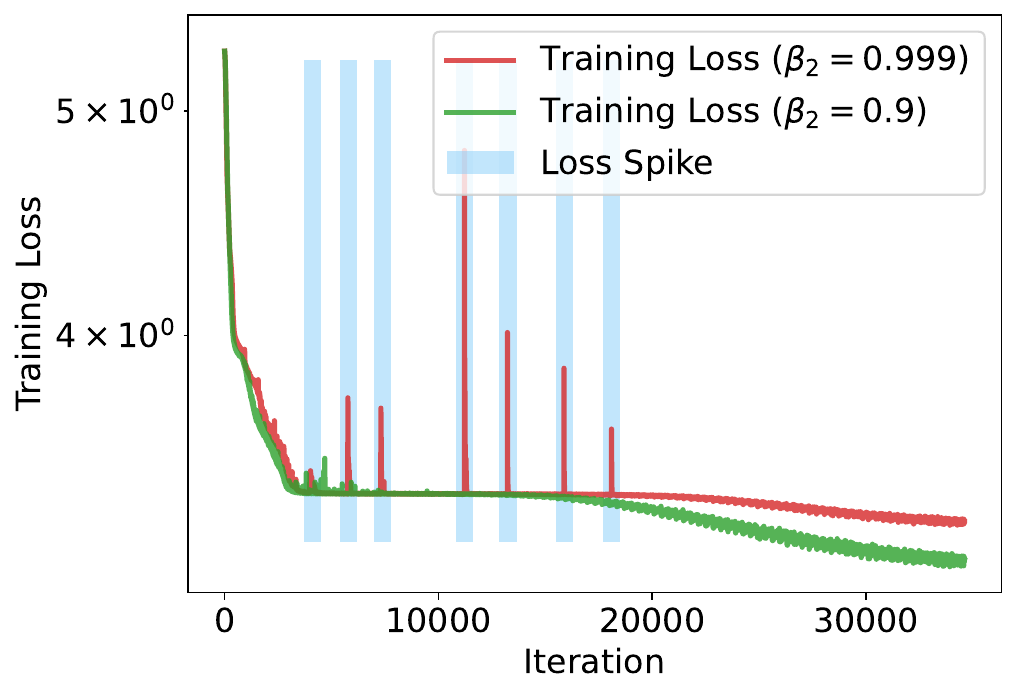}}
    \caption{(a) Evolution of training loss and second moment $\|\hat{\boldsymbol{v}}_t\|$, with seven spikes highlighted. (b) Eigenvalue analysis near a typical spike. (c) Sustained gradient-directional eigenvalue $\lambda_{\mathrm{grad}}(\hat{\boldsymbol{H}}_t)(\text{sustained})$ (orange) versus stability threshold $2/\eta$. The raw $\lambda_{\mathrm{grad}}(\hat{\boldsymbol{H}}_t)$ is shown in Fig.~\ref{app:fig:Transformer_spike}. The $2 / \eta$ line is plotted against the secondary y-axis on the right for for comparison with the eigenvalues.(d) Reduce the hyperparameter $\beta_2$ in Adam to $0.9$ and retrain.} 
    \label{fig:Transformer-spike}
\end{figure*}

\begin{figure*}[!tb]
    \centering
    \subfloat[Loss among different $\beta_2$]{\includegraphics[height=0.205\linewidth]{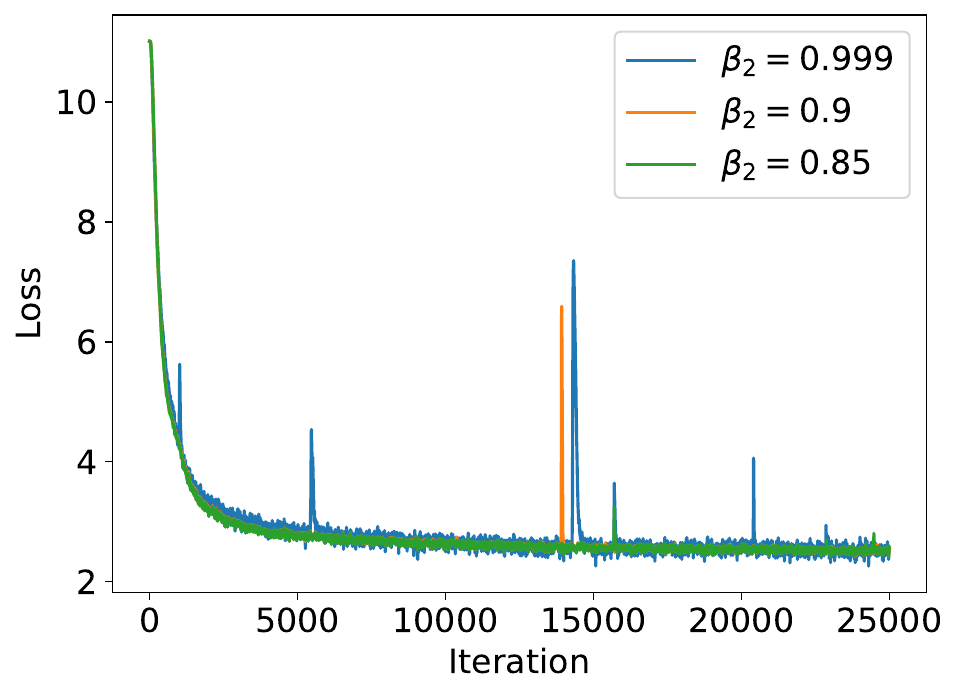}}
    \hspace{0.75cm}
    \subfloat[Eigenvalues]{\includegraphics[height=0.205\linewidth]{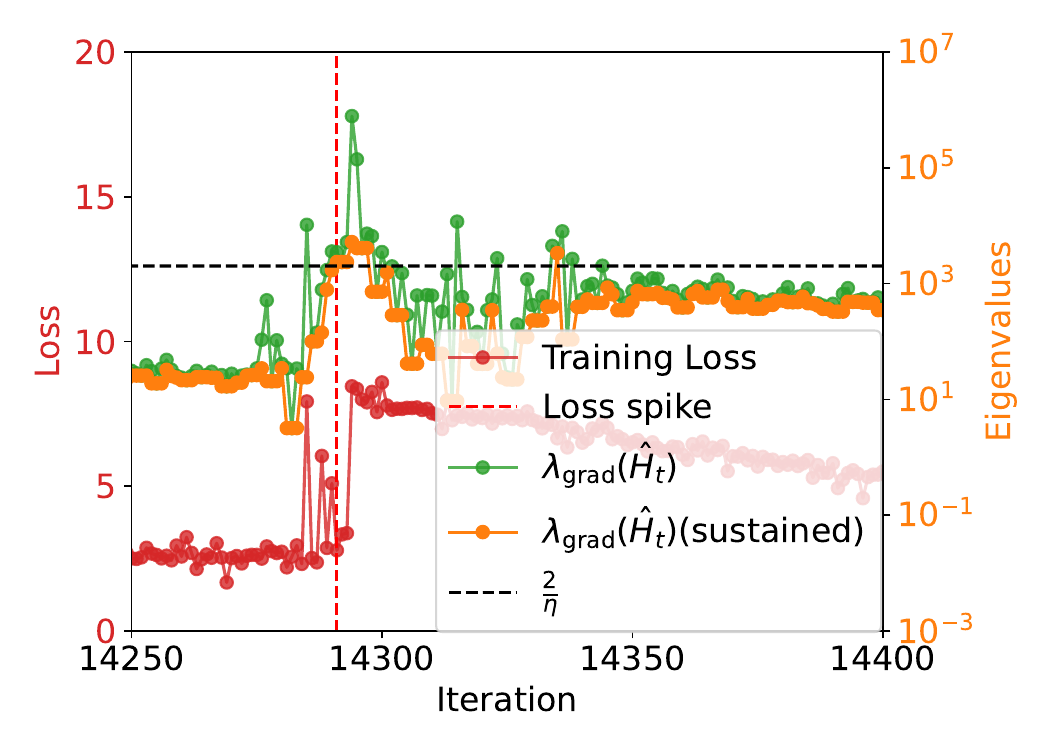}}
    \hspace{0.75cm}
    \subfloat[Gradient norm]{\includegraphics[height=0.205\linewidth]{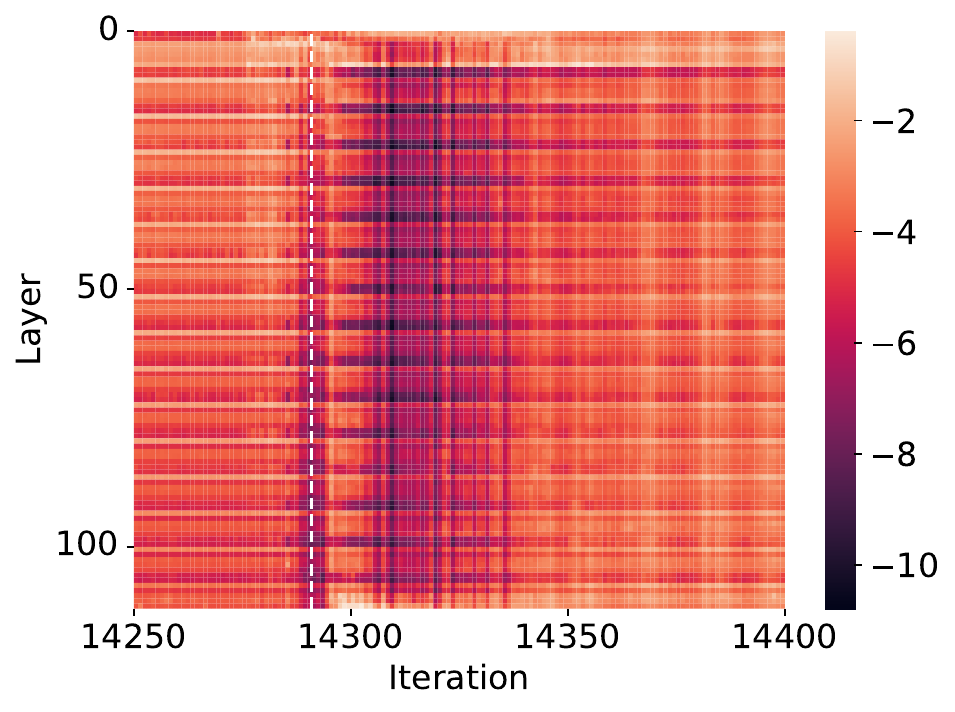}}
    \caption{(a) Training loss evolution for a 187M parameter LLaMA transformer with different $\beta_2$ values. Loss curves show time-weighted EMA smoothing; raw loss appears in Fig.~\ref{app:fig:LLM_spike}. (b) Gradient-directional eigenvalues $\lambda_{\mathrm{grad}}(\hat{\boldsymbol{H}}_t)$ and sustained version $\lambda_{\mathrm{grad}}(\hat{\boldsymbol{H}}_t)(\text{sustained})$ during a representative spike (iterations 14,250-14,400) with $\beta_2=0.999$. (c) Layer-wise gradient norms (log-scale colorbar) during the spike period. Sudden small gradients are often observed before spikes. See similar findings in~\citet{molybog2023theory}. Sudden diminishing gradients often imply decoupling, where $v_t$ cannot track the gradient quickly via $\vv_t = \beta_2 \vv_{t-1} + (1-\beta_2)\vg_t^2$.}
    \label{fig:LLM-spike}
\end{figure*}
We trained an $8$-layer Transformer (approximately $10$M parameters) on a synthetic reasoning dataset of $900$k sequences (batch size $2048$) under the next-token prediction. This dataset is used to investigate the reasoning and memory preferences of Transformer models~\cite{zhang2024anchorfunctiontypebenchmark,zhang2024initialization,zhang2025complexity}. Fig.~\ref{fig:Transformer-spike}(a) shows seven distinct loss spikes. Prior to each spike, the norm of the second-moment estimate \(\hat{\boldsymbol{v}}_t\) for the embedding and \(\vW_V\) parameters across attention layers decays automatically at rate $0.999003$ (\(\approx \beta_2=0.999\)), followed by a sudden increase in \(\|\hat{\boldsymbol{v}}_t\|\) and a sharp drop in loss. Fig.~\ref{fig:Transformer-spike}(b) describes a typical case where $\lambda_{\text{grad}}(\hat{\boldsymbol{H}}_t)$ exceeds $2/\eta$ causing a spike. However, stochastic batching introduces significant noise, making precise spike prediction challenging. To address this, we define a ``sustained spike predictor'' as: $\lambda_{\mathrm{grad}}(\hat{\boldsymbol{H}}_t)({\text{sustained}}) = \min(\lambda_{\mathrm{grad}}(\hat{\boldsymbol{H}}_{t-1}), \lambda_{\mathrm{grad}}(\hat{\boldsymbol{H}}_t), \lambda_{\mathrm{grad}}(\hat{\boldsymbol{H}}_{t+1}))$. This refined predictor (Fig.~\ref{fig:Transformer-spike}(b)
) demonstrates perfect correspondence with all seven loss spike occurrences. Sustained periods above threshold trigger loss spikes, which is consistent with the findings in Fig.~\ref{fig:1d-quadratic-diff-beta2}. In addition, we find that directly reducing $\beta_2$ is effective to mitigate loss spikes (Fig.~\ref{fig:Transformer-spike}(d)).

\textbf{Large-Scale Language Model Validation:} We trained a 187M parameter LLaMA-structured transformer on 100B tokens from SlimPajama to validate our mechanics in realistic large-scale settings. With the default $\beta_2=0.999$, training exhibits multiple loss spikes (Fig.~\ref{fig:LLM-spike}(a)). Fig.~\ref{fig:LLM-spike}(b) examines a representative spike occurring between iterations 14,250-14,400. We observe that the gradient-directional eigenvalue $\lambda_{\mathrm{grad}}(\hat{\boldsymbol{H}}_t)$ exceeds the stability threshold $2/\eta$, signaling the spike onset. Consistent with our proposed mechanism (Sec.~\ref{sec:spike_mechanism}), gradient norms in certain layers diminish before this spike (Fig.~\ref{fig:LLM-spike}(c)). As expected, reducing $\beta_2$ consistently decreases spike frequency during training (Fig.~\ref{fig:LLM-spike}(a)), confirming the key role of second-moment in spike formation.

\section{Conclusion and Discussion} \label{sec:conclusion} In this work, we provide a mechanistic analysis of loss spikes in Adam. By identifying a critical decoupling between the second moment and current gradients, we reveal the mechanism underlying the persistence of these instabilities. Our theory suggests a simple remedy—reducing $\beta_2$—which we experimentally validate. Encouragingly, many recent large-scale language model studies~\citep{touvron2023llama,dubey2024llama,orvieto2025search} have already adopted lower $\beta_2$ values (e.g., $0.95$ or lower), further underscoring the practical relevance of our analysis.

\textbf{Limitations.} (1) Our rigorous theoretical analysis is established for the 1D quadratic case, and the transfer to high-dimensional non-convex settings rests on empirical validation rather than formal proof. In practical large-scale scenarios, loss landscape geometry and adaptive preconditioner dynamics likely interact jointly to produce spikes, and cleanly disentangling these contributions remains an open challenge. (2) Additionally, most of our eigenvalue-tracking experiments use full-batch gradients for tractability; the stochastic mini-batch setting introduces noise that complicates the precise timing of spike prediction, as we partially address with the sustained predictor in Sec.\ref{sec:transformer_mini_batch}. (3) Finally, scaling the Hessian-vector-product-based eigenvalue analysis to models significantly beyond 200M parameters remains computationally demanding, and developing more efficient approximation methods is an important direction for future work.

In addition, loss spikes represent more than optimization phenomena; they may signify transitions between distinct attractor basins in the landscape. Our supplementary experiments in App.~\ref{app:sec:discussion:pros_and_cons} identify four spike types (\textbf{neutral}, \textbf{benign}, \textbf{malignant}, and \textbf{catastrophic}) in Transformer training, highlighting the importance of context-specific decisions on whether to suppress or preserve them. Precisely distinguishing between these spike types remains an open challenge.

Beyond hyperparameter adjustments to Adam, alternative spike mitigation techniques include sandwich normalization~\citep{ding2021cogview, yin2025panguultrapushinglimits}, $\sigma$-Reparam~\citep{zhai2023stabilizing}, and scaled-decouple distribution~\citep{wang2025scale}. While some studies~\citep{lyu2022understanding, mueller2023normalization} attribute normalization's effectiveness to sharpness reduction, a deeper understanding of how to leverage or control spikes remains a promising direction for future research.

\section*{Acknowledgements}

This work was supported by the National Key R\&D Program of China (Grant No.~2022YFA1008200), the National Natural Science Foundation of China (Grant Nos.~92270001, 12371511, 12422119, and 12571567), the Natural Science Foundation of Shanghai (Grant No.~25ZR1402280), and the 2025 Key Technology R\&D Program ``New Generation Information Technology'' Project of the Shanghai Municipal Science and Technology Commission (Grant No.~25511103100).

\section*{Impact Statement}

This research provides a comprehensive mechanistic understanding of loss spikes in Adam optimization, a phenomenon that has long challenged the stability of neural network training. By identifying adaptive preconditioners as a primary trigger for these instabilities, our work offers contributions to the training efficiency and reliability of machine learning systems.


\bibliographystyle{icml2026}

\newpage
\appendix
\onecolumn
\section{The Use of Large Language Models(LLMs)}

We acknowledge the use of large language models in the preparation of this manuscript. Specifically, we employed LLMs (including but not limited to GPT-4, Claude, and similar models) solely for language polishing and writing enhancement purposes.
The LLMs were used to: (i) Improve sentence structure and clarity; (ii) Enhance grammatical accuracy and flow; (iii) Refine technical writing style and consistency; and (iv) Polish language expression while preserving original meaning.

\section{Ethics and Reproducibility Statement}
\textbf{Ethics Statement.} This work involves theoretical analysis and empirical studies of Adam optimization algorithm using standard neural network architectures and publicly available datasets. All experiments were conducted following established ethical guidelines for machine learning research. No human subjects, sensitive data, or potentially harmful applications were involved in this study.

\textbf{Reproducibility Statement.} To ensure reproducibility, we provide detailed experimental configurations in App.~\ref{app:experimental_setup} and supplementary experiments in App.~\ref{app:sec:supp_exp}. Our theoretical analysis includes complete mathematical derivations and proofs in App.~\ref{app:A}. All hyperparameters, network architectures, and training procedures are fully specified. The synthetic datasets and training procedures can be reproduced following the provided specifications. Code and additional implementation details are made available in the supplementary materials.
\section{Limitation and Future Work}
\label{app:limitation}
Our detailed analysis of loss spikes in Adam optimization reveals that adaptive preconditioners can themselves trigger these phenomena and we verify this mechanism in certain neural network architectures. However, we acknowledge that in more complex scenarios, both the intrinsic geometry of the loss landscape and the applied preconditioners likely interact to jointly produce loss spikes. Disentangling these individual contributions and accurately attributing different spike mechanisms in large-scale models remains a significant challenge for future research.

While we have developed efficient Hessian-vector products to compute gradient-directional eigenvalues without full Hessian computation, computational cost remains a key constraint for scaling this analysis to larger models. Developing efficient algorithms to approximate maximum Hessian eigenvalues and gradient-directional eigenvalues represents a critical direction for future work.

Furthermore, as discussed in App.~\ref{app:sec:discussion:pros_and_cons}, the precise categorization of loss spikes into our proposed taxonomy (\textbf{neutral}, \textbf{benign}, \textbf{malignant}, and \textbf{catastrophic} types) presents ongoing challenges. Developing robust, computationally efficient criteria to distinguish between these categories would significantly enhance our ability to detect and appropriately respond to different spike types during training.

\section{Proofs of Theoretical Results}
\label{app:A}
\renewcommand\thefigure{D\arabic{figure}}    
\setcounter{figure}{0}    
\newtheorem{oldtheorem}{Theorem}
\renewcommand{\theoldtheorem}{D.\arabic{oldtheorem}}
\setcounter{oldtheorem}{0}

\newtheorem{oldlemma}{Lemma}
\renewcommand{\theoldlemma}{D.\arabic{oldlemma}}
\setcounter{oldlemma}{0}

\newtheorem{olddefinition}{Definition}
\renewcommand{\theolddefinition}{D.\arabic{olddefinition}}
\setcounter{olddefinition}{0}

\newtheorem{oldassumption}{Assumption}
\renewcommand{\theoldassumption}{D.\arabic{oldassumption}}
\setcounter{oldassumption}{0}

\newtheorem{oldproposition}{Proposition}
\renewcommand{\theoldproposition}{D.\arabic{oldproposition}}
\setcounter{oldproposition}{0}

\newtheorem{oldcor}{Corollary}
\renewcommand{\theoldcor}{D.\arabic{oldcor}}
\setcounter{oldcor}{0}

\begin{oldproposition}
Let $L:\mathbb{R}^M\to\mathbb{R}$ be twice continuously differentiable. For any iterate $\vtheta_t$ define the gradient $\vg_t:=\nabla L(\vtheta_t)$ and, for a fixed learning rate $\eta>0$, define the local directional maximum Hessian
$\bar\lambda_t \;:=\; \mathrm{sup}_{s\in[0,1]}\; \lambda_{\max}\!\big(\nabla^2 L(\vtheta_t - s\eta \vg_t)\big),$
the maximum eigenvalue of the Hessian along the line segment from $\vtheta_t$ to $\vtheta_{t+1}=\vtheta_t-\eta \vg_t$.
If $\eta < \frac{2}{\bar\lambda_t},$ then we have the descent estimate:
$$
L(\vtheta_{t+1}) \le L(\vtheta_t) - \eta\Big(1-\frac{\eta\bar\lambda_t}{2}\Big)\|\vg_t\|^2.
$$
In particular, whenever $\eta\in\big(0,2/\bar\lambda_t\big)$ and $\vg_t\neq 0$ we have strict decrease $L(\vtheta_{t+1})<L(\vtheta_t)$.
\label{oldproposition:decent_lemma}
\end{oldproposition}

\begin{proof}
Apply the one-dimensional Taylor expansion of the scalar function $\phi(s):=L(\vtheta_t - s\eta \vg_t)$ around $s=0$ up to second order with the remainder written using the Hessian at some point along the segment. Equivalently, use the multivariate Taylor expansion along the direction $-\eta \vg_t$:
$$
L(\vtheta_t-\eta \vg_t) \;=\; L(\vtheta_t) - \eta \vg_t^\top \vg_t + \frac{\eta^2}{2}\, \vg_t^\top \Big(\nabla^2 L(\vtheta_t - s^\ast \eta \vg_t)\Big) \vg_t
$$
for some $s^\ast\in(0,1)$. Since the symmetric matrix $\nabla^2 L(\vtheta_t - s^\ast \eta \vg_t)$ has largest eigenvalue at most $\bar\lambda_t$, we get
$$
\vg_t^\top \Big(\nabla^2 L(\vtheta_t - s^\ast \eta \vg_t)\Big) \vg_t \le \bar\lambda_t \, \|\vg_t\|^2.
$$
Hence
$$
L(\vtheta_{t+1}) \le L(\vtheta_t) - \eta\|\vg_t\|^2 + \frac{\eta^2}{2}\bar\lambda_t\|\vg_t\|^2
= L(\vtheta_t) - \eta\Big(1-\frac{\eta\bar\lambda_t}{2}\Big)\|\vg_t\|^2.
$$
If $\eta<2/\bar\lambda_t$, then $1-\tfrac{\eta\bar\lambda_t}{2}>0$, so the right-hand side is strictly less than $L(\vtheta_t)$ whenever $\vg_t\neq 0$.
\end{proof}

\begin{oldlemma}
Let $\boldsymbol{H}$ be a real symmetric matrix and $\hat{\boldsymbol{H}} = \text{diag}\left(\frac{1}{\sqrt{\hat{\boldsymbol{v}}_t} + \varepsilon}\right)\boldsymbol{H}$. Then $\hat{\boldsymbol{H}}$ is diagonalizable in the field of real numbers.
\label{oldlemma:diagonal}
\end{oldlemma}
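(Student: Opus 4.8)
The plan is to exhibit an explicit similarity transformation that turns $\hat{\boldsymbol{H}}$ into a genuinely symmetric matrix, after which the spectral theorem does all the work. Write $\boldsymbol{D} := \text{diag}\!\left(\frac{1}{\sqrt{\hat{\boldsymbol{v}}_t}+\varepsilon}\right)$, so that $\hat{\boldsymbol{H}} = \boldsymbol{D}\boldsymbol{H}$. The first step is to observe that every diagonal entry of $\boldsymbol{D}$ is strictly positive: the components of $\hat{\boldsymbol{v}}_t$ are nonnegative (they are a bias-corrected exponential average of the squared gradients $\boldsymbol{g}_s^2$), and $\varepsilon > 0$, so each $\frac{1}{\sqrt{(\hat{\boldsymbol{v}}_t)_i}+\varepsilon}$ lies in $(0,\infty)$. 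Hence $\boldsymbol{D}$ is positive definite and admits a positive definite diagonal square root $\boldsymbol{D}^{1/2}$, with inverse $\boldsymbol{D}^{-1/2}$.

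The second step is the conjugation. Since $\boldsymbol{D}^{1/2}$ is invertible, $\hat{\boldsymbol{H}}$ is similar to
\begin{equation*}
\boldsymbol{D}^{-1/2}\,\hat{\boldsymbol{H}}\,\boldsymbol{D}^{1/2} = \boldsymbol{D}^{-1/2}\,\boldsymbol{D}\boldsymbol{H}\,\boldsymbol{D}^{1/2} = \boldsymbol{D}^{1/2}\boldsymbol{H}\boldsymbol{D}^{1/2} =: \boldsymbol{S}.
\end{equation*}
Because $\boldsymbol{H}^\top = \boldsymbol{H}$ and $(\boldsymbol{D}^{1/2})^\top = \boldsymbol{D}^{1/2}$, we have $\boldsymbol{S}^\top = \boldsymbol{D}^{1/2}\boldsymbol{H}^\top\boldsymbol{D}^{1/2} = \boldsymbol{S}$, so $\boldsymbol{S}$ is a real symmetric matrix.

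The third step invokes the real spectral theorem: $\boldsymbol{S}$ is orthogonally diagonalizable over $\mathbb{R}$, i.e.\ $\boldsymbol{S} = \boldsymbol{Q}\boldsymbol{\Lambda}\boldsymbol{Q}^\top$ with $\boldsymbol{Q}$ orthogonal and $\boldsymbol{\Lambda}$ real diagonal. Then $\hat{\boldsymbol{H}} = \boldsymbol{D}^{1/2}\boldsymbol{S}\boldsymbol{D}^{-1/2} = (\boldsymbol{D}^{1/2}\boldsymbol{Q})\,\boldsymbol{\Lambda}\,(\boldsymbol{D}^{1/2}\boldsymbol{Q})^{-1}$, which is a diagonalization of $\hat{\boldsymbol{H}}$ over the reals; in particular $\hat{\boldsymbol{H}}$ has the same real eigenvalues as $\boldsymbol{S}$, which also justifies speaking of $\lambda_{\max}(\hat{\boldsymbol{H}})$ as a real quantity in the stability analysis.

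Honestly I do not anticipate a real obstacle here; the only point that needs a moment's care is checking positivity of the diagonal entries so that $\boldsymbol{D}^{1/2}$ and $\boldsymbol{D}^{-1/2}$ exist as real matrices — if any entry of $\hat{\boldsymbol{v}}_t$ could be negative or if $\varepsilon$ were $0$ with a vanishing coordinate, the argument would break. One could alternatively phrase the same fact via the generalized eigenvalue problem $\boldsymbol{H}\boldsymbol{u} = \lambda(\boldsymbol{D}^{-1})\boldsymbol{u}$ with $\boldsymbol{D}^{-1}$ positive definite, but the explicit symmetrization above is the cleanest route and matches the statement's claim verbatim.
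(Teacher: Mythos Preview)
Your proposal is correct and follows essentially the same approach as the paper's proof: define the positive definite diagonal $\boldsymbol{D}$, then conjugate by $\boldsymbol{D}^{1/2}$ to show $\hat{\boldsymbol{H}}$ is similar to the symmetric matrix $\boldsymbol{D}^{1/2}\boldsymbol{H}\boldsymbol{D}^{1/2}$. Your version is slightly more detailed (you explicitly write out the resulting diagonalization via the spectral theorem and note why the entries of $\boldsymbol{D}$ are positive), but the argument is the same.
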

\begin{proof}
While $\text{diag}\left(\frac{1}{\sqrt{\hat{\boldsymbol{v}}_t}+\varepsilon}\right)\boldsymbol{H}$ is generally asymmetric, we can demonstrate that it is similar to a symmetric matrix and therefore has real eigenvalues. Let $\boldsymbol{D}_t = \text{diag}\left(\frac{1}{\sqrt{\hat{\boldsymbol{v}}_t}+\varepsilon}\right)$, which is positive definite. We can express:
\begin{equation*}
\boldsymbol{D}_t \boldsymbol{H} = \boldsymbol{D}_t^{1/2} \cdot (\boldsymbol{D}_t^{1/2} \boldsymbol{H} \boldsymbol{D}_t^{1/2}) \cdot \boldsymbol{D}_t^{-1/2}
\end{equation*}
Since $\boldsymbol{D}_t^{1/2} \boldsymbol{H} \boldsymbol{D}_t^{1/2}$ is symmetric, $\boldsymbol{D}_t \boldsymbol{H}$ is similar to a symmetric matrix. This confirms that $\boldsymbol{D}_t \boldsymbol{H}$ has real eigenvalues and is diagonalizable.
\end{proof}


\begin{oldproposition} 
Consider the three-term recursive iteration$$\delta \boldsymbol{\theta}_{t+1} 
= \big[(1+\beta_1)\boldsymbol{I} - \eta(1-\beta_1)\boldsymbol{H}(\boldsymbol{\theta}_0)\big]\,
    \delta \boldsymbol{\theta}_{t}
  - \beta_1 \delta\boldsymbol{\theta}_{t-1} 
  - \eta(1-\beta_1)\nabla L(\boldsymbol{\theta}_0),$$with learning rate $\eta>0$ and momentum parameter $\beta_1\in[0,1)$. The linearized system at $\boldsymbol{\theta}_0$ is asymptotically stable in a specific eigendirection if and only if the corresponding eigenvalue $\lambda_i$ satisfies:
  $$0 < \lambda_i < \frac{2}{\eta} \cdot \frac{1 + \beta_1}{1 - \beta_1}.$$ 
  Consequently, to prevent instabilities that manifest as loss spikes in positive-curvature directions, the maximum positive eigenvalue must satisfy $\lambda_{\max}^{}\!\left(\frac{1-\beta_1}{1+\beta_1}\,\boldsymbol{H}(\boldsymbol{\theta}_0)\right) < \frac{2}{\eta}$.
\label{oldproposition:momentum}
\end{oldproposition}

\begin{proof}
We analyze the stability of the vector recurrence by decomposing it along the eigenspace of the Hessian matrix. Since the Hessian $\boldsymbol{H}:=\boldsymbol{H}(\boldsymbol{\theta}_0)$ is  diagonalizable, it admits an eigen-decomposition $\boldsymbol{H} = \boldsymbol{P} \boldsymbol{\Lambda} \boldsymbol{P}^{-1}$, where $\boldsymbol{P}$ is an invertible matrix and $\boldsymbol{\Lambda} = \mathrm{diag}(\lambda_1, \dots, \lambda_d)$ contains the eigenvalues of $\boldsymbol{H}$.

Define the change of variables $\delta \boldsymbol{\theta}_t = \boldsymbol{P} \boldsymbol{z}_t$. Substituting into the recurrence yields
$$\boldsymbol{z}_{t+1} = \left[(1+\beta_1)\boldsymbol{I} - \eta(1-\beta_1)\boldsymbol{\Lambda}\right]\boldsymbol{z}_{t} - \beta_1 \boldsymbol{z}_{t-1} - \eta(1-\beta_1)\boldsymbol{P}^{-1} \nabla L(\boldsymbol{\theta}_0).
$$
Since this is a decoupled system in the eigenbasis, for any eigendirection with curvature $\lambda_i$, the $i$-th component $z_t^{(i)}$ satisfies a scalar second-order linear nonhomogeneous recurrence:
$$
z_{t+1}^{(i)} = \alpha_i z_t^{(i)} - \beta_1 z_{t-1}^{(i)} + c_i,
$$
where
$$\alpha_i := (1 + \beta_1) - \eta(1 - \beta_1)\lambda_i, \quad c_i := -\eta(1 - \beta_1) g^{(i)}, \quad g^{(i)} := \left[\boldsymbol{P}^{-1} \nabla L(\boldsymbol{\theta}_0)\right]_i.
$$
The general solution to this nonhomogeneous recurrence is the sum of the homogeneous solution and a particular solution. The homogeneous part is governed by the characteristic equation:
$$
r^2 - \alpha_i r + \beta_1 = 0.
$$
It is well known (e.g., see Elaydi, \textit{An Introduction to Difference Equations}~\citep{elaydi2005difference}) that the homogeneous solution is asymptotically stable (both characteristic roots lie strictly inside the unit circle in the complex plane) if and only if the following three conditions are met:
$$
\left\{\begin{aligned}
1 - \alpha_i + \beta_1 &> 0, \\
1 + \alpha_i + \beta_1 &> 0, \\
|\beta_1| &< 1.
\end{aligned}\right.
$$
Since $\beta_1 \in [0, 1)$ by assumption, the third condition always holds. Substituting $\alpha_i$ into the first condition yields:
$$
1 - (1 + \beta_1) + \eta(1 - \beta_1)\lambda_i + \beta_1 > 0 \quad \Longleftrightarrow \quad \eta(1 - \beta_1)\lambda_i > 0 \quad \Longleftrightarrow \quad \lambda_i > 0.
$$
This mathematical requirement for $\lambda_i > 0$ reflects the well-established optimization behavior in non-convex landscapes: directions with negative curvature (saddle points) are inherently unstable, causing the optimizer to diverge along those axes to escape the saddle, which effectively \textit{decreases} the loss.

Substituting $\alpha_i$ into the second condition yields:
$$
1 + (1 + \beta_1) - \eta(1 - \beta_1)\lambda_i + \beta_1 > 0 \quad \Longleftrightarrow \quad \lambda_i < \frac{2}{\eta} \cdot \frac{1 + \beta_1}{1 - \beta_1}.
$$
Unlike saddle-point escape, violating this upper bound in positive-curvature directions causes \textbf{oscillatory overshooting}, which forces the parameter out of the local valley and causes a violent \textit{increase} in loss (a loss spike). Therefore, to bound this specific spike-inducing instability, the maximum positive eigenvalue must satisfy:
$$\lambda_{\max}\left( \frac{1 - \beta_1}{1 + \beta_1} \boldsymbol{H} \right) < \frac{2}{\eta}.$$
This completes the proof.
\end{proof}

\begin{oldtheorem}[\textbf{Exact Necessary and Sufficient Condition for Loss Spike Onset}]
Let $L:\mathbb{R}^M\to\mathbb{R}$ be twice continuously differentiable.
At iterate $\boldsymbol{\theta}_t$, denote the gradient 
$\boldsymbol{g}_t := \nabla L(\boldsymbol{\theta}_t)\neq 0$,
and consider a GD update
\[
\boldsymbol{\theta}_{t+1} = \boldsymbol{\theta}_t - \eta \boldsymbol{g}_t,
\qquad \eta > 0.
\]
Define the weighted averaged Hessian along the update direction by
\[
\bar{\boldsymbol{H}}_t 
:= 2 \int_0^1 (1-s)\,\nabla^2 L(\boldsymbol{\theta}_t - s\eta \boldsymbol{g}_t)\,ds,
\]
and the corresponding directional curvature by
\[
\lambda_{\mathrm{grad}}(\bar{\boldsymbol{H}}_t)
:= \frac{\boldsymbol{g}_t^\top \bar{\boldsymbol{H}}_t \boldsymbol{g}_t}
        {\|\boldsymbol{g}_t\|^2}.
\]
Then the update exhibits a loss increase (necessary for a loss spike) if and only if $\lambda_{\mathrm{grad}}(\bar{\boldsymbol{H}}_t) > \frac{2}{\eta}$, i.e., 
\[
L(\boldsymbol{\theta}_{t+1}) > L(\boldsymbol{\theta}_t)
\quad\Longleftrightarrow\quad
\lambda_{\mathrm{grad}}(\bar{\boldsymbol{H}}_t) > \frac{2}{\eta}.
\]
\label{oldthm:spike_iff}
\end{oldtheorem}

\begin{proof}
Consider the univariate function
\[
\phi(s) := L(\boldsymbol{\theta}_t - s\eta \boldsymbol{g}_t), 
\qquad s\in[0,1].
\]
By the chain rule,
\[
\phi'(s) 
= -\eta\, \boldsymbol{g}_t^\top 
\nabla L(\boldsymbol{\theta}_t - s\eta \boldsymbol{g}_t),
\qquad
\phi'(0) = -\eta \|\boldsymbol{g}_t\|^2.
\]
Differentiating once more yields
\[
\phi''(s)
= \eta^2\, 
\boldsymbol{g}_t^\top 
\nabla^2 L(\boldsymbol{\theta}_t - s\eta \boldsymbol{g}_t)\, 
\boldsymbol{g}_t.
\]

Since $L$ is twice continuously differentiable, $\phi$ is $C^2$ on $[0,1]$.
The second-order Taylor theorem with \emph{integral remainder} gives the exact identity
\[
\phi(1)
= \phi(0) + \phi'(0)
+ \int_0^1 (1-s)\,\phi''(s)\,ds.
\]
Substituting the expressions for $\phi(0)$, $\phi'(0)$, and $\phi''(s)$ yields
\[
\begin{aligned}
L(\boldsymbol{\theta}_{t+1}) - L(\boldsymbol{\theta}_t)
&= -\eta\|\boldsymbol{g}_t\|^2
+ \eta^2 \int_0^1 (1-s)\,
       \boldsymbol{g}_t^\top 
       \nabla^2 L(\boldsymbol{\theta}_t - s\eta \boldsymbol{g}_t)\,
       \boldsymbol{g}_t\,ds.
\end{aligned}
\]

Introduce the weighted averaged Hessian
\[
\bar{\boldsymbol{H}}_t
:= 
\frac{\displaystyle\int_0^1 (1-s)\,\nabla^2 L(\boldsymbol{\theta}_t - s\eta \boldsymbol{g}_t)\,ds}
     {\displaystyle\int_0^1 (1-s)\,ds}
= 2\!\int_0^1 (1-s)\,
       \nabla^2 L(\boldsymbol{\theta}_t - s\eta \boldsymbol{g}_t)\,ds.
\]
Then the previous equality becomes
\[
L(\boldsymbol{\theta}_{t+1}) - L(\boldsymbol{\theta}_t)
= -\eta\|\boldsymbol{g}_t\|^2
+ \frac{\eta^2}{2}\, 
  \boldsymbol{g}_t^\top 
  \bar{\boldsymbol{H}}_t 
  \boldsymbol{g}_t.
\]

Dividing both sides by $\|\boldsymbol{g}_t\|^2>0$ shows that the sign of the loss change is exactly the sign of
\[
-\eta + \frac{\eta^2}{2}\lambda_{\mathrm{grad}}(\bar{\boldsymbol{H}}_t),
\]
where
\[
\lambda_{\mathrm{grad}}(\bar{\boldsymbol{H}}_t)
:= 
\frac{\boldsymbol{g}_t^\top \bar{\boldsymbol{H}}_t \boldsymbol{g}_t}
     {\|\boldsymbol{g}_t\|^2}.
\]

Therefore,
\[
L(\boldsymbol{\theta}_{t+1}) > L(\boldsymbol{\theta}_t)
\quad\Longleftrightarrow\quad
-\eta + \frac{\eta^2}{2}\lambda_{\mathrm{grad}}(\bar{\boldsymbol{H}}_t) > 0
\quad\Longleftrightarrow\quad
\lambda_{\mathrm{grad}}(\bar{\boldsymbol{H}}_t) > \frac{2}{\eta}.
\]
This proves the claimed necessary and sufficient condition for a loss spike onset.
\end{proof}

\paragraph{Practical proxy for loss spike onset.} The exact loss–spike condition in Theorem~\ref{oldthm:spike_iff} depends on the
directional curvature
$\lambda_{\mathrm{grad}}(\bar{\boldsymbol{H}}_t)$, where 
$\bar{\boldsymbol{H}}_t$ is the weighted line–segment average of the true Hessian.
Computing $\bar{\boldsymbol{H}}_t$ is intractable in modern deep networks, as it
requires access to second-order information along the entire update path. In practice, since learning rates are typically small, we can monitor the step-wise curvature as a proxy:
\[
\lambda_{\mathrm{grad}}({\boldsymbol{H}}_t)
:= \frac{
   \boldsymbol{g}_t^\top {\boldsymbol{H}}_t \boldsymbol{g}_t
 }{
   \|\boldsymbol{g}_t\|^2
 }.
\]

Our central theoretical insight is that Adam can be understood as applying a preconditioning transformation to the Hessian, as expressed in our Equation~\ref{eq:precondition_hessian}: 
$$\hat{\boldsymbol{H}}_t = \frac{1}{1-\beta_1^t}\frac{1-\beta_1}{1+\beta_1}\text{diag}\left(\frac{1}{\sqrt{\hat{\boldsymbol{v}}_t} + \varepsilon}\right)\boldsymbol{H}_t.$$
Therefore, a natural extension for Adam is to replace $\boldsymbol{H}_t$ with the preconditioned Hessian $\hat{\boldsymbol{H}}_t$. This yields our predictor: 
$$\lambda_{\text{grad}}(\hat{\boldsymbol{H}}_t) := \frac{\nabla L(\boldsymbol{\theta}_t)^\top \hat{\boldsymbol{H}}_t \nabla L(\boldsymbol{\theta}_t)}{\|\nabla L(\boldsymbol{\theta}_t)\|^2} > \frac{2}{\eta}.$$ 

Empirically, we observe that this curvature proxy aligns closely with the onset of loss spikes across architectures and datasets, suggesting that it provide a robust approximation to the underlying directional curvature governing spike formation.

\begin{oldtheorem}[\textbf{Five Stages of Adam for Optimizing Quadratic Loss}]
\label{oldthm:adam-spike}
Consider the 1-d quadratic loss \( L(\theta) = \frac{1}{2}\theta^2 \), optimized using Adam with hyper-parameters \( \beta_1 = 0 \), \( \beta_2 \in (0,1) \), and learning rate \( \eta > 0 \). The update rules are: 
\[
\theta_{t+1} = \left(1 - \frac{\eta}{\sqrt{v_t}} \right) \theta_t, \quad
v_{t+1} = \beta_2 v_t + (1 - \beta_2) \theta_t^2.
\]
Assume the initialization satisfies \( v_0 = \theta_0^2 \) and \( |\theta_0| > \frac{\eta}{2} \). Assume $\frac{1}{\ln(1/\beta_2)}> \frac{1}{\ln(\frac{2|\theta_0|}{\eta})}+\frac{1}{\ln2}.$
Then there exist integers $t_0 < t_1 < t_2 < t_3 < t_4 < t_5<\infty$ such that the iterates ${(\theta_t,v_t)}$ exhibit the five stages described above in intervals $[t_i, t_{i+1})$, respectively. Specifically,

(i) \textbf{Stable Loss Decrease}. Define $t_0=0$, then for all \(t_0\leq  t < t_1 \), where 
\[
t_1 := \frac{2 \ln \left( \frac{|\theta_0|}{\eta} + \frac{1}{2} \right)}{\ln \frac{1}{\beta_2}},
\]
the sequence \( |\theta_t| \) decreases exponentially, and \( v_t \in [\beta_2^t \theta_0^2, \theta_0^2]\). In particular, there exists \( s \in (0,1) \) such that 
\[
|\theta_t| \leq s^t |\theta_0|, \quad \text{and} \quad |\theta_{t_1}| \leq \delta := s^{t_1} |\theta_0|.
\]

(ii) \textbf{Preconditioners Decay.} For \( t_1 \leq t < t_2 \), where 
\[
t_2 := \inf \left\{ t > t_1 \mid \sqrt{v_t} < \tfrac{\eta}{2} \right\},
\]
the momentum \( v_t \) decays exponentially as 
\[
v_t \leq (v_{t_1+1} - \delta^2)\beta_2^{t - t_1 - 1} + \delta^2.
\]

(iii) \textbf{Spike Onset.} Define 
\[
t_3 := \inf \left\{ t > t_2 \mid v_{t+1}> v_{t} \right\}.
\]
For \( t_2 \leq t < t_3 \), the preconditioner \( v_t \) continues to decay, and the update multiplier \( \left|1 - \frac{\eta}{\sqrt{v_t}}\right| \) grows, causing \( |\theta_t| \) to increase exponentially.

(iv) \textbf{Preconditioners Growth.} Define
$$
t_4 := \inf\{\,t > t_3 \mid \sqrt{v_t} > \tfrac{\eta}{2}\,\}.
$$
For $t_3 \leq t < t_4$, the growing gradient magnitude forces the preconditioner $v_t$ to increase. Consequently, the update multiplier \( \left|1 - \frac{\eta}{\sqrt{v_t}}\right| \) shrinks steadily, preparing the transition from explosive growth to contraction.

(v) \textbf{Loss Decrease.} Define 
$$
t_5 := \inf\Bigl\{t>t_4:\ \sqrt{v_t} < \tfrac{\eta}{2}\Bigr\}.
$$
If no such $t$ exists, we simply take $t_5 > t_4$ to be any larger index. For $t_4 \leq t < t_5$, the preconditioner has grown sufficiently so that $\tfrac{\eta}{\sqrt{v_t}} < 1$. In this regime, the update multiplier satisfies $\Bigl|1 - \tfrac{\eta}{\sqrt{v_t}}\Bigr| < 1$, ensuring that $|\theta_t|$ contracts and the loss $L(\theta_t) = \tfrac{1}{2}\theta_t^2$ decreases once again.
\end{oldtheorem}

\begin{proof}
We proceed in stages and make all inequalities explicit. The corresponding schematic diagrams of the five stages are shown in Fig.~\ref{fig:five_t}.
\begin{figure}[!ht]
    \centering
    \includegraphics[width=0.9\linewidth]{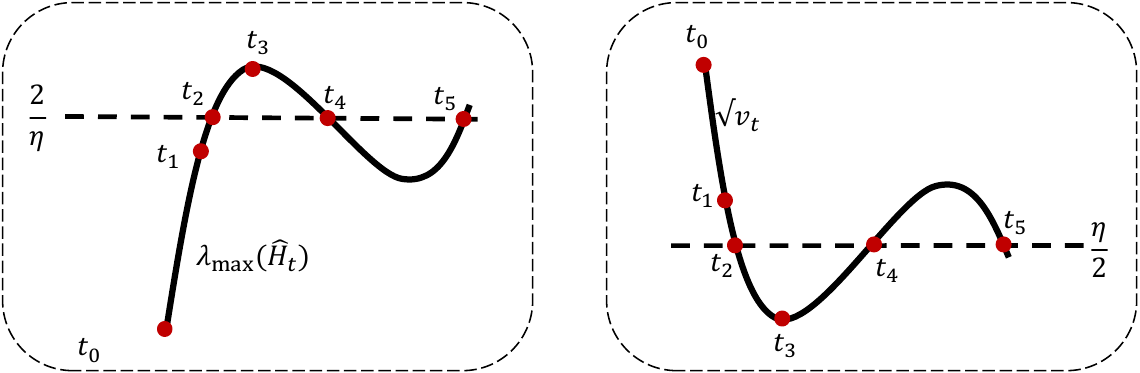}
    \caption{The five stages are illustrated schematically.}
    \label{fig:five_t}
\end{figure}

\textbf{Stage 1 (Stable loss decrease).} 
 For the given initialization $v_0=\theta_0^2$ and $0<\beta_2<1$ we have the trivial lower bound (single-step recurrence gives a simple monotone inequality)
$$
v_t \ge \beta_2^t v_0 = \beta_2^t \theta_0^2,\qquad\forall t\ge0.
$$
Also note $v_t\ge0$ for all $t$. 
\medskip

\textbf{Construction of $t_1$ and $\delta$.} Define
$$
t_1 := \frac{2\ln\!\bigl(\frac{|\theta_0|}{\eta}+\tfrac12\bigr)}{\ln(1/\beta_2)}.
$$
Because $0<\beta_2<1$, $\ln(1/\beta_2)>0$ and $t_1$ is well defined. Set
$$
s := \max\!\Bigl\{\tfrac12\frac{\eta}{|\theta_0|},\,\bigl|1-\tfrac{\eta}{|\theta_0|}\bigr|\Bigr\}.
$$
By the hypothesis $|\theta_0|>\eta/2$ we have $s\in(0,1)$. Define
$$
\delta := s^{\lfloor t_1\rfloor}|\theta_0|.
$$
Here, $\lfloor \cdot\rfloor$ is the floor function.
The choice of $t_1$ ensures the following inequality chain for all integers $t$ with $t_0\leq t<t_1$. Using the lower bound $v_t>\beta_2^t \theta_0^2$ and the definition of $t_1$, one obtains
$$
\sqrt{v_t} \ge \beta_2^{t/2}|\theta_0|\geq  \beta_2^{t_1/2}|\theta_0|
\quad\text{and by the definition of }t_1\text{,}\quad
\beta_2^{t_1/2}|\theta_0|=\frac{|\theta_0|}{\frac{|\theta_0|}{\eta}+\frac{1}{2}},
$$
so in particular $\sqrt{v_t}>\frac{|\theta_0|}{\frac{|\theta_0|}{\eta}+\frac{1}{2}}$ and hence
$$
1-\frac{\eta}{\sqrt{v_t}}> -\frac{1}{2}\frac{\eta}{|\theta_0|}.
$$
Therefore 
$$-1<-\frac{1}{2}\frac{\eta}{|\theta_0|}<1-\frac{\eta}{\sqrt{v_t}}<1, \forall 0\leq t<t_1.$$
This indicates that $|\theta_t|$ is monotonically decreasing for $0<t<t_1$. Thus, $\sqrt{v_t}\leq |\theta_0|$ for all $0<t<t_1$. This completes the upper bound of  $1-\frac{\eta}{\sqrt{v_t}}$ as follows:
$$-\frac{1}{2}\frac{\eta}{|\theta_0|}<1-\frac{\eta}{\sqrt{v_t}}<1-\frac{\eta}{|\theta_0|}, \forall 0\leq t<t_1.$$

By definition of $s$, we get
$$
\Bigl|1-\frac{\eta}{\sqrt{v_t}}\Bigr| \le s <1.
$$
Therefore for $0<t<t_1$,
$$
|\theta_t| \le s^t |\theta_0|.
$$
In particular $|\theta_{\lfloor t_1\rfloor}|\le \delta$, establishing the intended bound at the end of Stage 1. This proves Stage 1.
\medskip

\medskip

\textbf{Stage 2 (Preconditioner decay).} 
Define
$$
t_2 := \inf\Bigl\{t\in\mathbb N^+:\ 1-\frac{\eta}{\sqrt{v_t}}<-1\Bigr\}.
$$
For integers $t_1\leq t\leq t_2$, we have  $|\theta_t|\leq|\theta_{t_1}|\leq\delta$. The recurrence for $v$ implies
$$
v_{t+1}=\beta_2 v_t+(1-\beta_2)\theta_t^2 \le \beta_2 v_t + (1-\beta_2)\delta^2.
$$
This is an affine linear inequality in $v_t$. Iterating this inequality forward from $t=t_1\!+\!1$ yields, for any integer $t_1+1\leq t\le t_2$,
\begin{equation}
v_t \le (v_{t_1+1}-\delta^2)\beta_2^{\,t-t_1-1} + \delta^2,
\label{eq:thm.c.1}
\end{equation}
which shows $v_t$ decays geometrically toward $\delta^2$ with factor $\beta_2$ so long as $|\theta_t|\le\delta$. Because $|\theta_t|\le\delta$ on the time window following Stage 1 by construction, we have established the Stage 2 statement.

Note also the obvious lower bound obtained by ignoring the additive $(1-\beta_2)\theta_t^2$ term:
$$
v_t \ge v_{t_1+1}\beta_2^{\,t-t_1-1},
$$
so $v_t$ is squeezed between two geometric forms until $|\theta_t|$ leaves the small region.
\medskip

\textbf{Existence and finiteness of $t_2$:} 
Suppose by contradiction that $t_2=+\infty$. Then $1-\frac{\eta}{\sqrt{v_{t}}}\geq-1$, which simplifies to $v_t\geq \frac{\eta^2}{4},\forall t\in \sN^+$.
In Eq.~\eqref{eq:thm.c.1} let $t\to +\infty$, it follows that  $\limsup _{t\to \infty}v_{t}\leq \delta^2$. 
So $\delta^2\geq \frac{\eta^2}{4}$, which indicates that  $\delta\geq \frac{\eta}{2}$. Since $
\delta := s^{\lfloor t_1\rfloor}|\theta_0|
$, we have $\lfloor t_1\rfloor\leq \frac{\ln(\frac{2|\theta_0|}{\eta})}{\ln(1/s)}$. By definition, $s\geq \frac{1}{2}$, so  $\lfloor t_1\rfloor\leq \frac{\ln(\frac{2|\theta_0|}{\eta})}{\ln2}$. By definition of $t_1$, it follows that 
$$\frac{\ln(\frac{2|\theta_0|}{\eta})}{\ln(1/\beta_2)}-1 \leq  \frac{2\ln(\frac{|\theta_0|}{\eta}+\frac{1}{2})}{\ln(1/\beta_2)}-1 \leq  \lfloor t_1\rfloor\leq \frac{\ln(\frac{2|\theta_0|}{\eta})}{\ln2}.$$
Therefore we have 
$$\frac{1}{\ln(1/\beta_2)}\leq \frac{1}{\ln(\frac{2|\theta_0|}{\eta})}+\frac{1}{\ln2},$$
which contradicts the assumption. So $t_2$ is finite.


\textbf{Stage 3 (Spike onset).} By definition of $t_2$, at $t=t_2$ we have $\sqrt{v_{t_2}}<\eta/2$. Consequently
$$
\Bigl|1-\frac{\eta}{\sqrt{v_{t_2}}}\Bigr| > 1,
$$
so passing from $t_2$ to $t_2+1$ yields
$$
|\theta_{t_2+1}| = \Bigl|1-\frac{\eta}{\sqrt{v_{t_2}}}\Bigr|\,|\theta_{t_2}| > |\theta_{t_2}|.
$$
Thus $|\theta_t|$ grows for $t$ just after $t_1$. 

\textbf{Finiteness of $t_3$.} To capture when the second-moment estimate $v_t$ ceases to decay, define
$$
t_3 := \inf \{\,t > t_2 : v_{t+1} > v_t\,\}.
$$
If no such $t$ exists we set $t_3 = +\infty$. Suppose, for contradiction, that $t_3 = \infty$. Then $v_{t+1} \leq v_t$ for all $t \ge t_2$, so $v_t$ is monotonically decreasing and bounded below by $0$. Thus the limit 
$$
v_\infty := \lim_{t\to\infty} v_t
$$
exists. Since $v_t \le v_{t_2}$ for all $t \ge t_2$, we obtain
$$
\frac{\eta}{\sqrt{v_t}} \ge \frac{\eta}{\sqrt{v_{t_2}}} > 2,
$$
hence there exists a constant $q := \frac{\eta}{\sqrt{v_{t_2}}} - 1 > 1$ such that
$$
\Bigl|1-\frac{\eta}{\sqrt{v_t}}\Bigr| \ge q > 1, \qquad \forall t \ge t_2.
$$
By recursion,
$$
|\theta_{t_2+k}| \;\ge\; q^k |\theta_{t_2}| \;\to\; \infty \quad \text{as } k \to \infty.
$$
However, the recurrence for $v_t$ is
$$
v_{t+1} = \beta_2 v_t + (1-\beta_2)\theta_t^2.
$$
Since $|\theta_t| \to \infty$ and $1-\beta_2 > 0$, the term $(1-\beta_2)\theta_t^2 \to \infty$, forcing $v_{t+1} \to \infty$. This contradicts the assumption that $v_t$ is monotonically decreasing with a finite limit $v_\infty$. Therefore, $t_3 < \infty$. The larger $\beta_2$ is, the more slowly $v_t$ responds to $g_t$, and the later the index $t_3$ of the monotonic change will occur.

\textbf{Exponential growth in loss for $t_2 \le t < t_3$.} For any $t_2 \le t < t_3$, we have $v_{t+1} \leq v_t \le v_{t_2}$. Hence
$$
\frac{\eta}{\sqrt{v_t}} \ge \frac{\eta}{\sqrt{v_{t_2}}} > 2,
$$
and so
$$
\Bigl|1-\frac{\eta}{\sqrt{v_t}}\Bigr| \ge q > 1,
$$
where $q = \frac{\eta}{\sqrt{v_{t_2}}} - 1$. By induction,
$$
|\theta_t| \;\ge\; q^{\,t-t_2} |\theta_{t_2}|, \qquad \forall t_2 \le t < t_3.
$$
Thus $|\theta_t|$ grows at least exponentially on the interval $[t_2,t_3)$, and the loss
$$
l(\theta_t) = \tfrac12 \theta_t^2
$$
increases dramatically, capturing the onset of the spike.

\medskip

\textbf{Stage 4 (Preconditioner growth).} Define
$$
t_4 := \inf\{\,t > t_3 \mid \sqrt{v_t} > \tfrac{\eta}{2}\,\}.
$$
\textbf{Finiteness of $t_4$.} We first show that $t_4 < +\infty$. Suppose, for contradiction, that $t_4 = +\infty$.
By the definition of $t_3$, we have $v_{t_3+1} > v_{t_3}$. Since
$$
v_{t_3+1} = \beta_2 v_{t_3} + (1-\beta_2)\theta_{t_3}^2,
$$
this inequality implies $\theta_{t_3}^2 > v_{t_3}$. On the other hand,
$$
\theta_{t_3+1} = \Bigl(1-\tfrac{\eta}{\sqrt{v_{t_3}}}\Bigr)\theta_{t_3}.
$$
If $\theta_{t_3}>0$, then
$$
\theta_{t_3+1} < (1-\frac{\eta}{\theta_{t_3}})\theta_{t_3} =\theta_{t_3} - \eta,
$$
so either $\theta_{t_3} > \tfrac{\eta}{2}$ or $\theta_{t_3+1} < -\tfrac{\eta}{2}$. Thus, in either case, there exists some $t \in \{t_3,t_3+1\}$ such that
$$
|\theta_t| > \tfrac{\eta}{2}.
$$
Now assume $t_4 = +\infty$. Then by definition we must have $\sqrt{v_t} \le \tfrac{\eta}{2}$ for all $t > t_3$. Hence
$$
\Bigl|1-\tfrac{\eta}{\sqrt{v_t}}\Bigr| \ge 1,
$$
implying that $|\theta_t|$ is monotonically non-decreasing. Since at least one of $|\theta_{t_3}|$ or $|\theta_{t_3+1}|$ already exceeds $\tfrac{\eta}{2}$, it follows that
$$
|\theta_t| \ge a := \max\{|\theta_{t_3}|,|\theta_{t_3+1}|\} > \tfrac{\eta}{2}, \qquad \forall\, t > t_3.
$$
Thus $|\theta_t|$ converges to a limit (possibly $+\infty$) with
$$
\lim_{t\to\infty}|\theta_t| \;\ge\; a > \tfrac{\eta}{2}.
$$
But then, since
$$
v_{t+1} = \beta_2 v_t + (1-\beta_2)\theta_t^2,
$$
we must have
$$
\lim_{t\to\infty} v_t = a^2,
$$
so that
$$
\lim_{t\to\infty} \sqrt{v_t} = a > \tfrac{\eta}{2}.
$$
This contradicts the assumption that $\sqrt{v_t} \le \tfrac{\eta}{2}$ for all $t > t_3$. Therefore $t_4$ must be finite.

During the interval $t_3 < t \le t_4$, the preconditioner $\sqrt{v_t}$ evolves from being strictly below $\tfrac{\eta}{2}$ to exceeding it. We refer to this regime as the ``preconditioner growth stage''.

\medskip

\textbf{Stage 5 (Loss decrease).} Define 
$$
t_5 := \inf\Bigl\{t>t_4:\ 1-\frac{\eta}{\sqrt{v_t}}<-1\Bigr\}.
$$
If no such $t$ exists, we simply set $t_5 > t_4$ to be any larger index for convenience.
At time $t_4$, the preconditioner satisfies $\sqrt{v_{t_4}}>\tfrac{\eta}{2}$. Hence, for $t \geq t_4$,
$$
\Bigl|1-\tfrac{\eta}{\sqrt{v_t}}\Bigr| < 1.
$$
This ensures that, during the interval $t_4 \leq t \leq t_5$, the multiplicative factor falls strictly within $(-1,1)$, so $|\theta_t|$ no longer grows but instead contracts. Consequently, the loss $L(\theta_t)=\tfrac{1}{2}\theta_t^2$ decreases over this period.

Thus the trajectory transitions from exponential growth (Stage 3) and preconditioner growth (Stage 4) into a contraction regime. In this way, the cycle closes and the dynamics return to behavior of the same type as in Stage 1.


\medskip


This completes the proof of the five-stage behavior for the quadratic optimization.
\end{proof}

\begin{oldtheorem}[\textbf{Analysis of decaying learning rate scheduler}]
\label{oldthm:lr_decay}
Consider the same setup as Thm.~\ref{thm:adam-spike} with decaying learning rate $\eta_t = \eta_0(t+1)^{-\alpha}$ where $\alpha \in (0,1)$. Assume the initialization satisfies $v_0=\theta_0^2$ and $|\theta_0|>2\eta_0>0$. Assume $\beta_2$ is sufficiently close to $1$. 
Then the stability condition $|1-\frac{\eta_t}{\sqrt{v_t}}|<1$ cannot hold for all $t\in \sN^+$.
\end{oldtheorem}

\begin{proof}
Assume by contradiction that  $|1-\frac{\eta_t}{\sqrt{v_t}}|<1$ holds for all $t\in \sN^+$.
\\
\textbf{Stage 1 (Loss Decay Stage).}
For all $t$, $\beta_2^t v_0\leq  v_t \leq \theta_0^2$. Define $t_0=\frac{\log 2}{\log \frac{1}{\beta_2}}$. Then for all $t\leq t_0$, $v_t\geq \frac{1}{2} v_0$.
Since $|\theta_0|>2\eta_0$, we have
$\frac{\eta_t}{\sqrt{v_t}}<\frac{\eta_0}{\sqrt{\frac{1}{2}v_0}}<\frac{2\eta_0}{|\theta_0|}<1$ for all $0\leq t\leq t_0$. Therefore
, $\prod_{k=0}^{t_0}(1 - \frac{\eta_k}{\sqrt{v_k}})=e^{\sum_{k=0}^{t_0} \log(1 - \frac{\eta_k}{\sqrt{v_k}})} \leq e^{-\sum_{k=0}^{t_0} \frac{\eta_k}{\sqrt{v_k}}} \leq e^{- \frac{1}{|\theta_0|}\sum_{k=0}^{t_0} \eta_k}\leq e^{- \frac{\eta_0}{(1-\alpha)|\theta_0|}\left( (t_0+2)^{1-\alpha}- 1\right)} \leq e^{- \frac{\eta_0}{(1-\alpha)|\theta_0|}\left( t_0^{1-\alpha}- 1\right)}. 
$ Therefore $|\theta_t| \leq |\theta_0| e^{- \frac{\eta_0}{(1-\alpha)|\theta_0|}\left( t_0^{1-\alpha}- 1\right)}$. By 
assumption, $s:=t_0^{1-\alpha}$ is sufficiently large. Therefore
$|\theta_{t_0}|:=\delta$ is sufficiently small, whereas $\frac{1}{2}|\theta_0|^2\leq v_{t_0}\leq |\theta_0^2|$.

\textbf{Stage 2 (Decay of the Adaptive Preconditioners).} 
With the same argument of Theorem~\ref{oldthm:adam-spike}(ii), we have
\[
v_t \leq (v_{t_0+1} - \delta^2) \beta_2^{t - t_0 - 1} + \delta^2,
\]

Solving $\eta_{T}=3\delta$, we have $T=(\frac{\eta_0}{3\delta})^\alpha-1$. Then $v_{T}\leq(v_{t_0+1} - \delta^2) \beta_2^{T - t_0 - 1} + \delta^2$.
Therefore $$\frac{\eta_{T}}{\sqrt{v_{T}}}\geq \frac{3\delta}{\sqrt{(v_{t_0+1} - \delta^2) \beta_2^{T - t_0 - 1} + \delta^2}}=\frac{3}{\sqrt{(v_{t_0+1} - \delta^2) \frac{\beta_2^{T - t_0 - 1}}{\delta^2} + 1}}.$$ 
By calculation,
$$\frac{\beta_2^{T - t_0 - 1}}{\delta^2}=e^{\left((\frac{\eta_0}{3\delta})^\alpha-\frac{\log 2}{\log \frac{1}{\beta_2}}-2\right)\log\beta_2-2\log\delta}.$$
When $\beta_2\to 1$, $\log\beta_2\to 0,\delta\to 0$, but $\delta$ is of the form $e^{(\frac{c_1}{\log \beta_2})^{c_2} }$ with $c_1,c_2>0$. Intuitively, $\delta<<\log \beta_2$. From $e^{(\frac{c_1}{\log \beta_2})^{c_2} }$ with $c_1,c_2>0$, one may verify that 
$$\left((\frac{\eta_0}{3\delta})^\alpha-\frac{\log 2}{\log \frac{1}{\beta_2}}-2\right)\log\beta_2-2\log\delta\to -\infty.$$
So $\frac{\beta_2^{T - t_0 - 1}}{\delta^2}\to0$. Thus, $\frac{\eta_{T}}{\sqrt{v_{T}}}>2$  when $\beta_2$ is sufficiently close to $1$. This breaks the stability condition.
\end{proof}

\section{Discussion: The Pros and Cons of Loss Spikes}
\label{app:sec:discussion:pros_and_cons}
\textbf{Connection to Generalization Transitions.} Loss spikes represent more than mere optimization phenomena; they may signify transitions between distinct attractor basins in the optimization landscape. To systematically investigate the relationship between loss spikes and generalization, we conducted controlled experiments using a Transformer model. The model was trained to identify specific anchors within sequences, using a dataset of 2,000 samples (1,800 training, 200 test). We employed full-batch Adam optimization for training (detailed experimental setups and dataset specifications are provided in App.~\ref{app:sec:supp_exp}). By analyzing the differential impacts on training and test losses before and after spike occurrences, we identified four distinct categories of loss spikes: 

\textbf{(i) Neutral Spikes} (Fig.~\ref{fig:spike-classification}(a)): Both training and test losses resume their normal declining trajectory following the spike, suggesting minimal impact on the overall optimization process. 

\textbf{(ii) Benign Spikes} (Fig.~\ref{fig:spike-classification}(b)): Prior to the spike, training loss reaches very low values while test loss remains elevated, indicating overfitting. After the spike, test loss decreases rapidly, suggesting improved generalization performance. 

\textbf{(iii) Malignant Spikes} (Fig.~\ref{fig:spike-classification}(c)): Before the spike, both training and test losses achieve low values. After the spike, while training loss continues to decrease normally, test loss plateaus, indicating deteriorated generalization. 

\textbf{(iv) Catastrophic Spikes} (Fig.~\ref{fig:spike-classification}(d)): Both training and test losses are low before the spike but neither recovers afterward, signifying a complete breakdown of the optimization process. These findings demonstrate that loss spikes can have context-dependent effects on generalization—sometimes enhancing model performance while in other cases degrading performance.
\begin{figure}[htb]
    \centering
    \subfloat[Neutral Spike]{\includegraphics[height=0.175\linewidth]{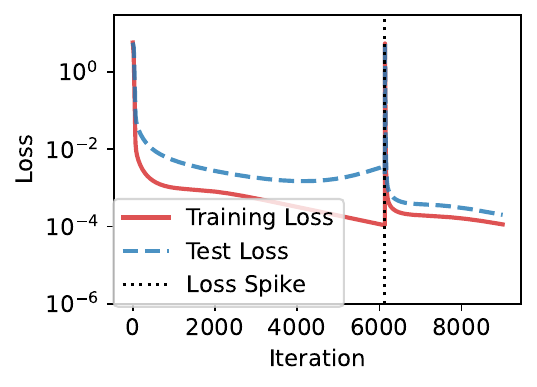}}
    \subfloat[Benign Spike]{\includegraphics[height=0.175\linewidth]{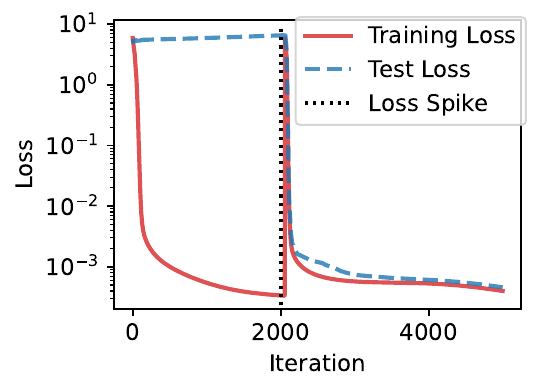}}
    \subfloat[Malignant Spike]{\includegraphics[height=0.175\linewidth]{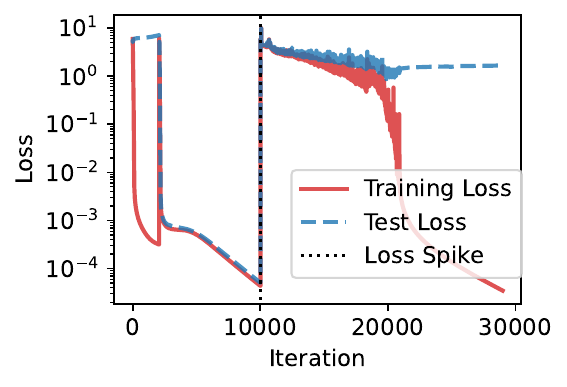}}
    \subfloat[Catastrophic Spike]{\includegraphics[height=0.175\linewidth]{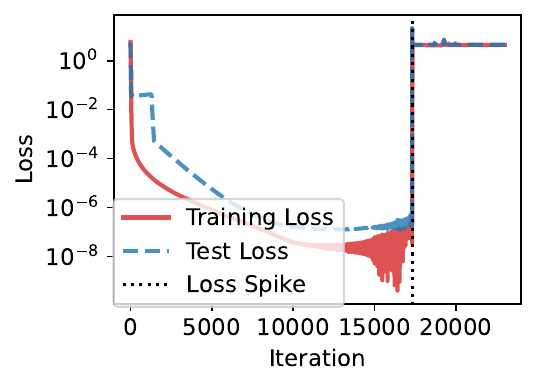}}\\
    \subfloat[Neutral Spike]{\includegraphics[height=0.16\linewidth]{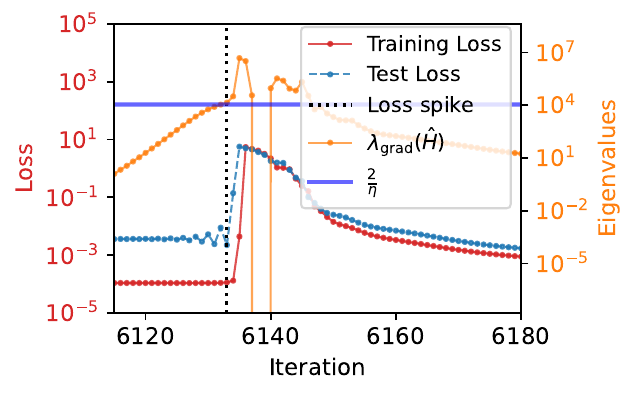}}
    \subfloat[Benign Spike]{\includegraphics[height=0.16\linewidth]{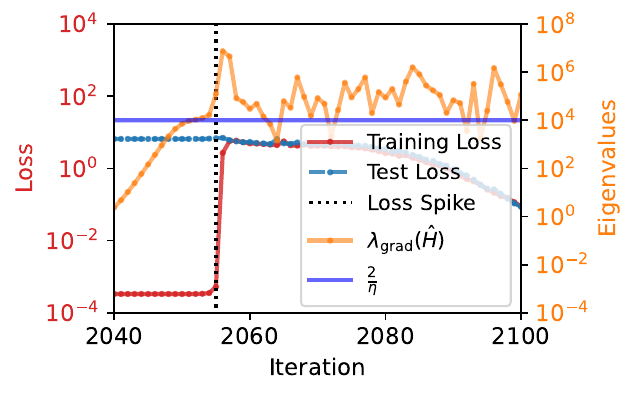}}
    \subfloat[Malignant Spike]{\includegraphics[height=0.16\linewidth]{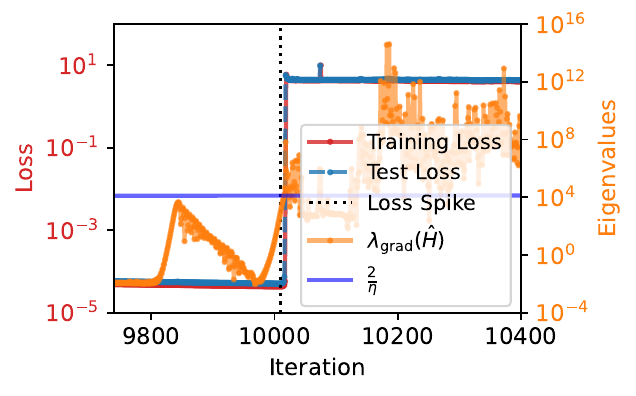}}
    \subfloat[Catastrophic Spike]{\includegraphics[height=0.16\linewidth]{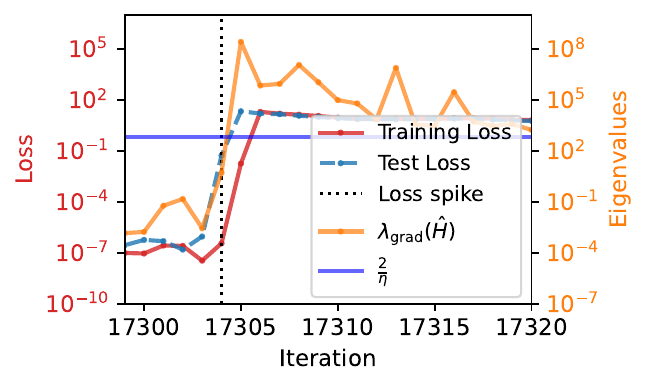}}
    \caption{The Transformer model was trained to identify specific anchors within sequences. (a–d) Evolution of the training and test losses over the course of training. (e-h) Evolution of the eigenvalues in the gradient direction $\lambda_{\mathrm{grad}}(\hat{\vH}_t)$ near the spike.}
    \label{fig:spike-classification}
\end{figure}

As shown in Fig.~\ref{fig:spike-classification}(e–h), all four types of spikes correspond to our proposed indicator, \(\lambda_{\mathrm{grad}}(\hat{\vH}_t)\), exceeding the classical stability threshold \(2/\eta\). Despite this commonality, their effects on generalization differ significantly. While our study uncovers the underlying mechanism that triggers these spikes, determining the precise conditions under which a spike becomes benign or malignant remains an open question for future research.

\section{Supplementary Experiments}
\label{app:sec:supp_exp}

\textbf{Optimization of Quadratic Function with Varying Hyper-parameters.} For the optimization of a one-dimensional quadratic function, Fig.~\ref{app:fig:1d-quadratic} illustrates the precise location of the spike under various hyperparameter configurations, where \(\lambda_{\max}(\hat{\vH}_t)\) exceeds the stability threshold \(\frac{2}{\eta}\).

\begin{figure}[htb]
    \centering
    \subfloat[\footnotesize$\eta=0.15,\beta_1=0.9,\beta_2=0.999$]{\includegraphics[height=0.15\textwidth]{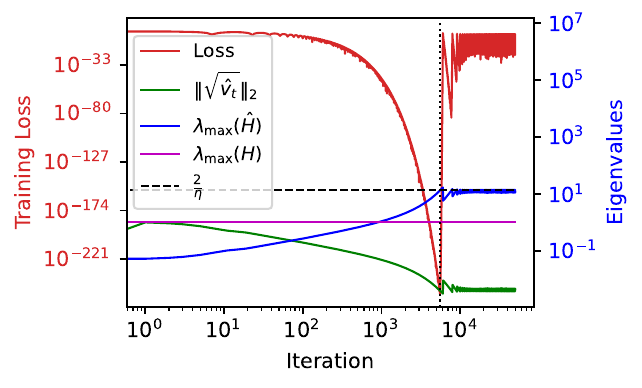}}
    \subfloat[\footnotesize$\eta=0.25,\beta_1=0.9,\beta_2=0.999$]{\includegraphics[height=0.15\textwidth]{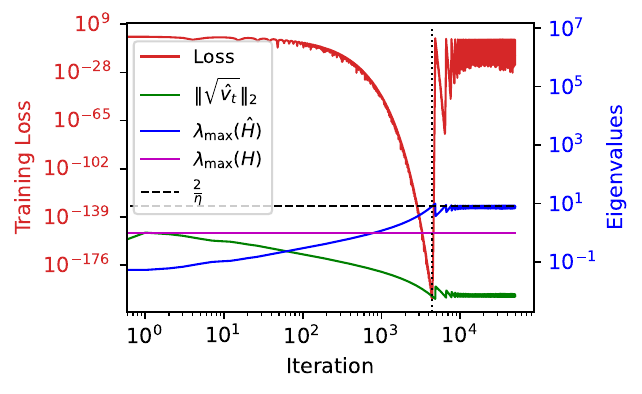}}
    \subfloat[\footnotesize$\eta=0.15, \beta_1=0.95, \beta_2=0.999$]{\includegraphics[height=0.15\textwidth]{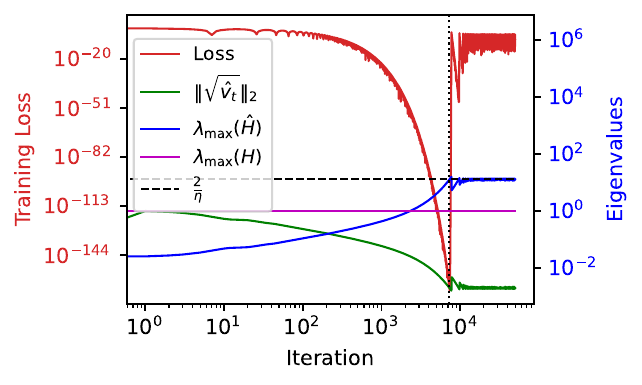}}
    \subfloat[\footnotesize$\eta=0.15, \beta_1=0.9, \beta_2=0.99$]{\includegraphics[height=0.15\textwidth]{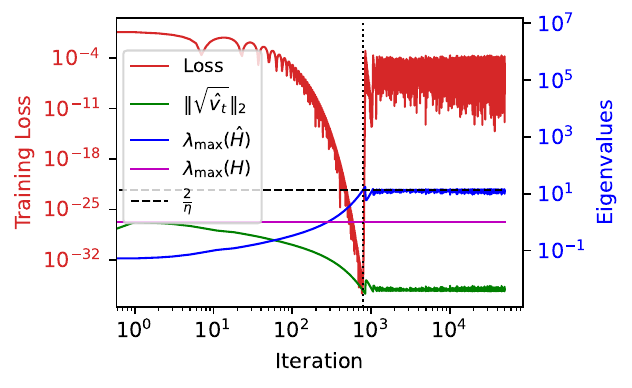}}
    \caption{Optimization of \(f(\theta) = \frac{1}{2}\theta^2\) using the Adam algorithm with different hyperparameter settings. The solid red line denotes the training loss. The dashed black line indicates the stability threshold \(\frac{2}{\eta}\). The blue, purple, and green solid lines represent \(\lambda_{\max}(\boldsymbol{H}_t)\), \(\lambda_{\max}(\hat{\boldsymbol{H}}_t)\), and the bias-corrected \(\|\sqrt{\hat{\boldsymbol{v}}_t}\|_2\), respectively, at each training step.}
    \label{app:fig:1d-quadratic}
\end{figure}

\textbf{Delay Mechanism in GD}

To verify that in high-dimensional cases, when $\lambda_{\max}>\frac{2}{\eta}$, the maximum eigenvalue direction oscillates while other eigenvalue directions steadily decrease (resulting in overall loss reduction), we conducted experiments on one and two-dimensional quadratic functions with varying learning rates.

For a one-dimensional quadratic function, the loss landscape curvature remains constant. In this setting, the learning rate initially produces linear improvement over time, followed by gradual decay. When the instability condition is met—as illustrated in Fig.~\ref{fig:GD-1d-2d}(a)—the loss increases immediately.

In contrast, for the two-dimensional case, instability primarily emerges along the dominant eigendirection, while other directions continue to descend stably. As shown in Fig.~\ref{fig:GD-1d-2d}(b), this leads to a delayed onset of the loss spike. 

To further validate this mechanism, we visualize the training trajectories in Fig.~\ref{fig:GD-1d-2d-trajectory}(a–b). In GD (GD), the component along the maximum eigenvalue direction is learned rapidly at first, resulting in a small magnitude. However, once the instability condition is triggered, this component requires significant time to grow and eventually dominate the dynamics.

\begin{figure}[htbp]
    \centering
    \subfloat[1d-quadratic]{\includegraphics[width=0.9\textwidth]{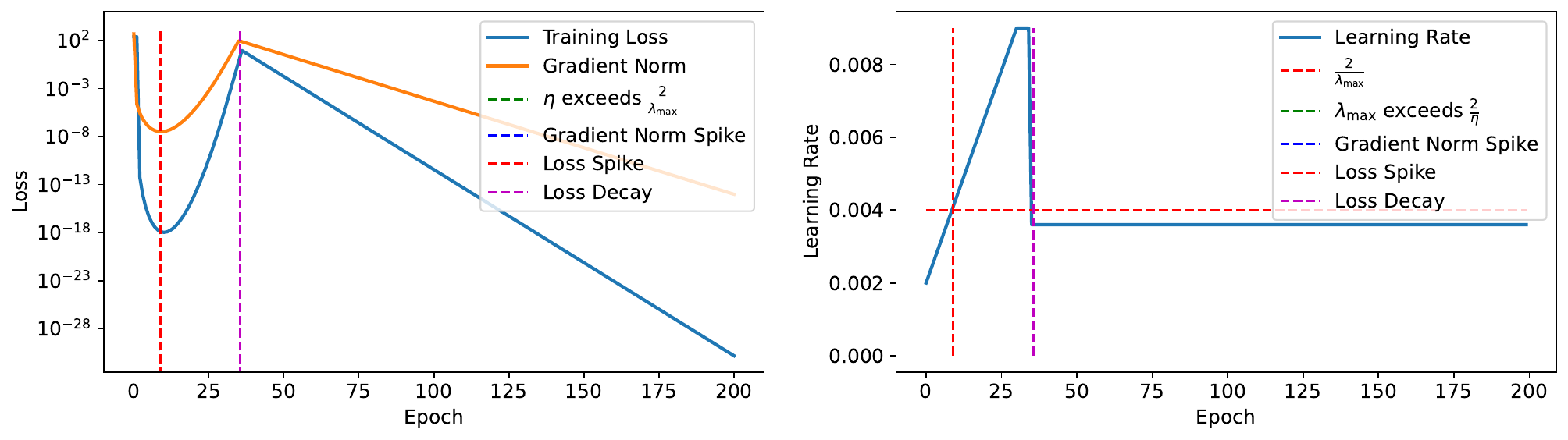}}\\
    \subfloat[2d-quadratic]{\includegraphics[width=0.9\textwidth]{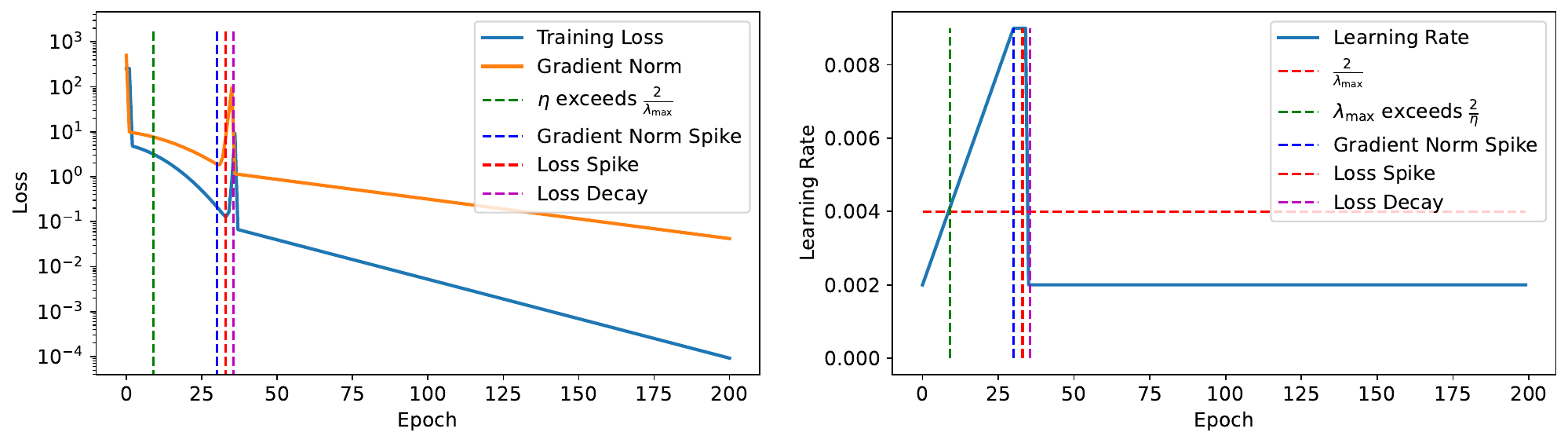}}\\
    \caption{Delay mechanism in GD: Comparison of loss dynamics for 1D and 2D quadratic functions. The learning rate varies over the course of training.}
    \label{fig:GD-1d-2d}
\end{figure}

\begin{figure}[htbp]
    \centering
    \subfloat[Parameter value]{\includegraphics[width=0.4\textwidth]{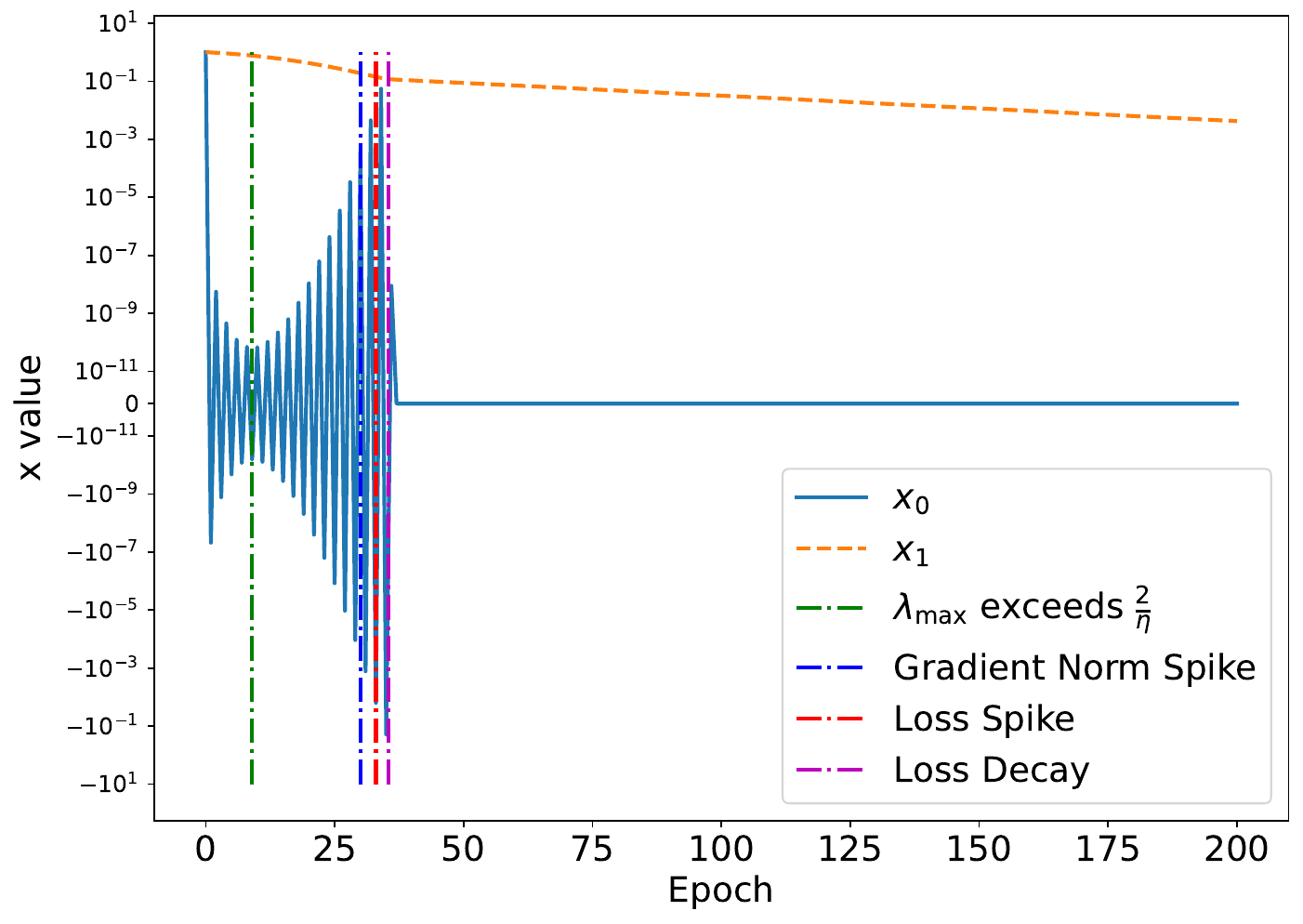}}\hspace{2cm}
    \subfloat[Trajectory]{\includegraphics[width=0.4\textwidth]{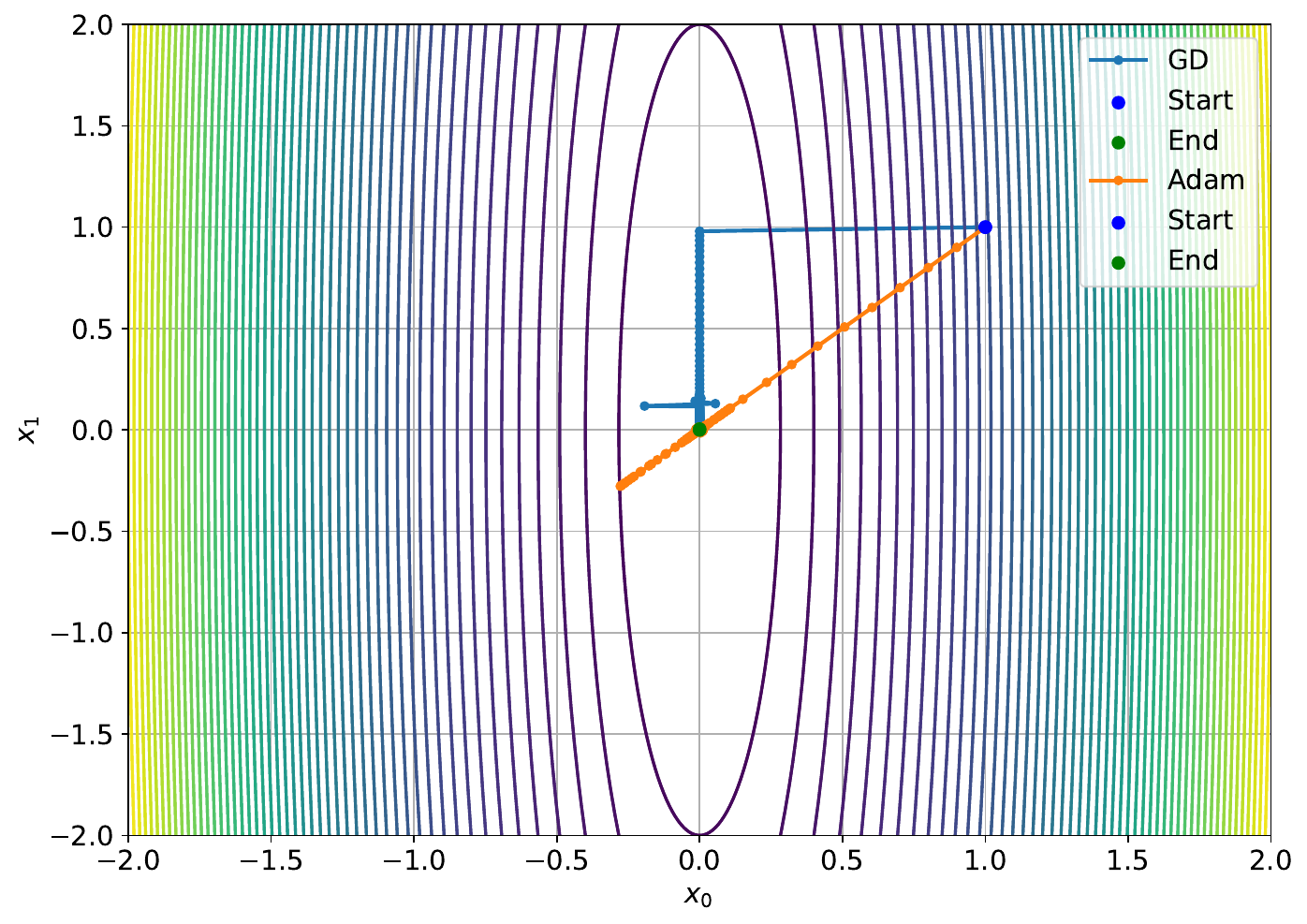}}
    \caption{Training dynamics for the 2D quadratic function under GD. (a) Evolution of the solution components along different eigendirections. (b) Optimization trajectory in parameter space.}
    \label{fig:GD-1d-2d-trajectory}
\end{figure}

\textbf{Gradient-direction Curvature  vs. Update-direction Curvature for Loss Spike Prediction}

For Adam, where the Hessian is preconditioned, we define the predictor as 
\begin{equation*}
\lambda_{\text{grad}}(\hat{\boldsymbol{H}}) := \frac{\nabla L(\boldsymbol{\theta}_t)^\top \hat{\boldsymbol{H}} \nabla L(\boldsymbol{\theta}_t)}{\|\nabla L(\boldsymbol{\theta}_t)\|^2},
\end{equation*}
where \(\hat{\boldsymbol{H}}\) denotes the preconditioned Hessian in Eq.~\eqref{eq:precondition_hessian}.

We also define 
\begin{equation*}
\lambda_{\mathrm{update}}(\hat{\boldsymbol{H}}) := \frac{\boldsymbol{u}_t^\top \hat{\boldsymbol{H}} \boldsymbol{u}_t}{\|\boldsymbol{u}_t\|^2}, 
\end{equation*}
where $\boldsymbol{u}_t = \frac{\hat{\boldsymbol{m}}_t}{\sqrt{\hat{\boldsymbol{v}}_t}+\varepsilon}$ is the update vector.

To validate our quadratic approximation-based predictor, we tracked the eigenvalue evolution of the preconditioned Hessian throughout training. Fig.~\ref{app:fig:Adam-50-dim-update}(b) reveals that while $\lambda_{\max}(\boldsymbol{H}_t)$ quickly stabilizes, $\lambda_{\max}(\hat{\boldsymbol{H}}_t)$ continues to increase steadily. Notably, $\lambda_{\max}(\hat{\boldsymbol{H}}_t)$ surpasses the stability threshold $\frac{2}{\eta}$ at epoch 179, yet no immediate spike occurs. At epoch 184, precisely when $\lambda_{\mathrm{grad}}(\hat{\boldsymbol{H}}_t)$ exceeds $\frac{2}{\eta}$, we observe the loss spike depicted in Fig.~\ref{app:fig:Adam-50-dim-update}(a). Subsequently, the eigenvalue $\lambda_{\mathrm{update}}(\hat{\boldsymbol{H}}_t)$ in the parameter update direction also exceeds $\frac{2}{\eta}$.

This demonstrates that the eigenvalue in the gradient direction more accurately predicts the onset of the actual spike. The update direction requires time to respond to changes in the gradient. When $\lambda_{\mathrm{update}}$ exceeds $2/\eta$, the loss spike has already occurred.

\begin{figure}[htb]
    \centering
    \subfloat[Loss]{\includegraphics[height=0.25\linewidth]{Figures/paper/Adam-FNN-50-dim/loss.pdf}}\hspace{1cm}
    \subfloat[Eigenvalues]{\includegraphics[height=0.25\linewidth]{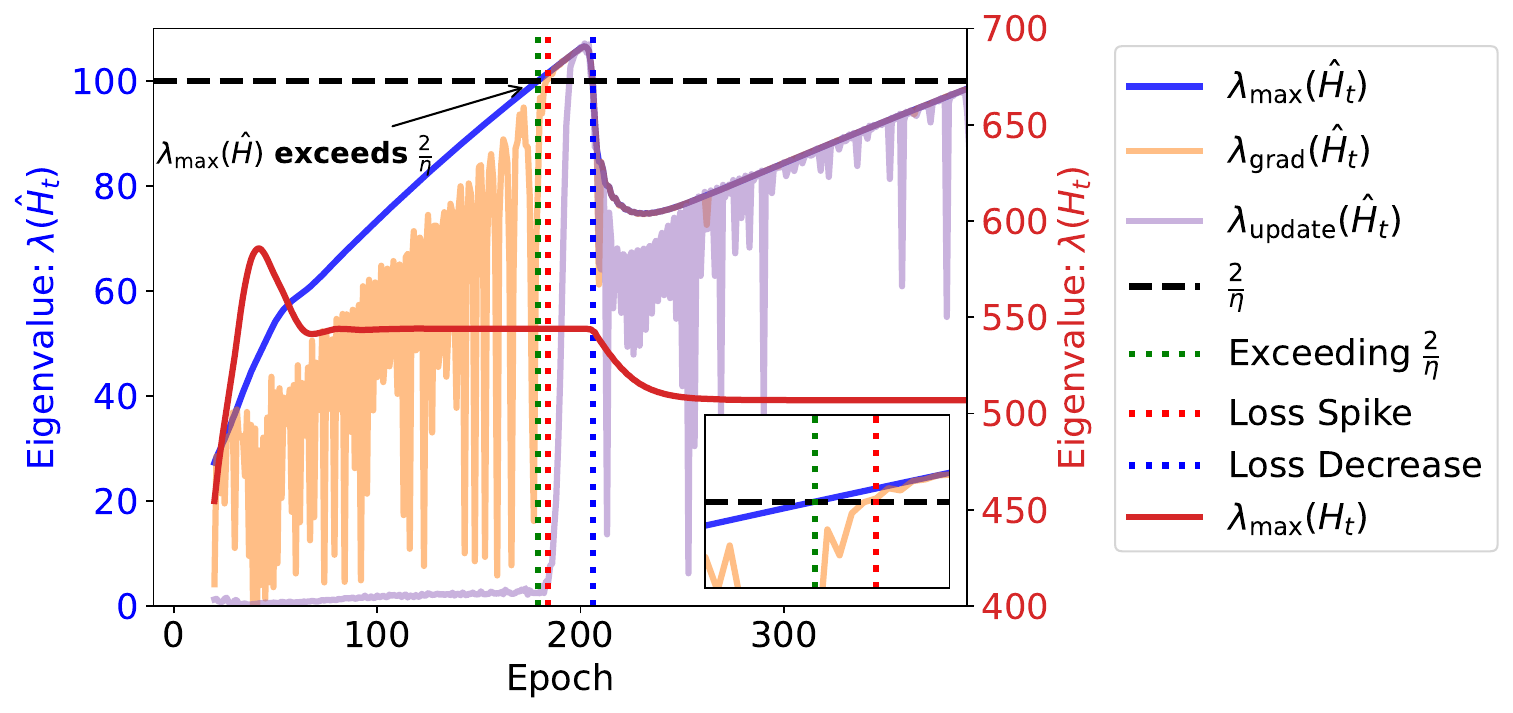}}
    \caption{(a) Training loss and gradient norm over time. (b) Evolution of critical eigenvalues: original Hessian maximum eigenvalue $\lambda_{\max}(\boldsymbol{H}_t)$, preconditioned Hessian maximum eigenvalue $\lambda_{\max}(\hat{\boldsymbol{H}_t})$, gradient-directional eigenvalue $\lambda_{\mathrm{grad}}(\hat{\boldsymbol{H}_t})$ and update-directional eigenvalue $\lambda_{\mathrm{update}}(\hat{\boldsymbol{H}_t})$ relative to $2/\eta$.}
    \label{app:fig:Adam-50-dim-update}
\end{figure}

\textbf{MNIST Experiment}

We trained an MLP on MNIST using Adam hyperparameters $\beta_1=0.9, \beta_2=0.999$. As shown in Fig.~\ref{fig:MNIST_MLP}(a), the optimization performs with an initial loss decrease followed by three distinct spikes. Analysis of the preconditioned Hessian's eigenvalues (Fig.~\ref{fig:MNIST_MLP}(b)) shows $\lambda_{\max}(\boldsymbol{H}_t)$ remaining below the stability threshold $2/\eta$, while $\lambda_{\max}(\hat{\boldsymbol{H}}_t)$ increases until exceeding it. Loss spikes occur precisely when $\lambda_{\mathrm{grad}}(\hat{\boldsymbol{H}}_t)$ surpasses $2/\eta$. Figs.~\ref{fig:MNIST_MLP}(c-d) show the evolution of squared gradients and second-order moments $\sqrt{\hat{\boldsymbol{v}}_t}$ across parameter blocks. Before spikes, $\|\boldsymbol{g}_t\|$ is much smaller than $\|\sqrt{\hat{\boldsymbol{v}}_t}\|$, with $\hat{\boldsymbol{v}}_t$ decaying exponentially at rate $\approx \beta_2$. During spikes, while $\hat{\boldsymbol{v}}_t$ continues decreasing, the gradient norm increases until substantially impacting $\boldsymbol{v}_t$. Subsequently, $\hat{\boldsymbol{v}}_t$ rises, causing $\lambda_{\mathrm{grad}}(\hat{\boldsymbol{H}}_t)$ to fall below $2/\eta$ and allowing loss descent to resume.

\begin{figure}[htb]
    \centering
    \subfloat[Loss]{\includegraphics[height=0.17\textwidth]{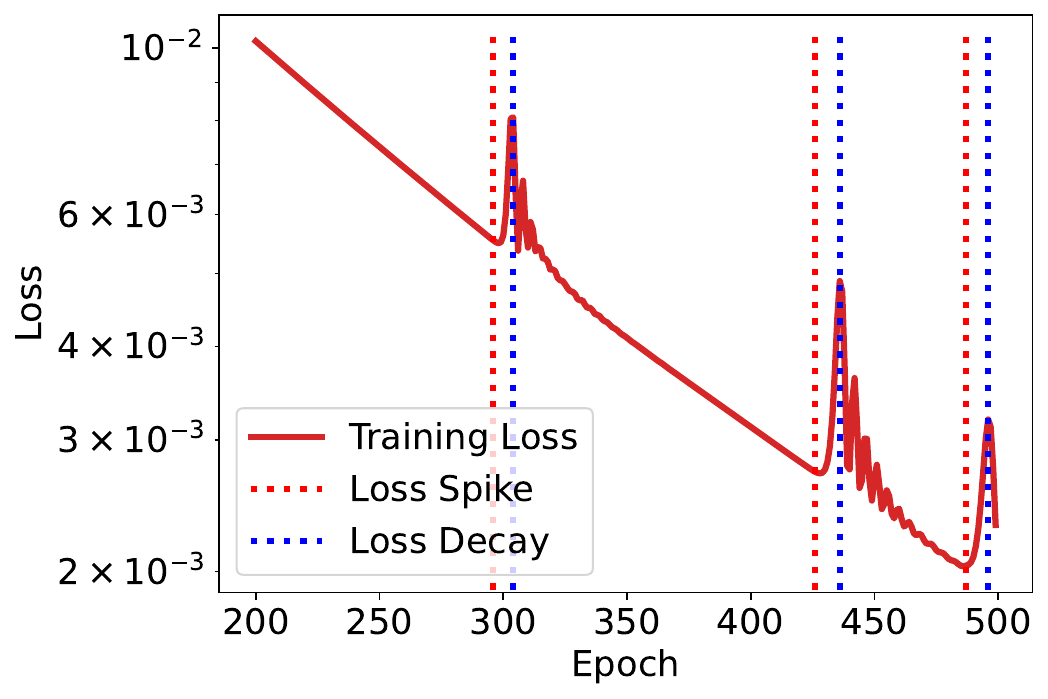}}
    \subfloat[Eigenvalues]{\includegraphics[height=0.17\textwidth]{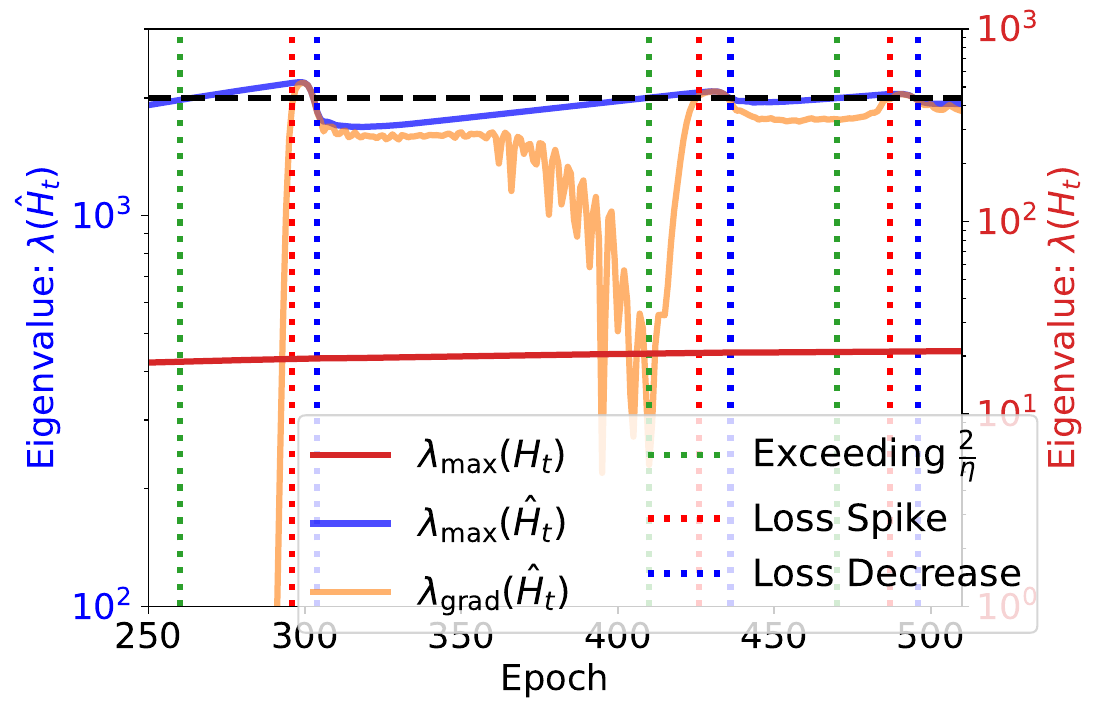}}
    \subfloat[Squared gradient]{\includegraphics[height=0.17\textwidth]{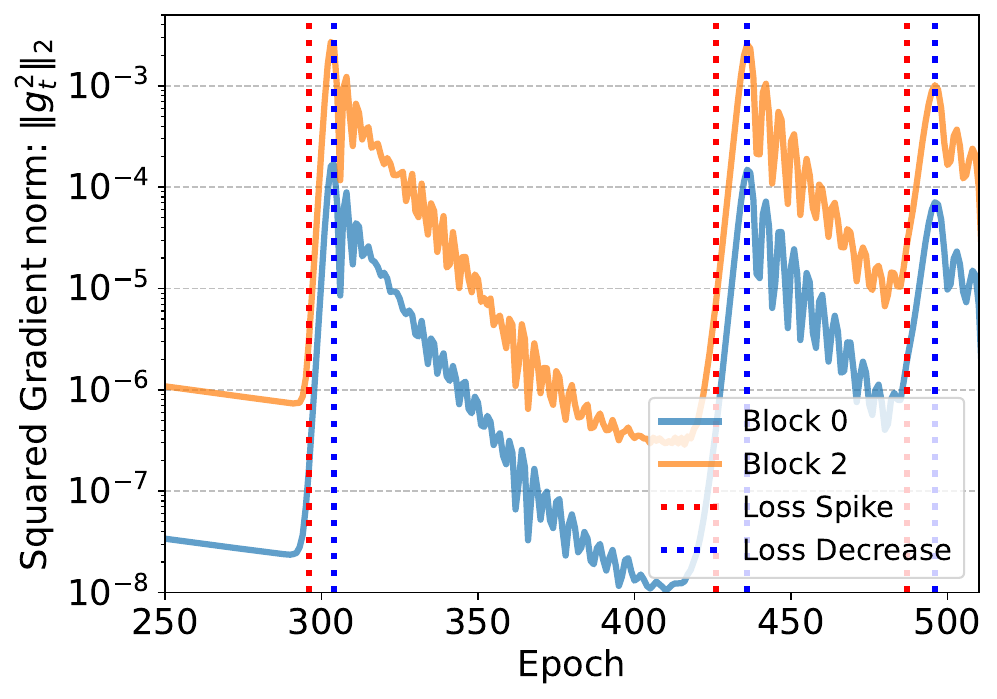}}
    \subfloat[Second moment]{\includegraphics[height=0.17\textwidth]{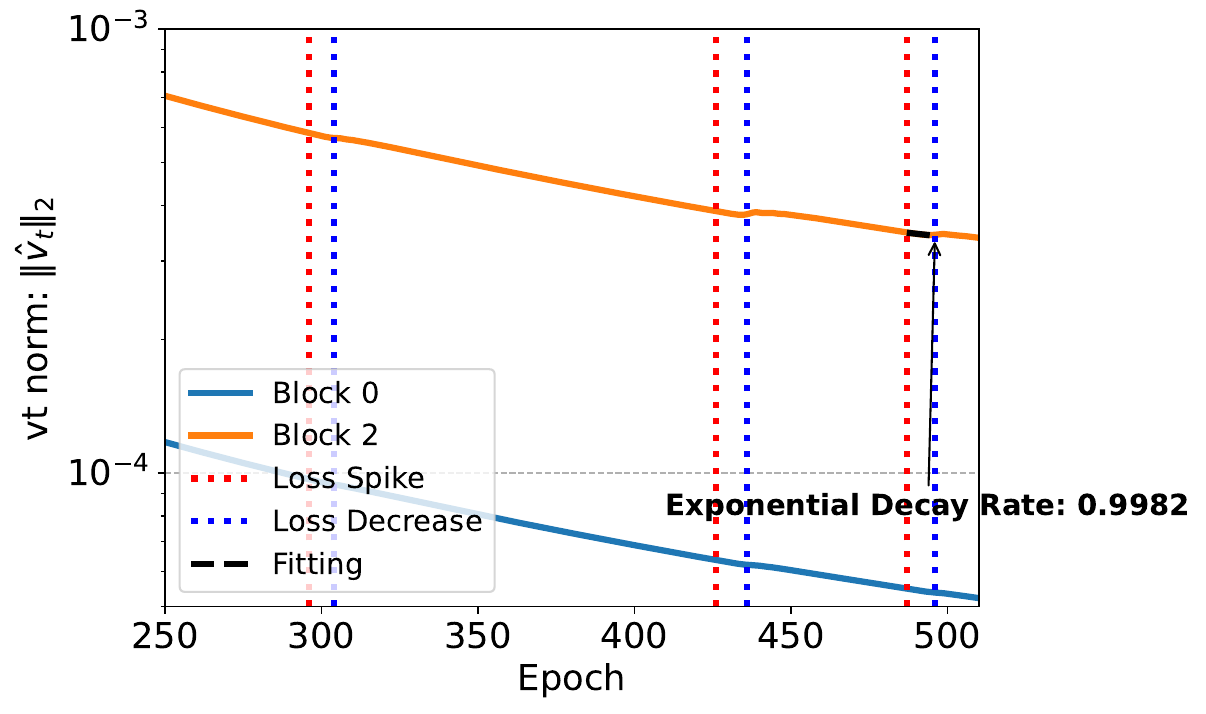}}
    \caption{Training an MLP on 1000 randomly selected MNIST images to illustrate the detailed spikes. (a) Training loss over time. (b) Evolution of eigenvalues: original Hessian maximum eigenvalue $\lambda_{\max}(\boldsymbol{H}_t)$, preconditioned Hessian maximum eigenvalue $\lambda_{\max}(\hat{\boldsymbol{H}_t})$, and gradient-directional eigenvalue $\lambda_{\mathrm{grad}}(\hat{\boldsymbol{H}_t})$ relative to $2/\eta$ (black dashed line). (c) Gradient norm evolution across parameter blocks. (d) $L_2$-norm of second moment estimate $\|\hat{\vv}_t\|$ of different parameter blocks.}
    \label{fig:MNIST_MLP}
\end{figure}

\textbf{CIFAR-10 Experiments}


We trained a convolutional neural network on CIFAR10 using Adam hyperparameters $\beta_1=0.9, \beta_2=0.999$. As shown in Fig.~\ref{fig:CNN_experiment}(a), the optimization follows a pattern similar to FNN, with an initial loss decrease followed by three distinct spikes. Analysis of the preconditioned Hessian's eigenvalues (Fig.~\ref{fig:CNN_experiment}(b)) shows $\lambda_{\max}(\boldsymbol{H}_t)$ remaining below the stability threshold $2/\eta$, while $\lambda_{\max}(\hat{\boldsymbol{H}}_t)$ increases until exceeding it. Loss spikes occur precisely when $\lambda_{\mathrm{grad}}(\hat{\boldsymbol{H}}_t)$ surpasses $2/\eta$. Figs.~\ref{fig:CNN_experiment}(c-d) show the evolution of squared gradients and second-order moments $\sqrt{\hat{\boldsymbol{v}}_t}$ across parameter blocks. Before spikes, $\|\boldsymbol{g}_t\|$ is much smaller than $\|\sqrt{\hat{\boldsymbol{v}}_t}\|$, with $\hat{\boldsymbol{v}}_t$ decaying exponentially at rate $\approx \beta_2$. During spikes, while $\hat{\boldsymbol{v}}_t$ continues decreasing, the gradient norm increases until substantially impacting $\boldsymbol{v}_t$. Subsequently, $\hat{\boldsymbol{v}}_t$ rises, causing $\lambda_{\mathrm{grad}}(\hat{\boldsymbol{H}}_t)$ to fall below $2/\eta$ and allowing loss descent to resume.

\begin{figure}[htb]
    \centering
    \subfloat[Loss]{\includegraphics[height=0.17\textwidth]{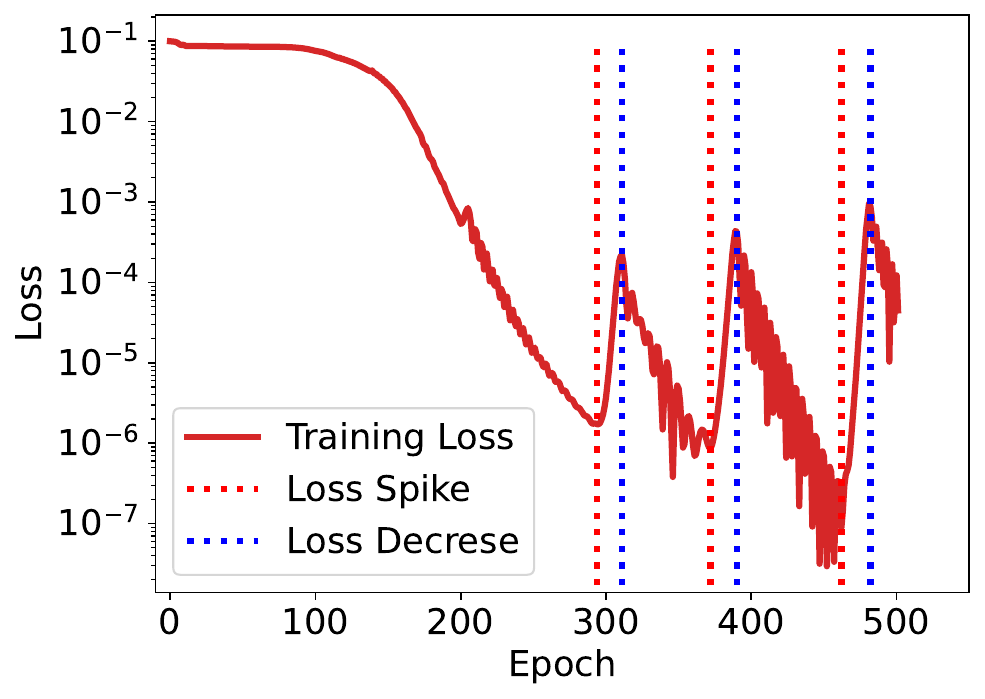}}
    \subfloat[Eigenvalues]{\includegraphics[height=0.17\textwidth]{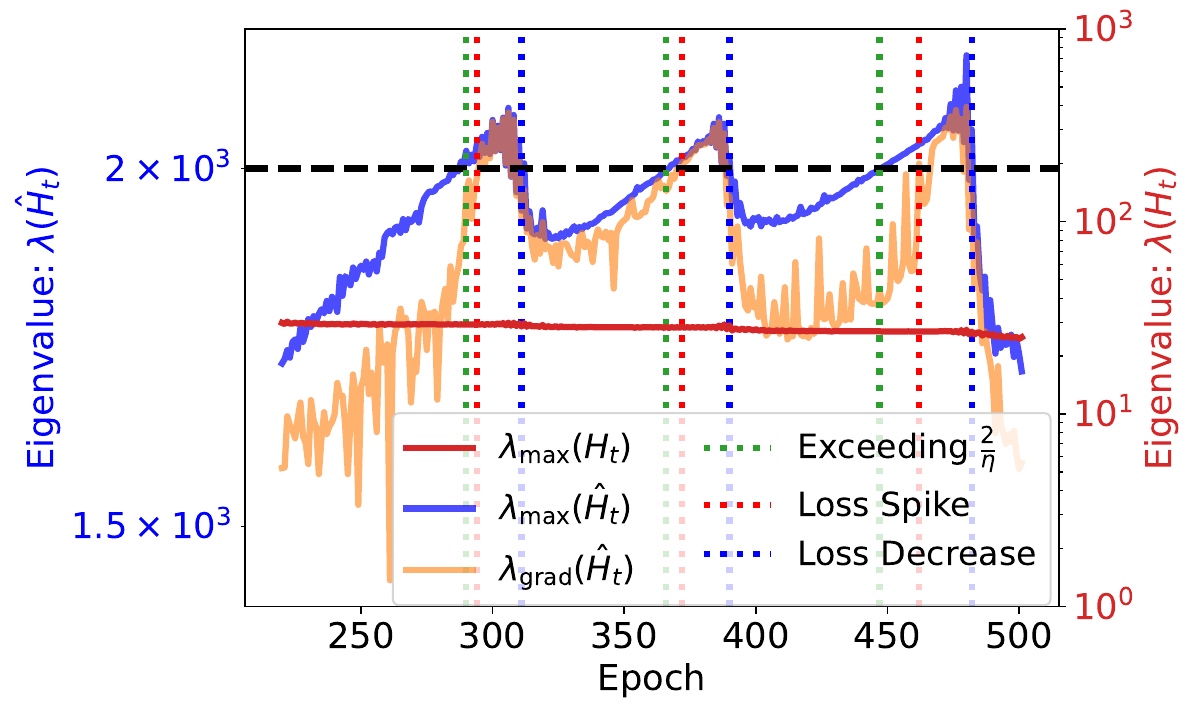}}
    \subfloat[Squared gradient]{\includegraphics[height=0.17\textwidth]{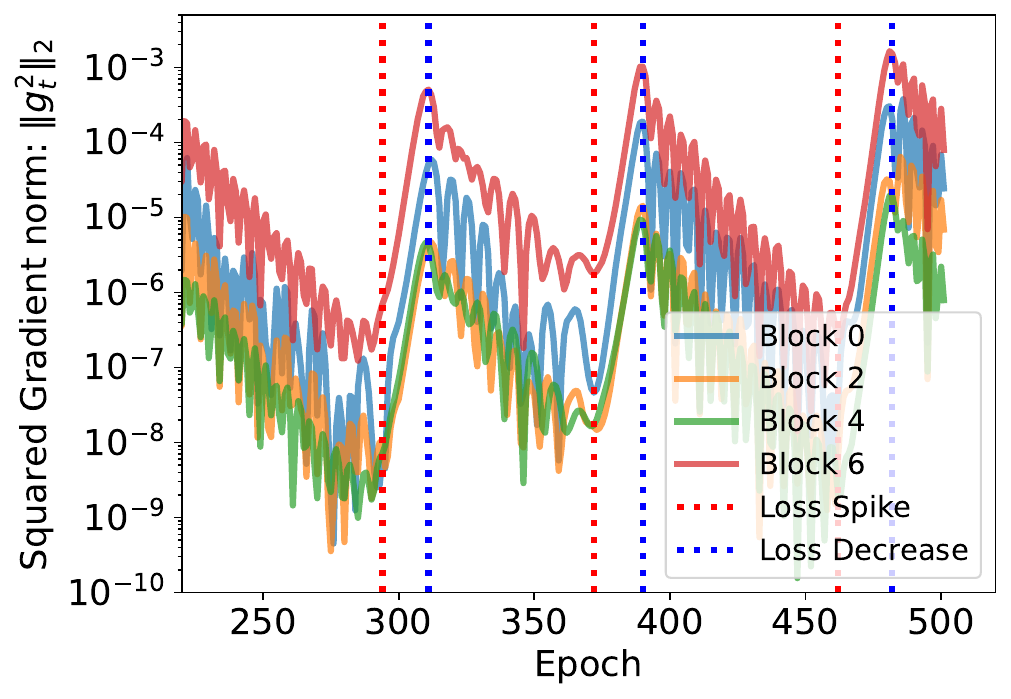}}
    \subfloat[Second moment]{\includegraphics[height=0.17\textwidth]{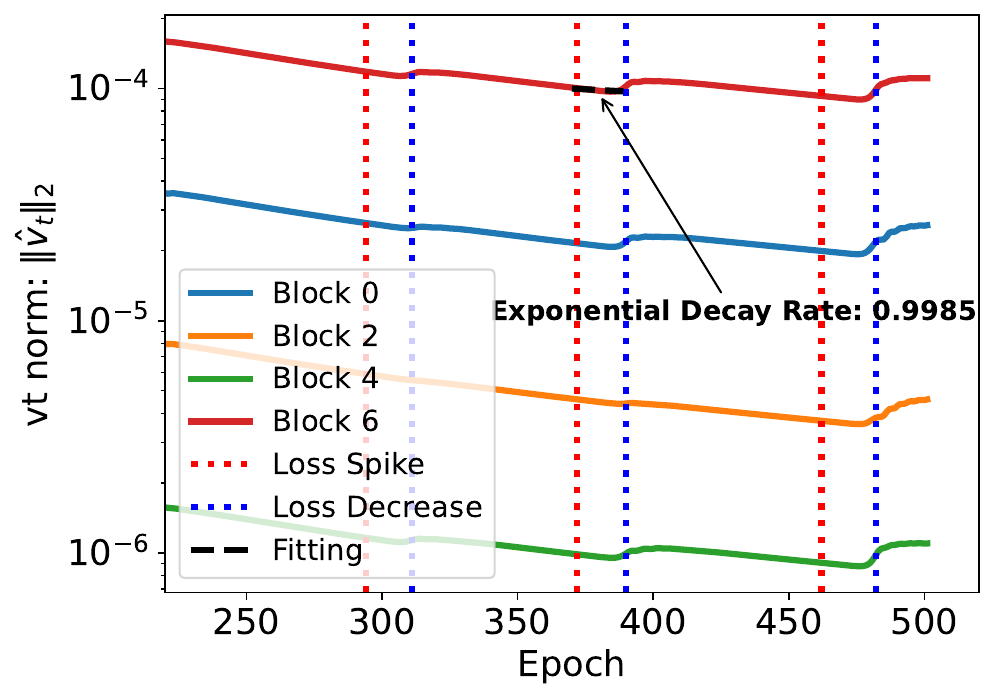}}
    \caption{Training a CNN on 50 randomly selected CIFAR-10 images to illustrate the detailed spikes (see similar result for larger datasets in App.~\ref{app:sec:supp_exp} Fig.~\ref{app:fig:CNN_experiment}). (a) Training loss over time. (b) Evolution of eigenvalues: original Hessian maximum eigenvalue $\lambda_{\max}(\boldsymbol{H}_t)$, preconditioned Hessian maximum eigenvalue $\lambda_{\max}(\hat{\boldsymbol{H}_t})$, and gradient-directional eigenvalue $\lambda_{\mathrm{grad}}(\hat{\boldsymbol{H}_t})$ relative to $2/\eta$ (black dashed line). (c) Gradient norm evolution across parameter blocks. (d) $L_2$-norm of second moment estimate $\|\hat{\vv}_t\|$ of different parameter blocks.}
    \label{fig:CNN_experiment}
\end{figure}

\begin{figure}[htb]
    \centering
    \subfloat[Loss]{\includegraphics[height=0.17\textwidth]{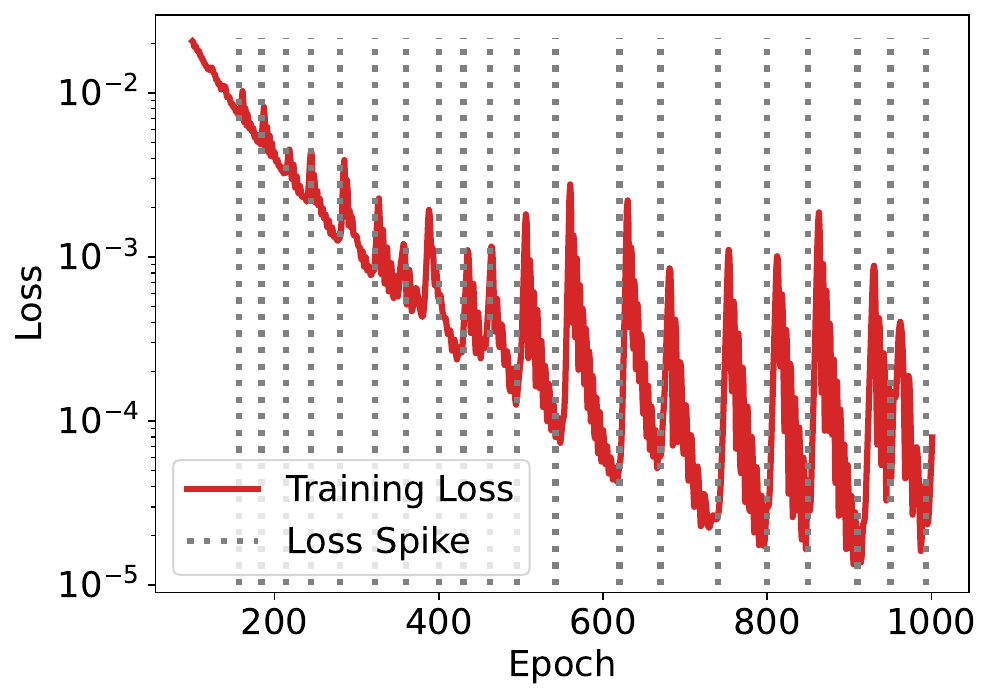}}
    \subfloat[Eigenvalues]{\includegraphics[height=0.17\textwidth]{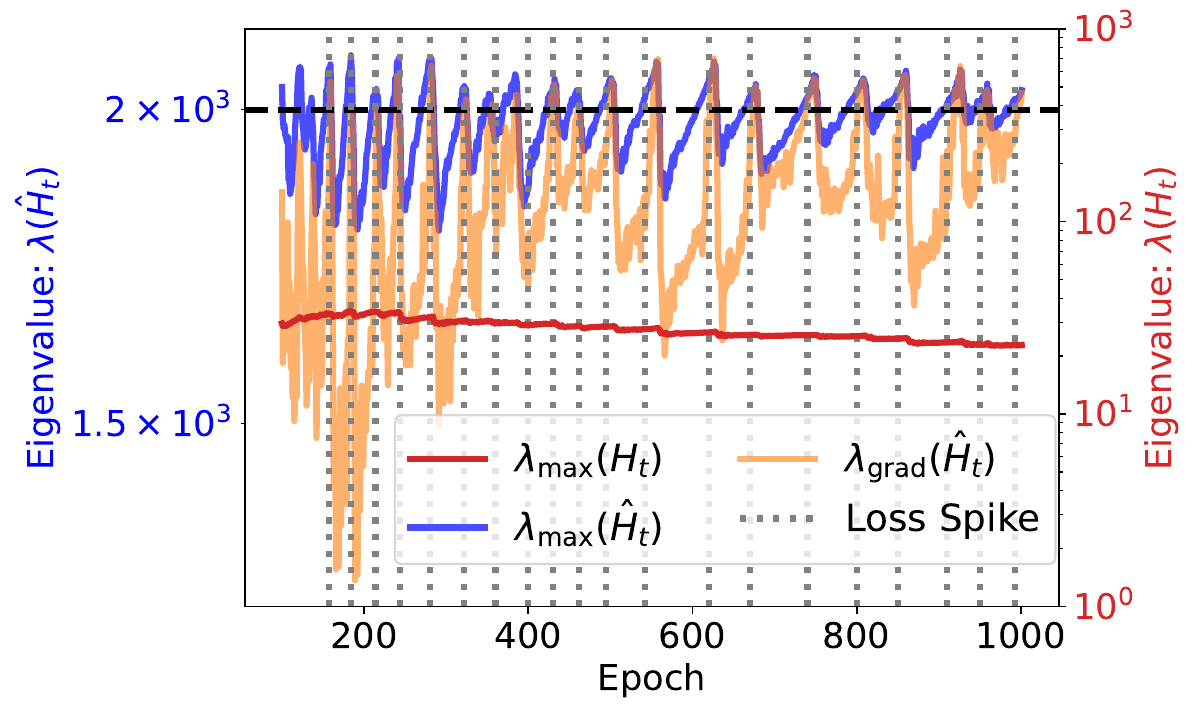}}
    \subfloat[Squared gradient]{\includegraphics[height=0.17\textwidth]{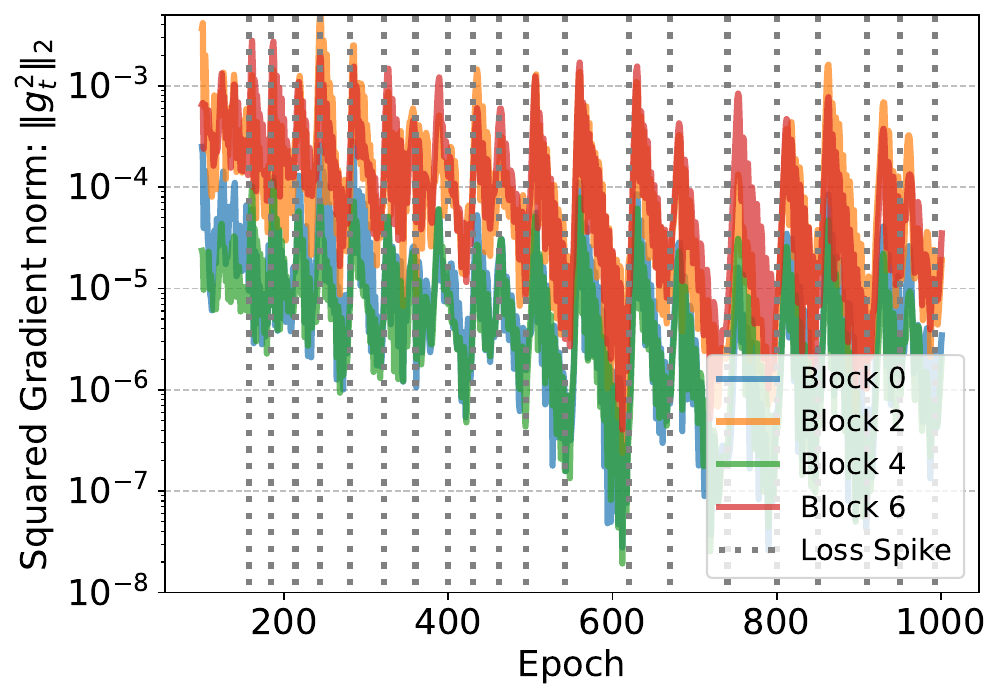}}
    \subfloat[Second moment]{\includegraphics[height=0.17\textwidth]{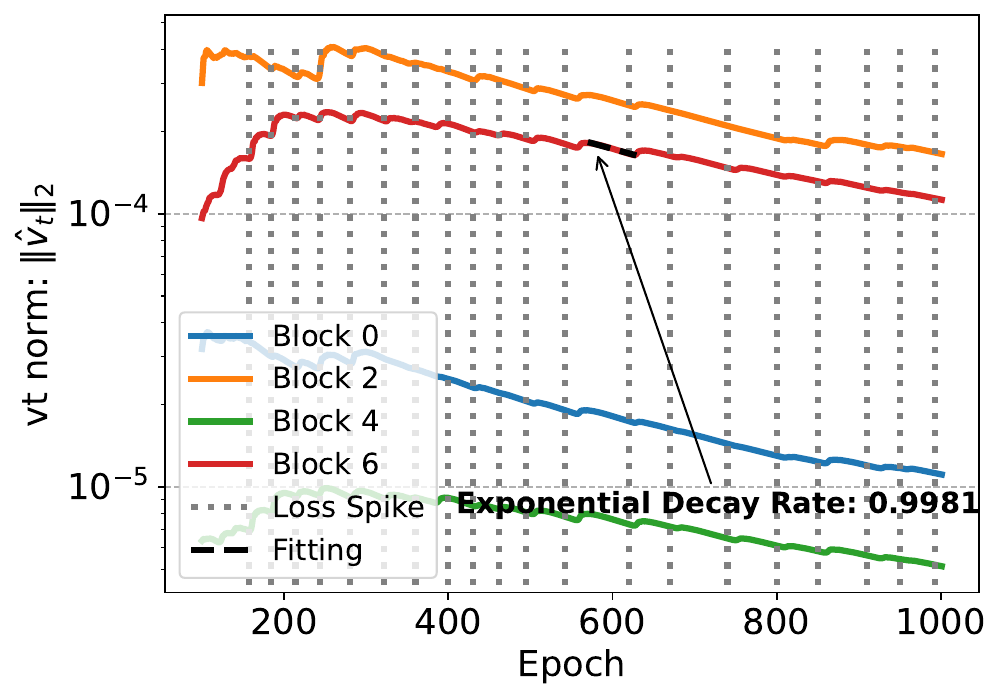}}
    \caption{Loss spike in CNNs on CIFAR10 for randomly sampled 1000 images. (a) Temporal evolution of training loss. (b) Progression of critical eigenvalue metrics: original Hessian maximum eigenvalue $\lambda_{\max}(\boldsymbol{H}_t)$, preconditioned Hessian maximum eigenvalue $\lambda_{\max}(\hat{\boldsymbol{H}_t})$, and gradient-directional eigenvalue $\lambda_{\mathrm{grad}}(\hat{\boldsymbol{H}_t})$ relative to the stability threshold $\frac{2}{\eta}$ (black dashed line). (c) Temporal evolution of gradient norm of different parameter blocks. (d) $L_2$-norm of second moment $\|\hat{\vv}_t\|$ of different parameter blocks.}
    \label{app:fig:CNN_experiment}
\end{figure}

\textbf{Transformer Models for Sequence Learning}
\begin{figure}[htb]
    \centering
    \subfloat[Eigenvalues]{\includegraphics[width=0.99\linewidth]{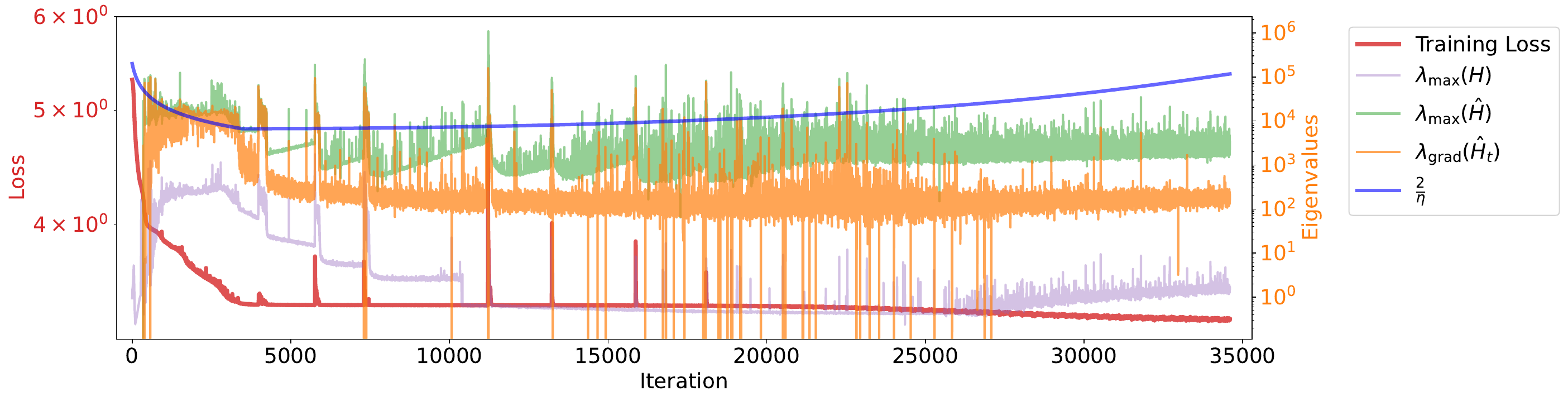}}\\
    \subfloat[$\lambda_{\mathrm{grad}}(\hat{\boldsymbol{H}}_t)$]{\includegraphics[width=0.99\linewidth]{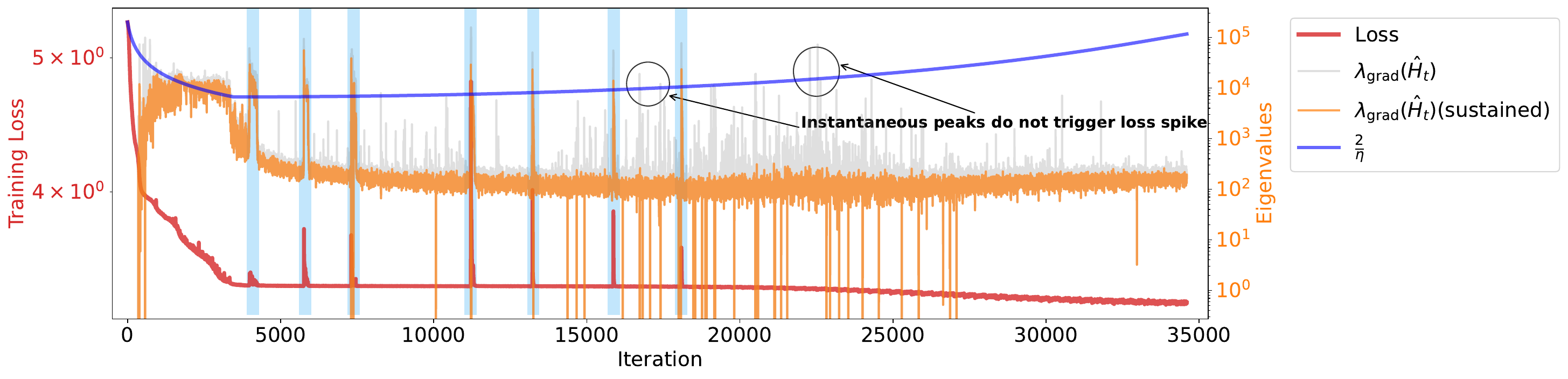}}
    \caption{(a) Evolution of critical eigenvalues: original Hessian maximum eigenvalue $\lambda_{\max}(\boldsymbol{H}_t)$, preconditioned Hessian maximum eigenvalue $\lambda_{\max}(\hat{\boldsymbol{H}_t})$ and gradient-directional eigenvalue $\lambda_{\mathrm{grad}}(\hat{\boldsymbol{H}_t})$ relative to $2/\eta$. (b) Gradient-directional eigenvalues $\lambda_{\mathrm{grad}}(\hat{\boldsymbol{H}}_t)$ (gray) and sustained predictor $\lambda_{\mathrm{grad}}(\hat{\boldsymbol{H}}_t)(\text{sustained})$ (orange) vs. $2 / \eta$.}
    \label{app:fig:Transformer_spike}
\end{figure}

For the experiment illustrated in Fig.~\ref{fig:Transformer-spike}, Fig.~\ref{app:fig:Transformer_spike} presents the complete evolution of all eigenvalues, along with detailed views of each spike in Fig.~\ref{fig:Transformer-spike}(c-e) and Fig.~\ref{app:fig:Transformer_spike_zoom_in}(a-d).

As depicted in Fig.~\ref{app:fig:Transformer_spike_zoom_in}(a-d), we found that transient periods where $\lambda_{\max}(\hat{\boldsymbol{H}}_t)$ and $\lambda_{\text{grad}}(\hat{\boldsymbol{H}}_t)$ exceed $2/\eta$ are insufficient to induce a spike. Loss spikes only materialize when $\lambda_{\text{grad}}(\hat{\boldsymbol{H}}_t)$ remains above the threshold for a sustained duration. This observation aligns with stability analysis principles, which suggest that loss increases exponentially only after persistent instability, with isolated threshold violations being insufficient to trigger rapid loss elevation. Based on this insight, we formulated a ``sustained spike predictor'' defined as:
\begin{equation*}
\lambda_{\mathrm{grad}}(\hat{\boldsymbol{H}}_t)({\text{sustained}}) = \min(\lambda_{\mathrm{grad}}(\hat{\boldsymbol{H}}_{t-1}), \lambda_{\mathrm{grad}}(\hat{\boldsymbol{H}}_t), \lambda_{\mathrm{grad}}(\hat{\boldsymbol{H}}_{t+1})).
\end{equation*}
This refined predictor demonstrates perfect correspondence with loss spike occurrences, as shown by the orange line in Fig.~\ref{app:fig:Transformer_spike}(b).

\begin{figure}[htb]
    \centering
    \subfloat[]{\includegraphics[height = 0.27\linewidth]{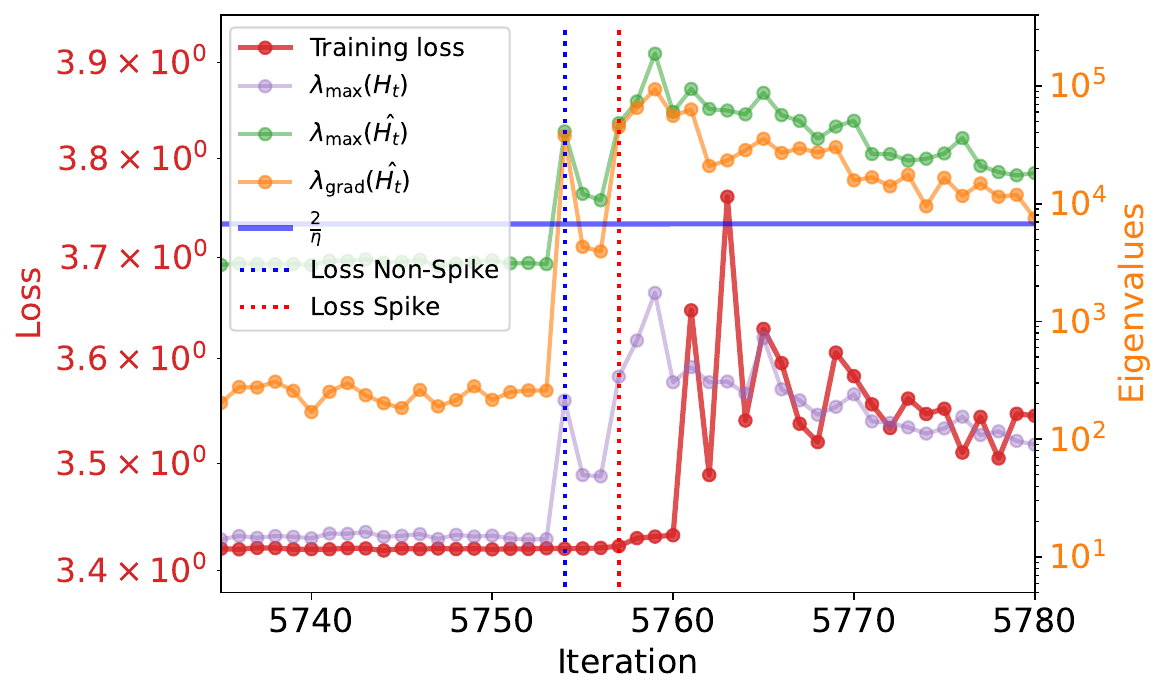}}\hspace{1cm}
    \subfloat[]{\includegraphics[height = 0.27\linewidth]{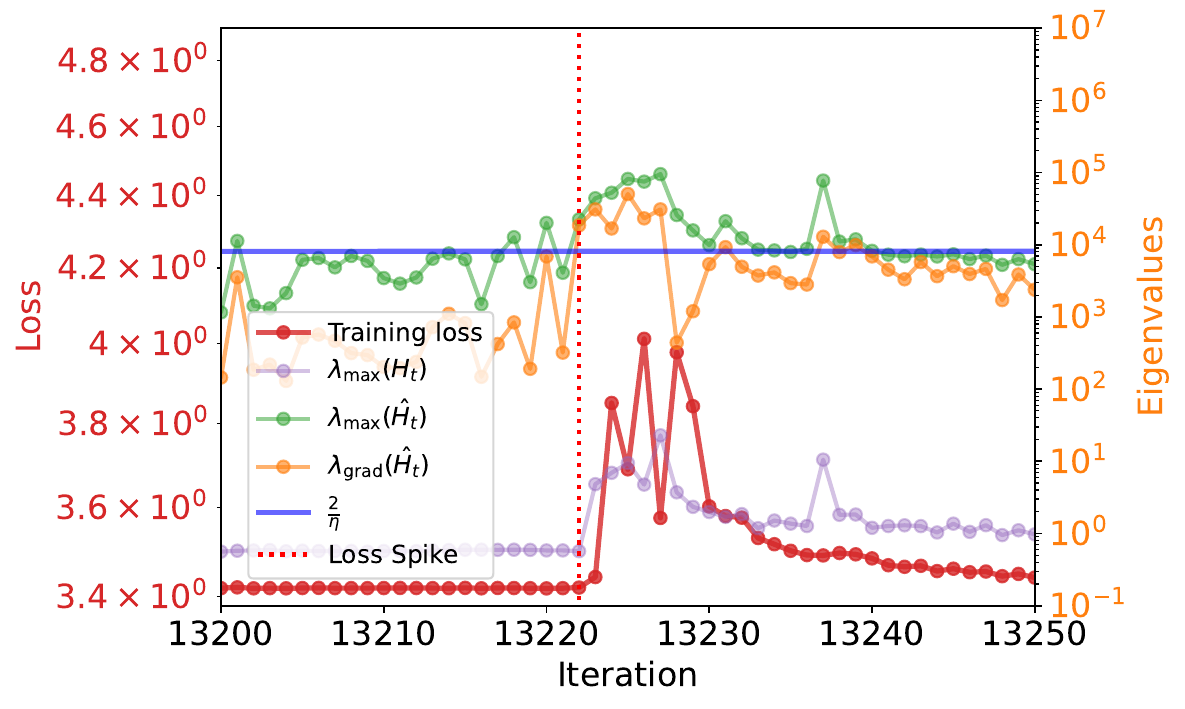}}\\
    \subfloat[]{\includegraphics[height = 0.27\linewidth]{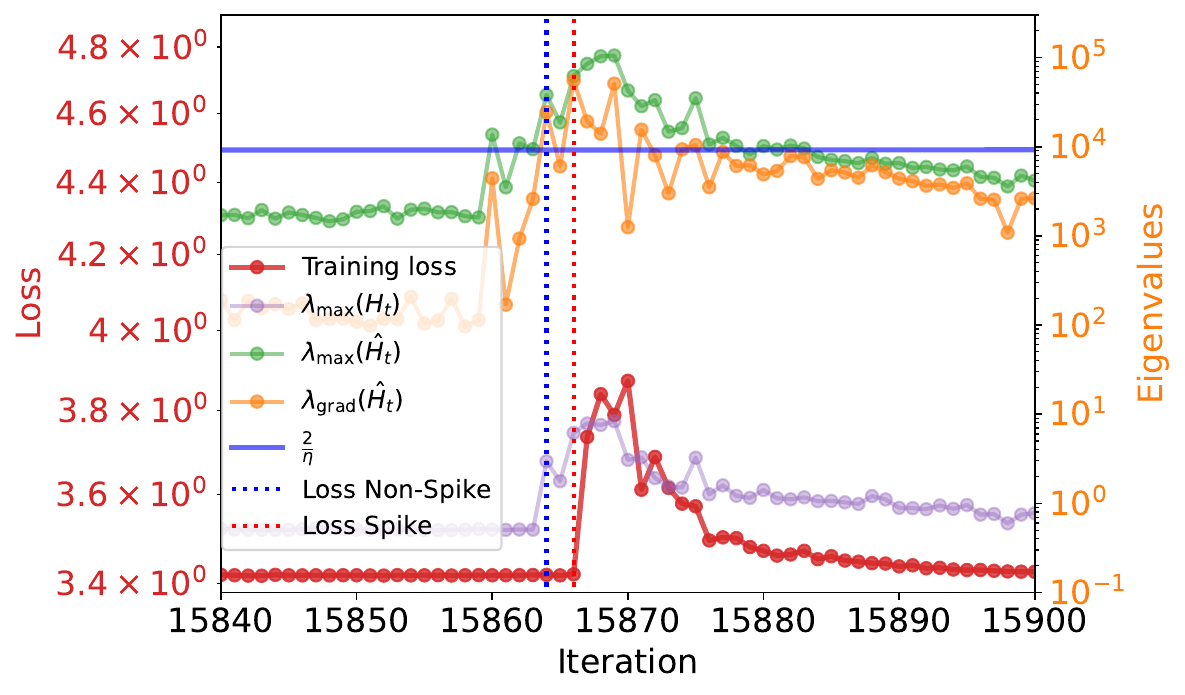}}\hspace{1cm}
    \subfloat[]{\includegraphics[height = 0.27\linewidth]{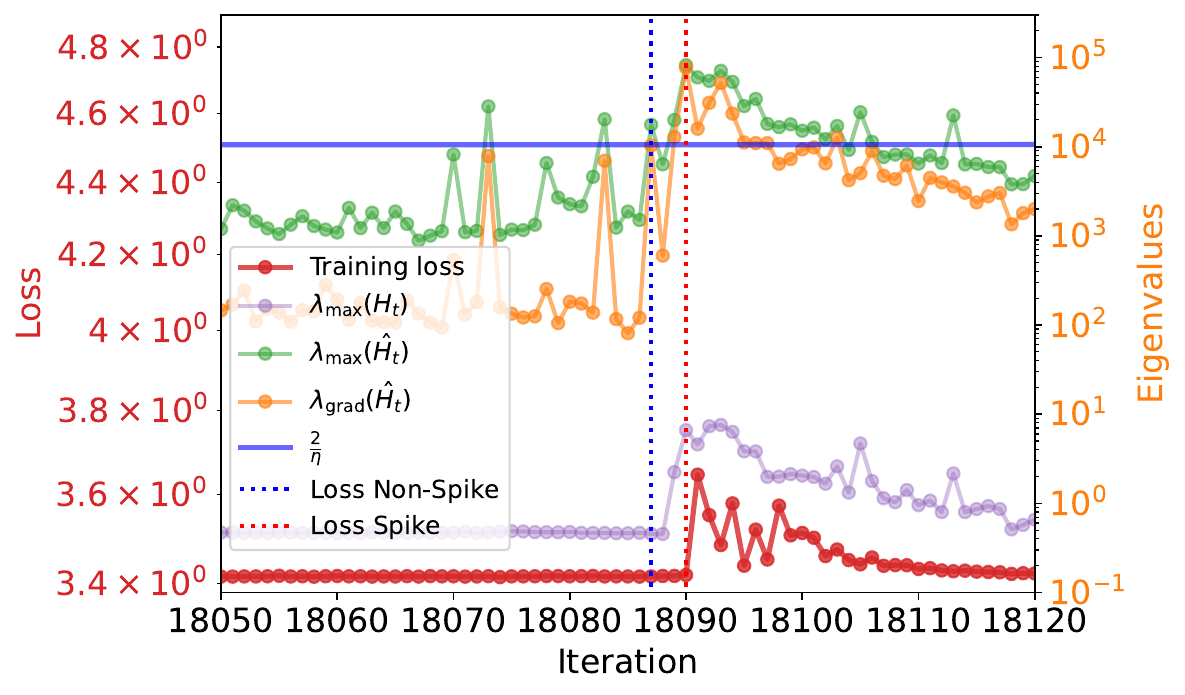}}
    \caption{Detailed inspection of loss spike intervals showing the maximum eigenvalues of the original Hessian $\lambda_{\max}(\boldsymbol{H}_t)$, preconditioned Hessian $\lambda_{\max}(\hat{\boldsymbol{H}}_t)$, and $\lambda_{\mathrm{grad}}(\hat{\boldsymbol{H}}_t)$.}
    \label{app:fig:Transformer_spike_zoom_in}
\end{figure}

\begin{figure}[htb]
    \centering
    \subfloat[]{\includegraphics[width=0.6\linewidth]{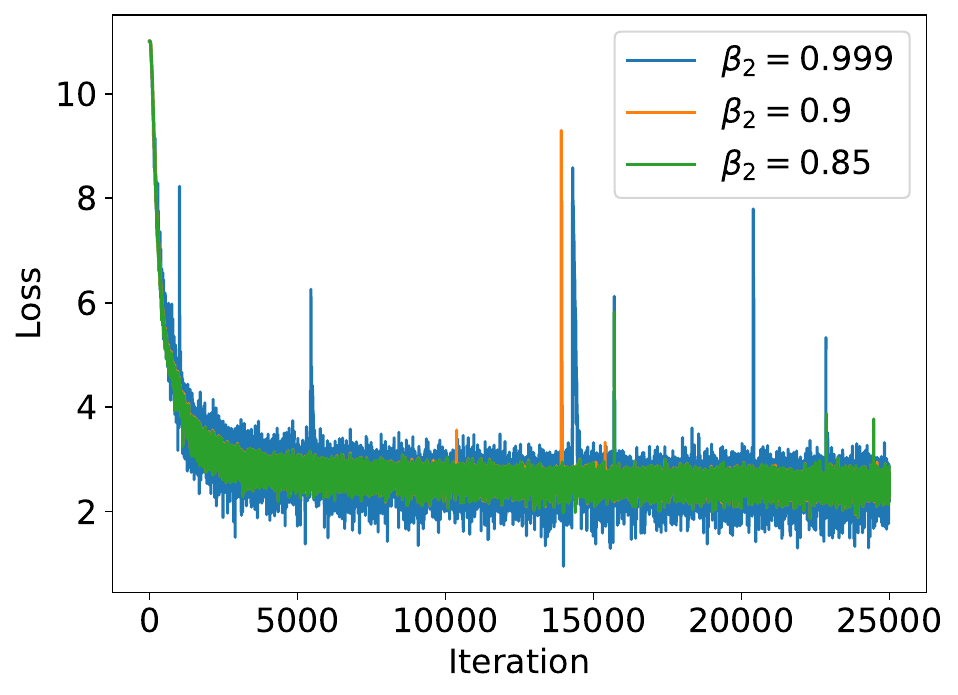}}
    \caption{The raw loss of the Fig.~\ref{fig:LLM-spike}(a).}
    \label{app:fig:LLM_spike}
\end{figure}

\textbf{Controlling Adaptive Preconditioners to Eliminate Spikes}

We discovered that the epsilon parameter (\(\varepsilon\)) in Adam plays a critical role in modulating loss spike behavior. Specifically, using a larger \(\varepsilon\) significantly reduces spike severity by effectively imposing an upper bound on the preconditioned eigenvalues. Additionally, we experimented with component-wise clipping of \(\boldsymbol{v}_t\), where elements falling below a specified threshold are clipped to that threshold value.

\begin{figure}[htb]
    \centering
    \subfloat[]{\includegraphics[height=0.27\textwidth]{Figures/paper/Adam-FNN-50-dim/loss_change_eps.pdf}}\hspace{1.5cm}
    \subfloat[]{\includegraphics[height=0.27\textwidth]{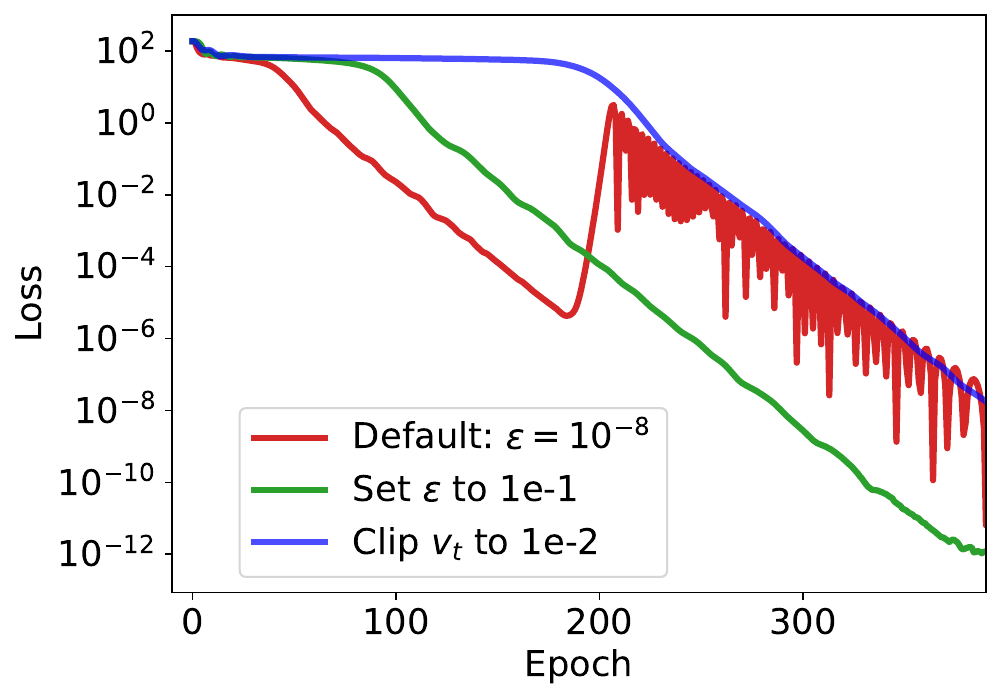}}
    \caption{The training loss with the same experiment settings as Fig.~\ref{fig:two-layer-NN}. (a) The only difference of the blue solid line is that we change the $\varepsilon$ in Adam to $0.1$ at epoch $184$ where the loss in the original training process begin to spike. (b) The green solid line is the training loss that we change the $\varepsilon$ to $0.1$ at the beginning of the training. The blue solid line is the training loss that we clip the $v_t$ in Adam to $0.01$.}
    \label{fig:two-layer-NN_changeeps}
\end{figure}

As shown in Fig.~\ref{fig:two-layer-NN_changeeps}(a), locally increasing \(\varepsilon\) during training can effectively suppress loss spikes. Fig.~\ref{fig:two-layer-NN_changeeps}(b) further demonstrates that increasing \(\varepsilon\) or applying \(\boldsymbol{v}_t\) clipping from the beginning of training can also mitigate spike behavior, although this may come at the cost of slower convergence.

\textbf{Optimization of Quadratic Function with different adaptive optimizer}

We also investigate the spiking behavior of three additional adaptive optimizers on quadratic functions, as illustrated in Figs.~\ref{app:fig:adagrad}, \ref{app:fig:extra_adap_method}, and \ref{app:fig:extra_adapfactor_method}. For the adafactor, we use one dimensional tensor form.


\textbf{Adagrad}
\begin{equation}
   \begin{aligned}
    \boldsymbol{v}_t &= \boldsymbol{v}_{t-1} + \boldsymbol{g}_t^2 \\
    \boldsymbol{\theta}_{t+1} &= \boldsymbol{\theta}_t - \eta \frac{\boldsymbol{g}_t}{\sqrt{\boldsymbol{v}_t} + \varepsilon}
\end{aligned} 
\label{eq:adagrad}
\end{equation}
\textbf{RMSProp}
\begin{equation}
    \begin{aligned}
    \boldsymbol{v}_t &= \beta_2 \boldsymbol{v}_{t-1} + (1 - \beta_2)\boldsymbol{g}_t^2 \\
    \boldsymbol{\theta}_{t+1} &= \boldsymbol{\theta}_t - \eta \frac{\boldsymbol{g}_t}{\sqrt{\boldsymbol{v}_t} + \varepsilon}
\end{aligned}
\label{eq:rmsprop}
\end{equation}
\textbf{Adafactor}
\begin{equation}
    \begin{aligned}
    \boldsymbol{v}_t &= \beta_{2} \boldsymbol{v}_{t-1} + (1 - \beta_{2})\boldsymbol{g}_t^2 \\
    \boldsymbol{u}_t &= \frac{\boldsymbol{g}_t}{\sqrt{\boldsymbol{v}_t} + \varepsilon_1} \\
    \hat{\boldsymbol{u}}_t &= \frac{\boldsymbol{u}_t}{\max\left(1, \frac{\text{RMS}(\boldsymbol{u}_t)}{d}\right)} \\
    \rho_t &= \max(\varepsilon_2, \text{RMS}(\boldsymbol{\theta}_t)) \\
    \alpha_t &= \eta_t \cdot \rho_t \\
    \boldsymbol{\theta}_{t+1} &= \boldsymbol{\theta}_t - \alpha_t \hat{\boldsymbol{u}}_t
    \end{aligned}
    \label{eq:adafactor}
\end{equation}

\section{Experimental Setup}
\label{app:experimental_setup}
All experiments were conducted on $1$ NVIDIA RTX 4080 GPU. The runtime varied across tasks, ranging from a few minutes for smaller models to several days for large-scale training.

Computing the full Hessian matrix for large-scale neural networks is computationally prohibitive due to its quadratic memory complexity. To address this challenge, we employ an efficient power iteration method combined with Hessian-vector products that leverages automatic differentiation, circumventing the explicit construction of the complete Hessian matrix. 

\paragraph{Setup for Fig.~\ref{fig:two-layer-NN} and Fig.~\ref{fig:multi-spike}(a).}  
We trained two-layer fully connected neural network applied to a high-dimensional function approximation task. The target function is defined as \( f^*(\vx) = \vw^{*\top} \vx + \vx^\top \text{diag}(\vv^*) \vx \), where \( \vw^*, \vv^* \in \mathbb{R}^{50} \) are the ground-truth parameters and \( \vx \in \mathbb{R}^{50} \) denotes the input features. A total of \( n = 200 \) data points are sampled, with inputs drawn from a standard Gaussian distribution. Gaussian noise with standard deviation \( \varepsilon = 0.1 \) is added to the outputs. The network has a hidden layer width of \( m = 1000 \), placing it in the over-parameterized regime. All weights are initialized from a Gaussian distribution \( \mathcal{N}(0, \frac{1}{m}) \). Training is performed using full-batch Adam with a learning rate of \( \eta = 0.02 \), and momentum parameters \( \beta_1 = 0.9 \), \( \beta_2 = 0.999 \).

\paragraph{Setup for Fig.~\ref{fig:CNN_experiment} and Fig.~\ref{fig:multi-spike}(b).}  
We trained a convolutional neural network on the CIFAR-10 dataset. For computational tractability in computing Hessian eigenvalues, we restricted the training set to $50$ randomly sampled images. The network contains approximately $500,000$ parameters and is trained using Mean Squared Error (MSE) loss with one-hot encoded labels. Optimization is performed using full-batch Adam with a learning rate of \( \eta = 0.001 \) and default momentum parameters \( \beta_1 = 0.9 \), \( \beta_2 = 0.999 \).

\paragraph{Setup for Fig.~\ref{app:fig:vit_resnet}(a,b).
We trained a ViT on the CIFAR-10 dataset. The ViT consists of $4$ layers and $8$ heads. The embedding dimension is $64$. The network is trained using Mean Squared Error (MSE) loss with one-hot encoded labels. Optimization is performed using full-batch Adam with a learning rate of \( \eta = 0.001 \) and default momentum parameters \( \beta_1 = 0.9 \), \( \beta_2 = 0.999 \).}

\paragraph{Setup for Fig.~\ref{app:fig:vit_resnet}(c,d).
We trained a ResNet on the CIFAR-10 dataset. The network is trained using Mean Squared Error (MSE) loss with one-hot encoded labels. Optimization is performed using full-batch Adam with a learning rate of \( \eta = 0.001 \) and default momentum parameters \( \beta_1 = 0.9 \), \( \beta_2 = 0.999 \).}

\paragraph{Setup for Fig.~\ref{fig:Transformer-spike} and Fig.~\ref{fig:multi-spike}(d).}  
We implemented an $8$-layer standard Transformer with approximately $10$ million parameters. The model is trained on a synthetic dataset designed to learn compositional rules from sequences~\citep{zhang2024anchorfunctiontypebenchmark}, consisting of $900,000$ sequences. Training uses a batch size of 2048 and follows the next-token prediction paradigm with cross-entropy loss. The learning rate follows a linear warm-up stage followed by cosine decay. Optimization is performed using Adam with \( \beta_1 = 0.9 \) and \( \beta_2 = 0.999 \).

\paragraph{Setup for Fig.~\ref{fig:LLM-spike} and Fig.~\ref{app:fig:LLM_spike}} We implemented a LLaMA structure Transformer with 187M non-embedding parameters and trained on 100B data split from SlimPajama. The detailed hyperparameters are shown in Table~\ref{tab:hyperparameters}.
\begin{table}[h]
\centering
\resizebox{0.65\textwidth}{!}{
\renewcommand\arraystretch{1.6}
\begin{tabular}{|l|l|}
\hline
\textbf{Hyperparameter} & \textbf{Value} \\ \hline
Number of Layers & 16 \\ \hline
Hidden Size & 1280 \\ \hline
FFN Inner Hidden Size & 1280 \\ \hline
Attention Heads & 16 \\ \hline
Attention Head Size & 80 \\ \hline
Batch Size & 512 \\ \hline
Learning Rate Scheduler & 10\% Warmup + Cosine Annealing \\ \hline
Adam $\beta_1$ & 0.9 \\ \hline
Adam $\beta_2$ & 0.999; 0.9; 0.85 \\ \hline
Adam $\epsilon$ & $10^{-8}$ \\ \hline
Gradient Clipping & 1.00 \\ \hline
\end{tabular}
}
\caption{Detailed Hyperparameters for the 187M Transformer.}
\label{tab:hyperparameters}
\end{table}

\paragraph{Setup for Fig.~\ref{fig:spike-classification}, Fig.~\ref{app:fig:3x2x}} and Fig.~\ref{fig:multi-spike}(c).  
We further evaluate our theoretical insights using $4$-layer (Fig.~\ref{fig:spike-classification}, Fig.~\ref{app:fig:3x2x}) and $12$-layer ((Fig.~\ref{fig:spike-classification}, Fig.~\ref{fig:multi-spike}(c))) standard Transformers trained on a synthetic classification task. The dataset is constructed to learn a specific anchor rule (\( 3x \rightarrow x \)) from sequences~\citep{zhang2024anchorfunctiontypebenchmark}, comprising $2,000$ sequences. The model is trained using cross-entropy loss. The learning rate follows a linear warm-up followed by cosine decay. Adam is used for optimization with \( \beta_1 = 0.9 \) and \( \beta_2 = 0.999 \).

\paragraph{Setup for Fig.~\ref{fig:vary_beta2}}  
We trained two-layer fully connected neural network applied to a high-dimensional function approximation task. The target function is defined as \( f^*(\vx) = \vw^{*\top} \vx + \vx^\top \text{diag}(\vv^*) \vx \), where \( \vw^*, \vv^* \in \mathbb{R}^{50} \) are the ground-truth parameters and \( \vx \in \mathbb{R}^{50} \) denotes the input features. A total of \( n = 200 \) data points are sampled, with inputs drawn from a standard Gaussian distribution. Gaussian noise with standard deviation \( \varepsilon = 0.1 \) is added to the outputs. The network has a hidden layer width of \( m = 1000 \), placing it in the over-parameterized regime. All weights are initialized from a Gaussian distribution \( \mathcal{N}(0, \frac{1}{m}) \). Training is performed using full-batch Adam with a learning rate of \( \eta = 0.002 \), momentum parameter \( \beta_1 = 0.0 \), and different variations of \( \beta_2\).

\subsection{Practical feasibility of our monitoring approach.} For $\lambda_{\max}(\hat{H}_t)$: We do not need the full spectral information or all eigenvalues. To estimate the maximum eigenvalue, we employ the power iteration method, which requires only multiple Hessian-vector products. Specifically, starting from a random vector $\mathbf{v}_0$, power iteration performs: $$\mathbf{v}_{k+1} = \frac{\hat{H}_t \mathbf{v}_k}{\|\hat{H}_t \mathbf{v}_k\|},$$ and the largest eigenvalue is approximated by $\mathbf{v}_k^\top \hat{H}_t \mathbf{v}_k$. This converges rapidly (typically 5-10 iterations) and each iteration costs only $O(n)$ via automatic differentiation, requiring no explicit Hessian construction. The total cost is $O(kn)$ where $k \ll n$ is the number of power iterations—entirely tractable even for large models.

For our predictor $\lambda_{\mathrm{grad}}(\hat{H}_t)$: The computational cost is even lower. By definition, $$\lambda_{\mathrm{grad}}(\hat{H}_t) = \frac{g_t^\top \hat{H}_t g_t}{\|g_t\|^2},$$ which requires only a single Hessian-vector product $\hat{H}_t g_t$ in the gradient direction. This is precisely ``a single projection'', but this is not a limitation—it is exactly the relevant information for predicting loss spikes. We do not need full spectral information; we only need the curvature in the direction the optimizer is moving, which is captured by this single directional derivative.

\section{New Supplementary Experiments}

\textbf{Compared to research on Edge of Stability (EoS).} Several papers on EoS have noted the close relationship between $\eta$ and $2/\lambda_{\max}(H)$ in modern deep learning as discussed in the main text. However, these phenomena are typically characterized as edge-of-stability behavior, which differs from the large, pronounced loss spikes we observe. The precise relationship between these instabilities and observed spikes remains unclear—instability may manifest as oscillations or as spikes, but the specific mechanism under which spikes occur is not well understood. As shown in our experiment (Figure~\ref{app:fig:3x2x}), the system can remain in the EoS region for extended periods, but spikes occur specifically when the curvature in the gradient direction $\lambda_{\mathrm{grad}}(\hat{\vH}_t)$ exceeds $2/\eta$. Our work reveals how $\lambda_{\mathrm{grad}}(\hat{\vH}_t)$ increases, how larger $\beta_2$ leads to persistent instability and identifies that spikes occur precisely when the curvature in the gradient direction $\lambda_{\mathrm{grad}}(\hat{H}_t)$ exceeds $2/\eta$, rather than $\lambda_{\max}(H)$ as discussed in EoS literature. To our best knowledge, no prior work has explicitly identified these mechanisms.

\textbf{Why understanding quantitative mechanisms matters:} Loss spikes are notoriously difficult to study due to their strong correlations with numerous factors, leading to many seemingly plausible but ambiguous explanations without causal understanding. We emphasize that mechanistic understanding and quantitative prediction are crucial because they typically indicate causality.

\begin{figure}[!htb]
    \centering
    \subfloat[]{\includegraphics[width=0.99\linewidth]{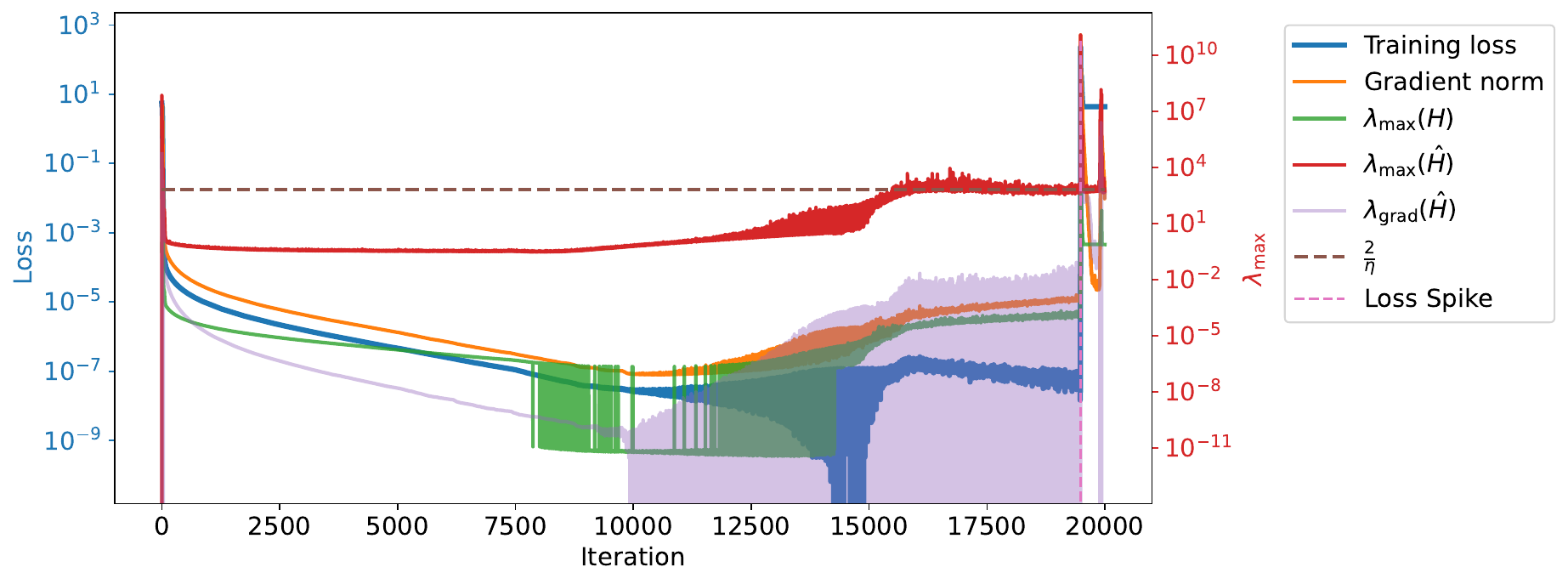}}
    \caption{(a) Evolution of critical eigenvalues of a $3x \to x$ task~\citep{zhang2024anchorfunctiontypebenchmark}: original Hessian maximum eigenvalue $\lambda_{\max}(\boldsymbol{H}_t)$, preconditioned Hessian maximum eigenvalue $\lambda_{\max}(\hat{\boldsymbol{H}_t})$ and gradient-directional eigenvalue $\lambda_{\mathrm{grad}}(\hat{\boldsymbol{H}_t})$ relative to $2/\eta$.}
    \label{app:fig:3x2x}
\end{figure}

\begin{figure}[!htb]
    \centering
    \subfloat[]{\includegraphics[width=0.45\linewidth]{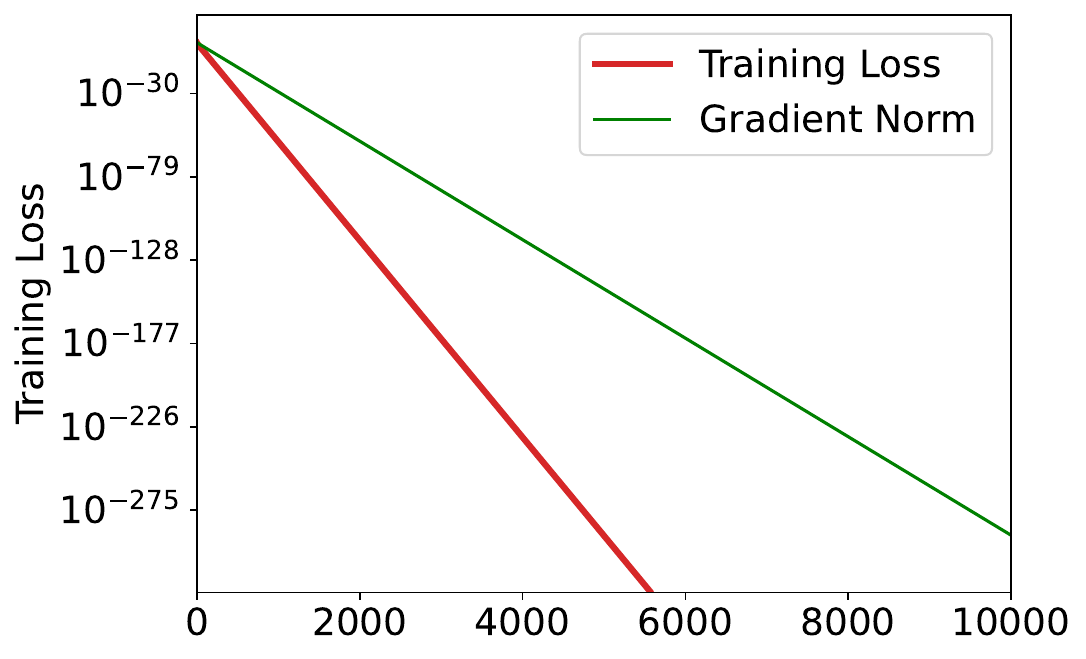}}
    
    \caption{Adagrad optimization (Eq:~\eqref{eq:adagrad})} on \(f(\theta) = \frac{1}{2}\theta^2\). AdaGrad's second-moment estimate follows $v_t = v_{t-1} + g_t^2$, which is a strict accumulation. This ensures the effective learning rate $\eta/\sqrt{v_t}$ can only decrease monotonically over time, precluding the possibility of preconditioner decay that our theory identifies as the root cause of spikes.
    \label{app:fig:adagrad}
\end{figure}

\begin{figure}[!htb]
    \centering
    \subfloat[$\beta_2=0.99$]{\includegraphics[width=0.45\linewidth]{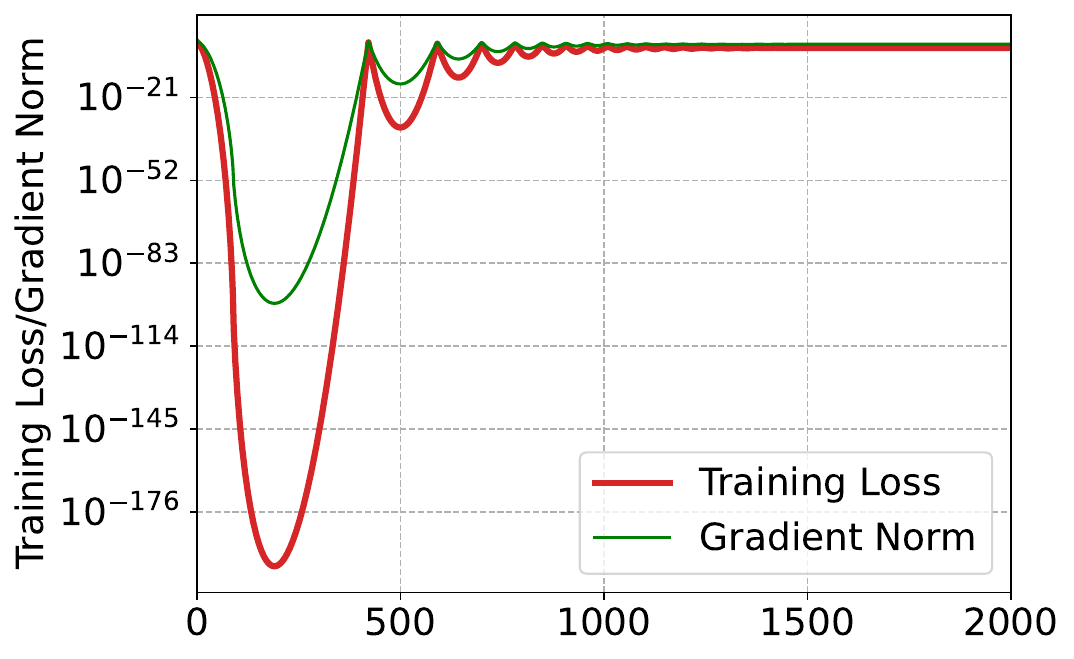}}
    \subfloat[$\beta_2=0.99$]{\includegraphics[width=0.45\linewidth]{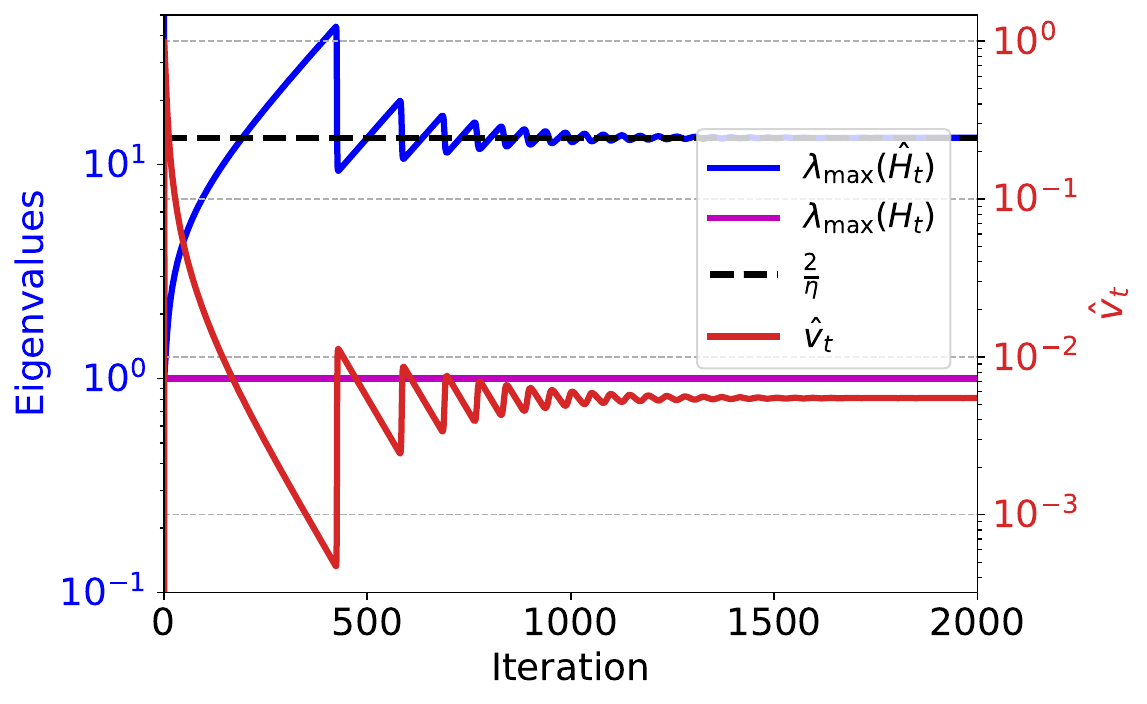}}\\
    \subfloat[$\beta_2=0$]{\includegraphics[width=0.45\linewidth]{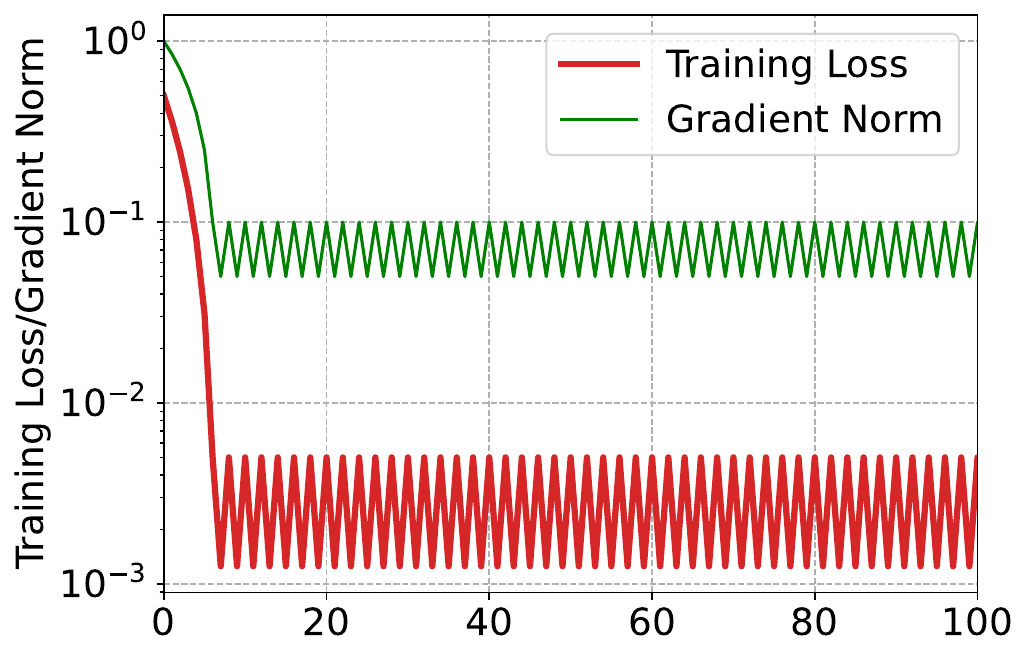}}
    \subfloat[$\beta_2=0$]{\includegraphics[width=0.45\linewidth]{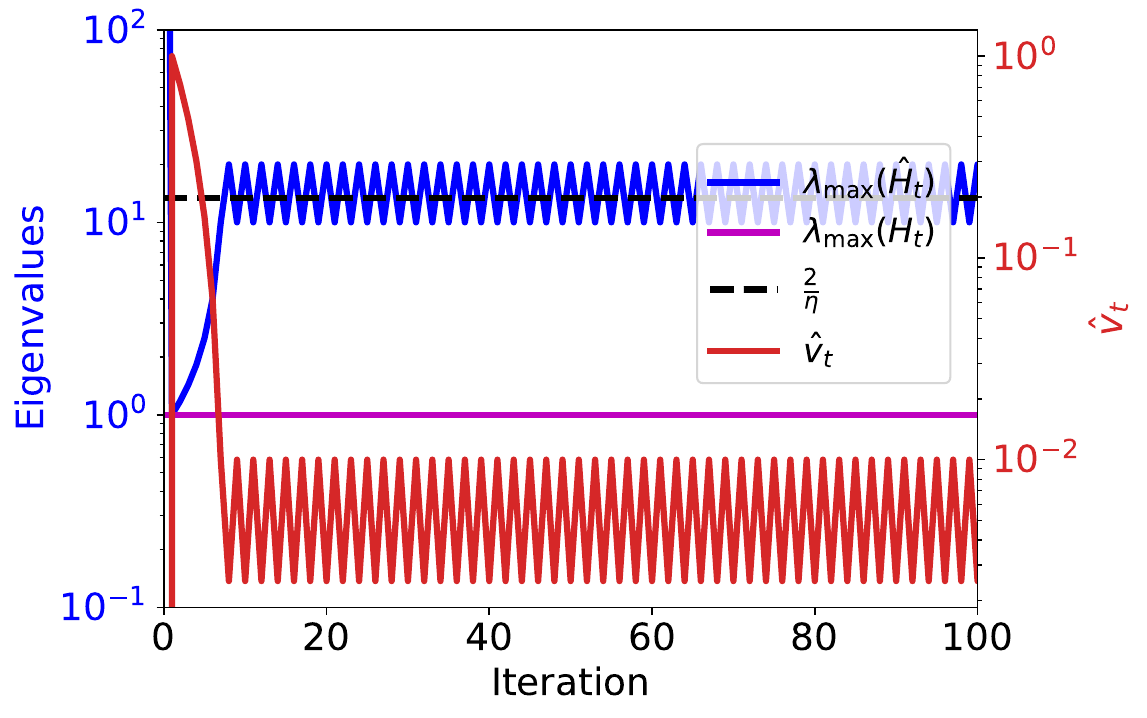}}
    
    \caption{ RMSProp optimization Eq:~\eqref{eq:rmsprop} ($\beta_1=0$ in Adam) on \(f(\theta) = \frac{1}{2}\theta^2\) with \(\beta_2=0.99\) and \(0.00\). (a, c) Evolution of training loss and gradient norm. (b, d) Evolution of the second moment estimate \(\hat{\vv}_t\) and the maximum eigenvalue of the preconditioned Hessian.}
    \label{app:fig:extra_adap_method}
\end{figure}

\begin{figure}[!htb]
    \centering
    \subfloat[$\beta_2=0.99$]{\includegraphics[width=0.45\linewidth]{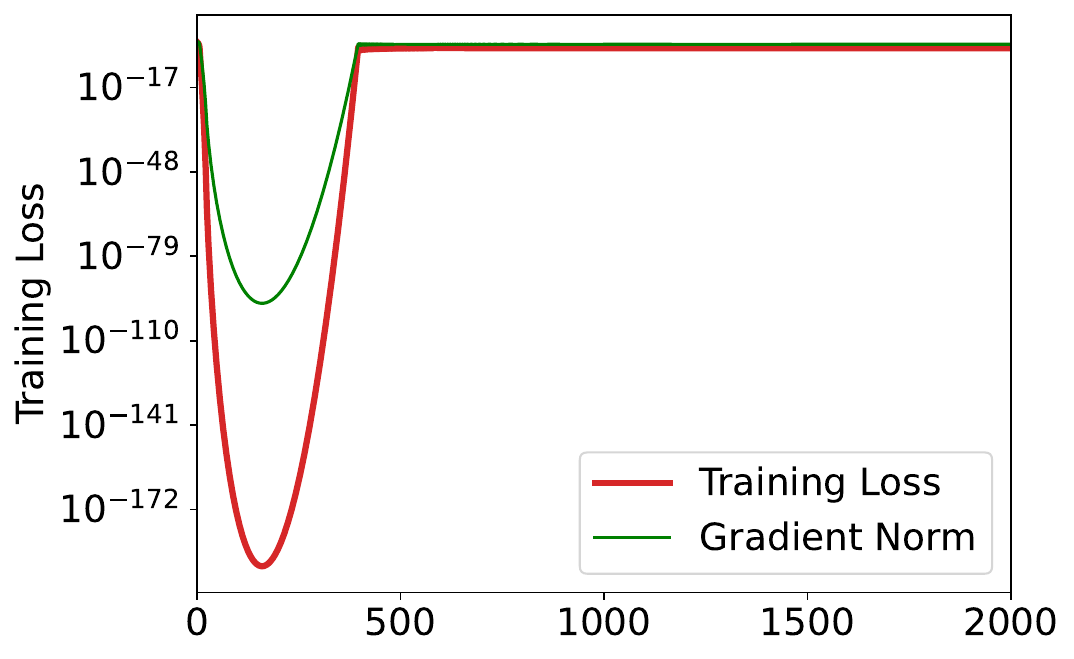}}
    \subfloat[$\beta_2=0.99$]{\includegraphics[width=0.45\linewidth]{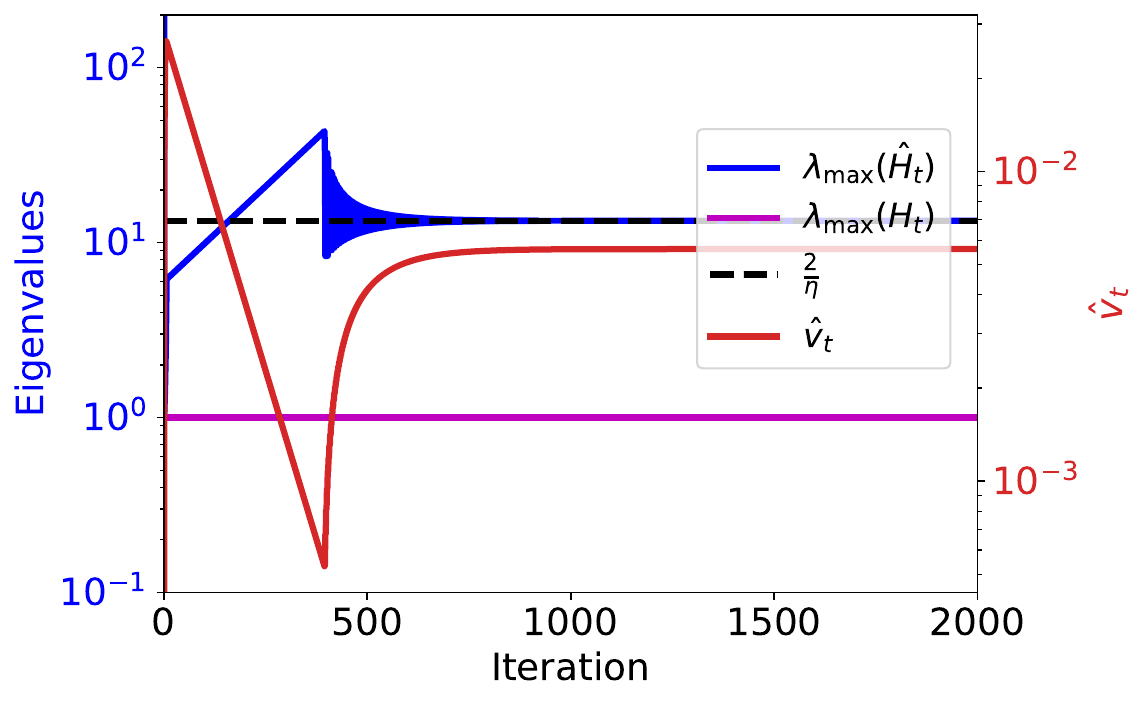}}\\
    \subfloat[$\beta_2=0$]{\includegraphics[width=0.45\linewidth]{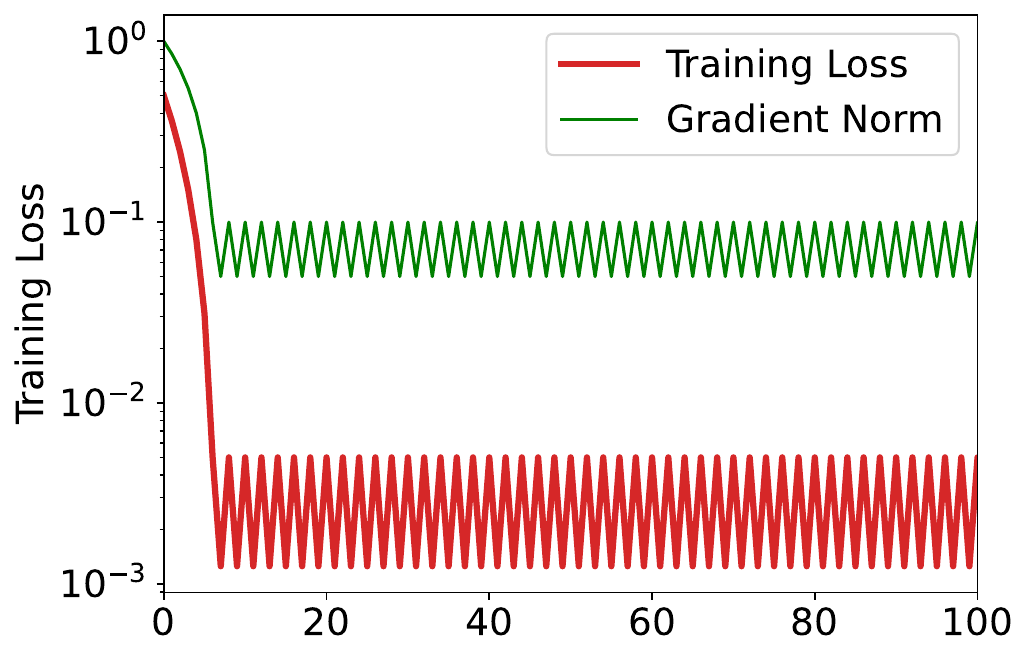}}
    \subfloat[$\beta_2=0$]{\includegraphics[width=0.45\linewidth]{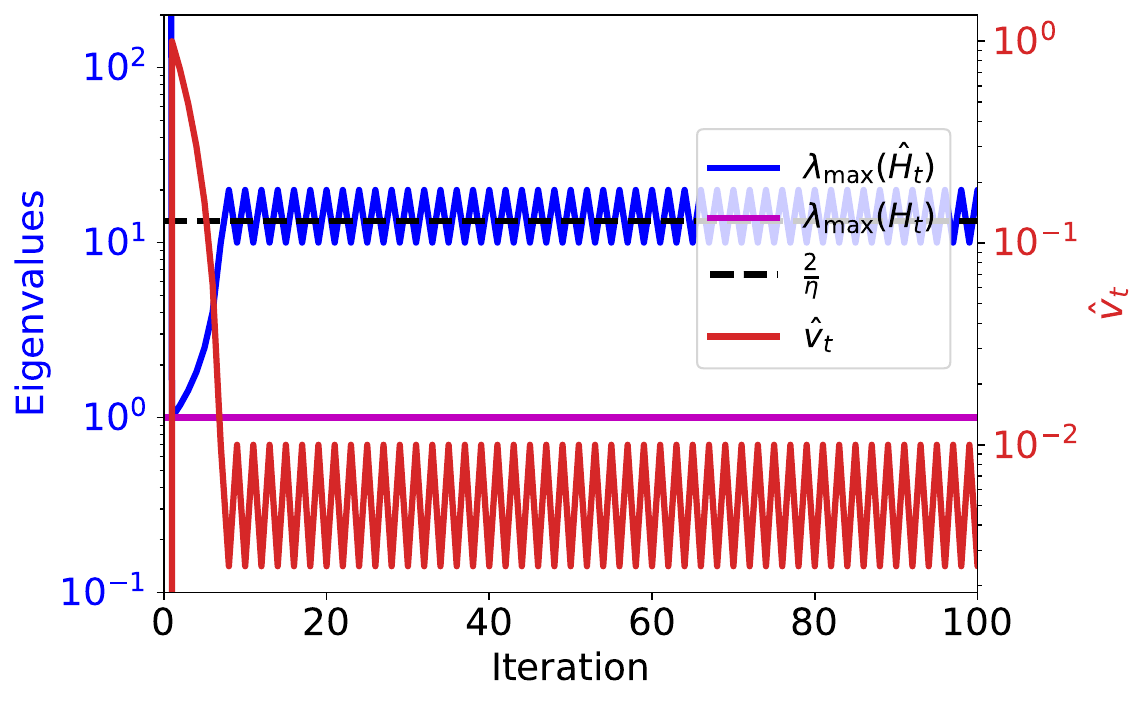}}
    
    \caption{ Adafactor optimization Eq:~\eqref{eq:adafactor} on \(f(\theta) = \frac{1}{2}\theta^2\) with \(\beta=0.99\) and \(0\). For the Adafactor optimizer, we define $\hat{H}_t = \frac{\max(\varepsilon_2,RMS(\theta_{t-1}))*\rho_t}{(\sqrt{v_t}+\varepsilon_1)\max(1,RMS(u_t)/d)*\eta}$. In our implementation, we set $\varepsilon_1=10^{-30}$ and disable the relative step size. (a, c) Evolution of training loss and gradient norm. (b, d) Evolution of the second moment estimate \(\hat{\vv}_t\) and the maximum eigenvalue of the preconditioned Hessian.}
    \label{app:fig:extra_adapfactor_method}
\end{figure}

\begin{figure}[htb]
    \centering
    \subfloat[]{\includegraphics[width=0.5\linewidth]{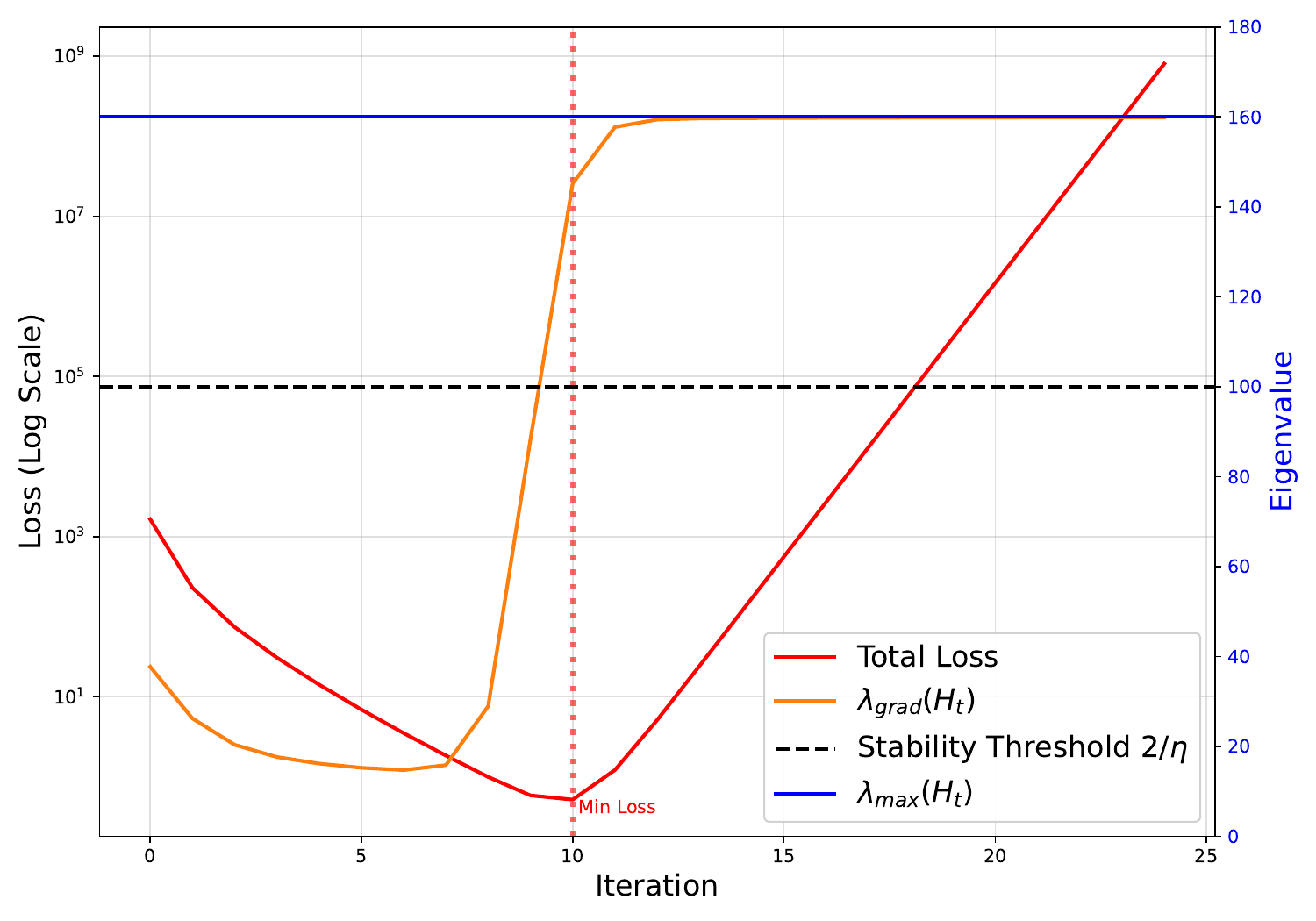}}
    \subfloat[]{\includegraphics[width=0.5\linewidth]{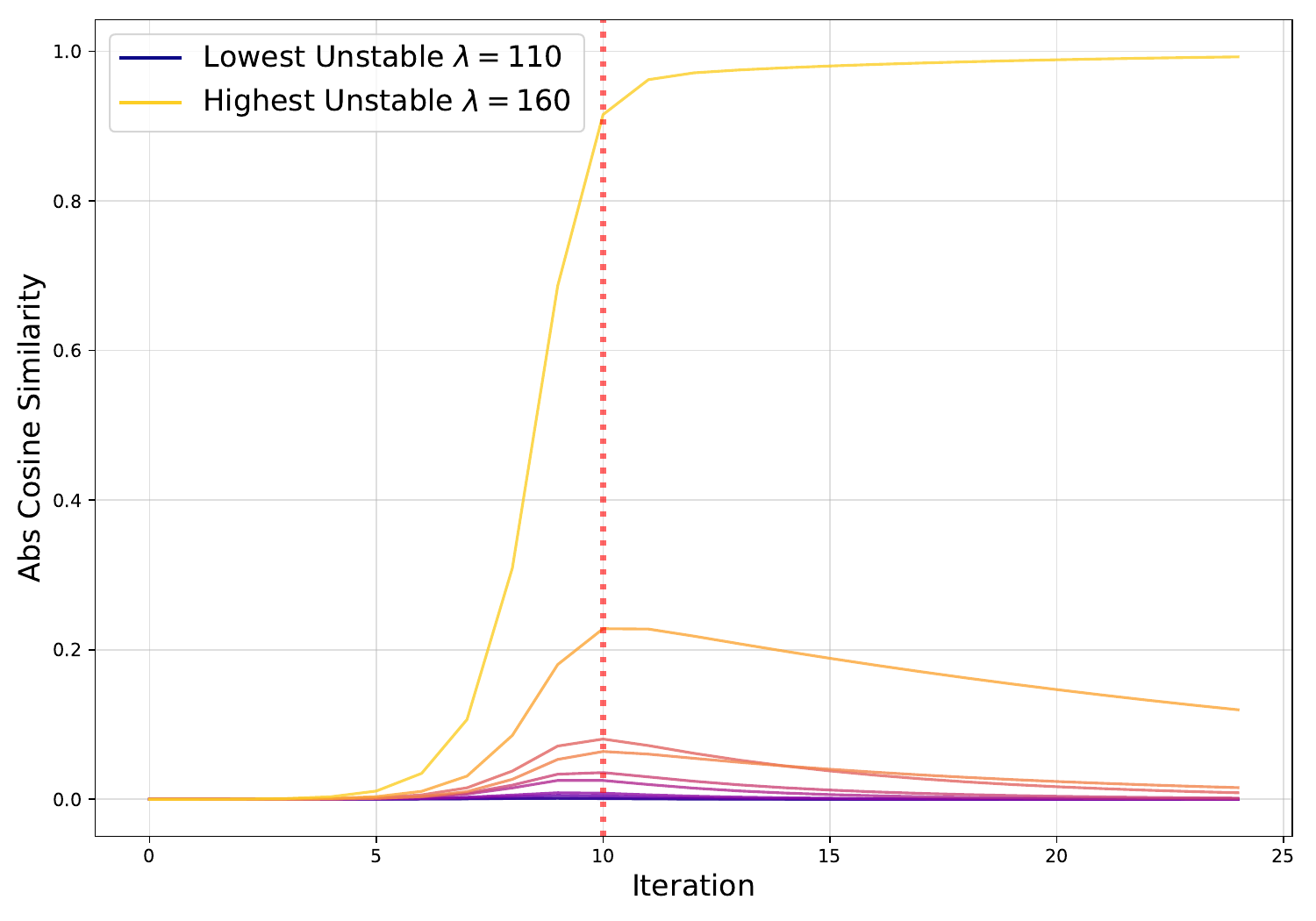}}
    \caption{The optimization for a $100$-dimensional quadratic function with GD. $\eta=0.02$ and there are $90$ stable direction that $\lambda<100$ and $10$ unstable direction that $\lambda>100$. (a) Evolution of loss and critical eigenvalues: Hessian maximum eigenvalue $\lambda_{\max}(\boldsymbol{H}_t)$ and gradient-directional eigenvalue $\lambda_{\mathrm{grad}}(\boldsymbol{H}_t)$ relative to $2/\eta$. (b) Cosine similarity between gradient direction and 10 unstable directions. When spikes occur, the gradient direction aligns predominantly with the most unstable eigendirection (i.e., the one corresponding to $\lambda_{\max}(\vH_t)$), as this direction dominates the optimization dynamics.}
    \label{app:fig:multi_unstable}
\end{figure}

\begin{figure}[!htb]
    \centering
    \subfloat[]{\includegraphics[width=0.5\linewidth]{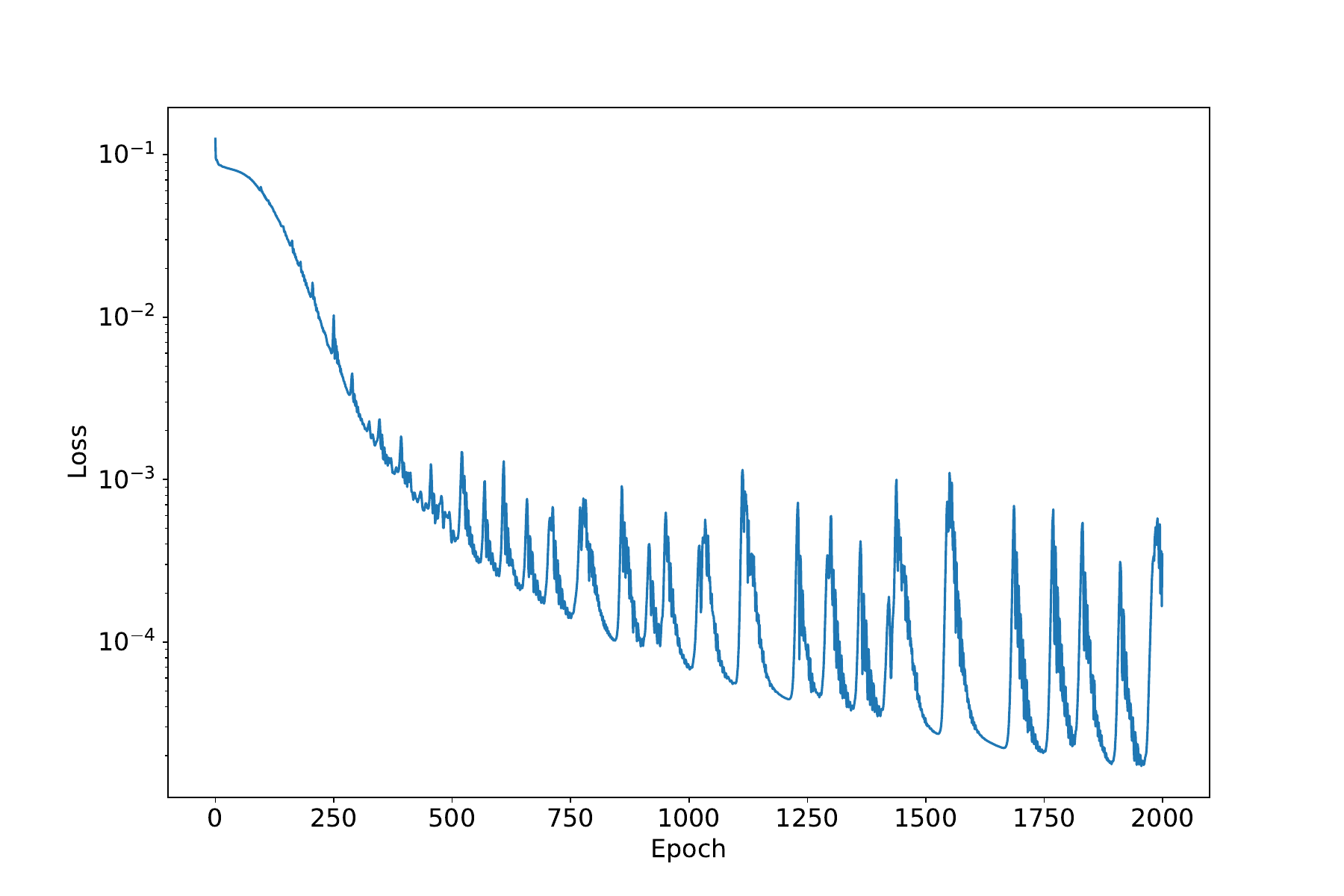}}
    \subfloat[]{\includegraphics[width=0.5\linewidth]{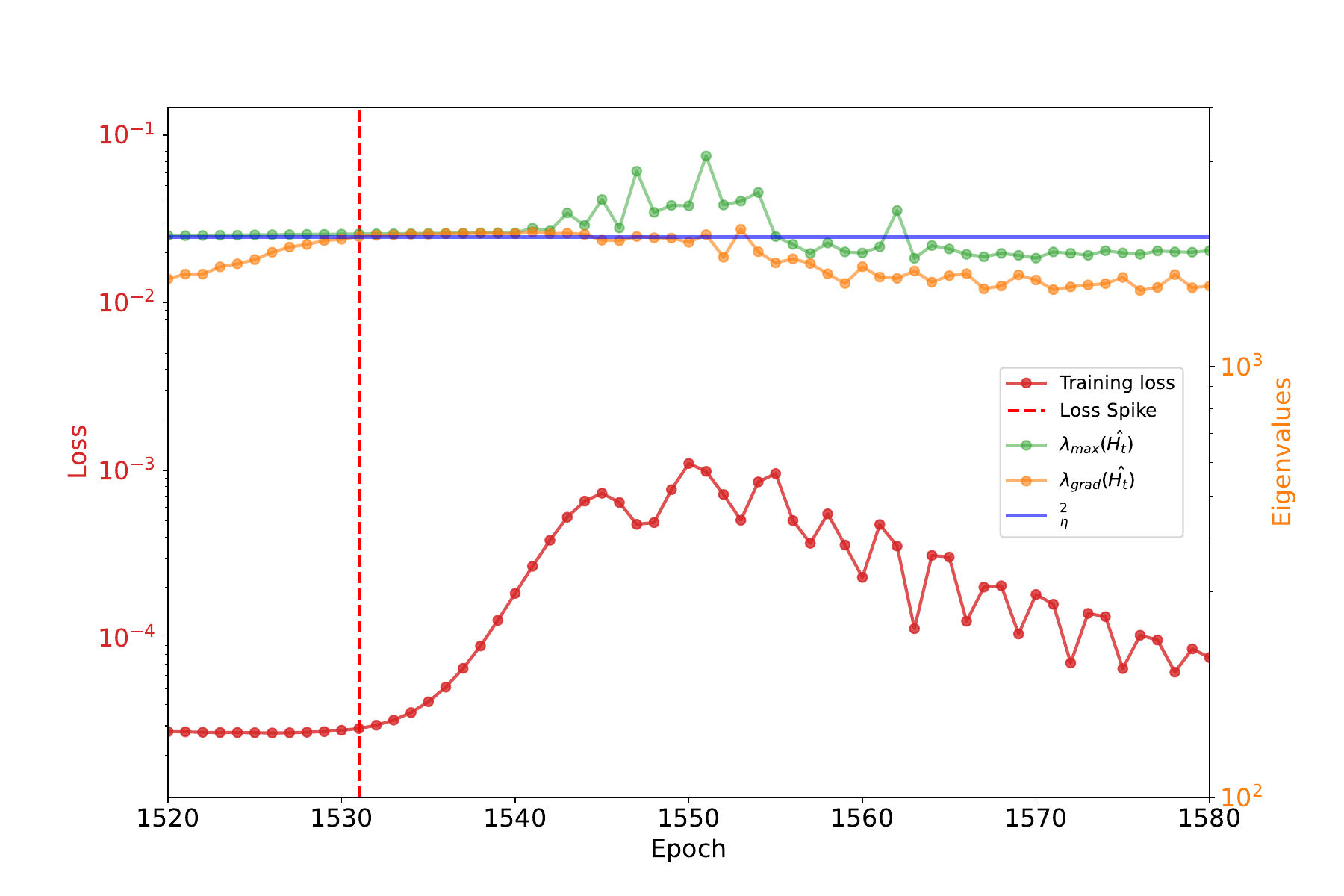}} \\
    
    \subfloat[]{\includegraphics[width=0.5\linewidth]{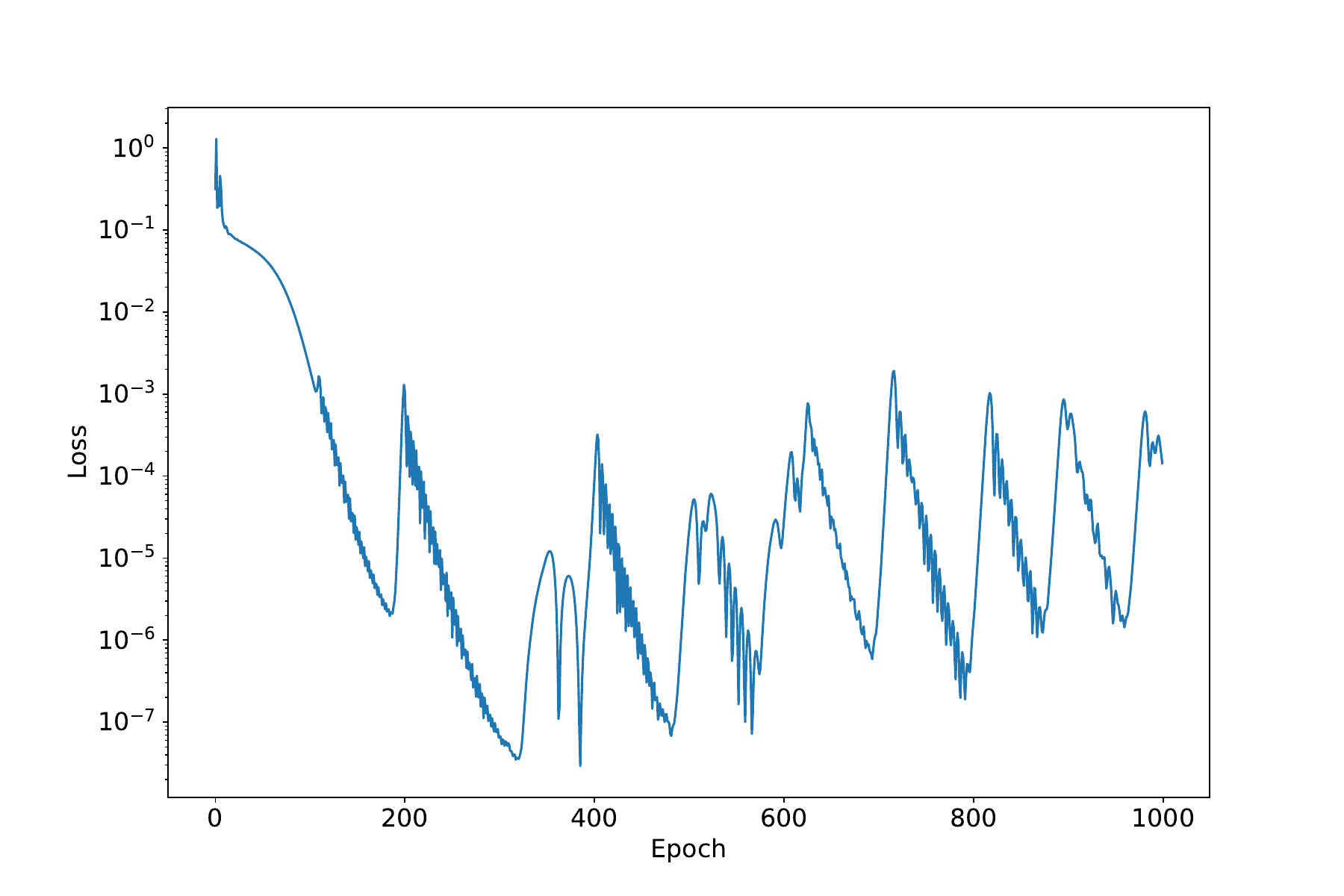}}
    \subfloat[]{\includegraphics[width=0.5\linewidth]{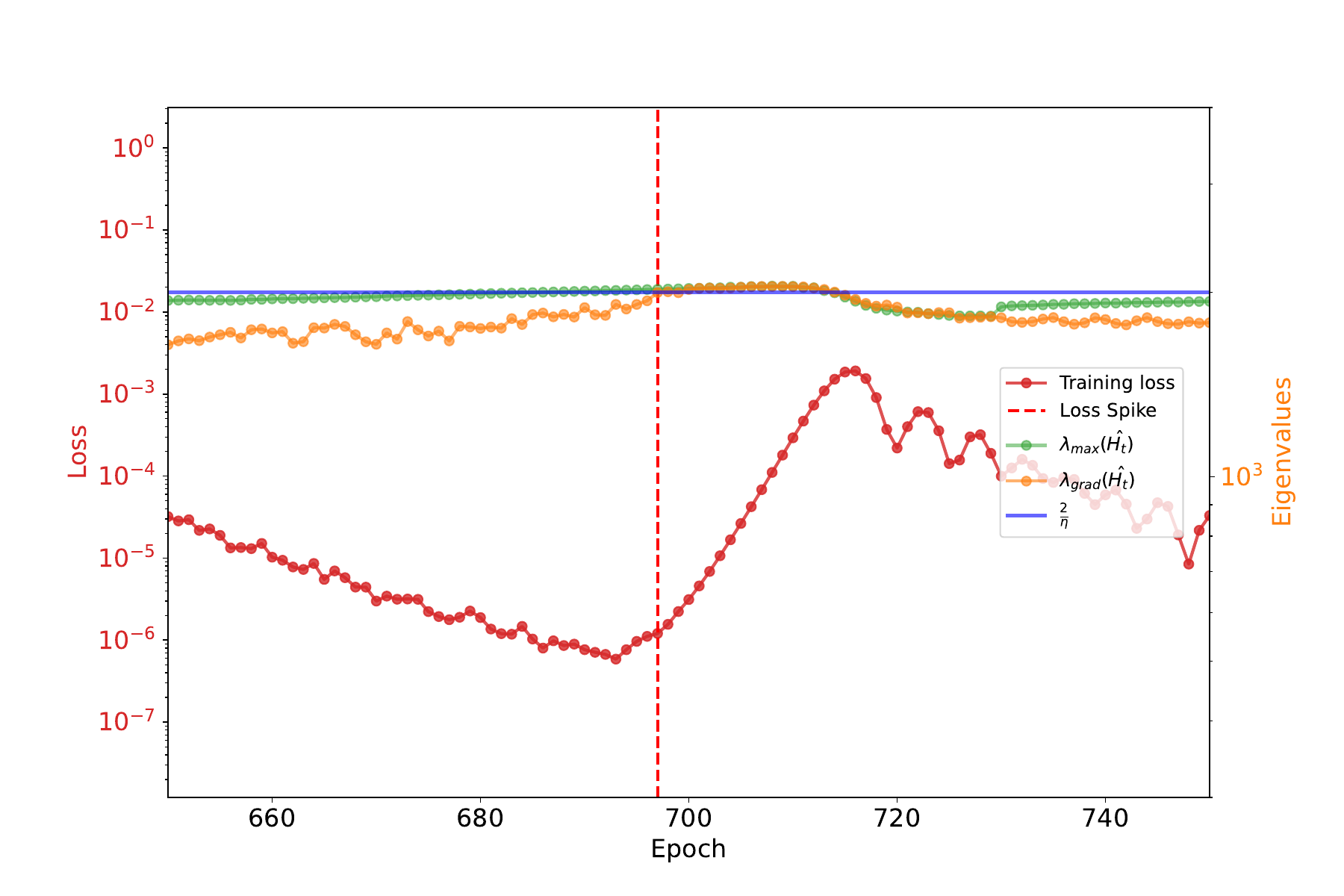}}
    \caption{(a,c) The training loss of ViT and ResNet18 model on randomly selected $1000$ CIFAR-10 images respectively. (b,d) Detailed inspection of loss spike intervals showing the maximum eigenvalues of the preconditioned Hessian $\lambda_{\max}(\hat{\boldsymbol{H}}_t)$, and gradient-directional eigenvalue $\lambda_{\mathrm{grad}}(\boldsymbol{H}_t)$ relative to $2/\eta$.}
    \label{app:fig:vit_resnet}
\end{figure}

\begin{figure}[!htb]
    \centering
    \subfloat[]{\includegraphics[width=0.5\linewidth]{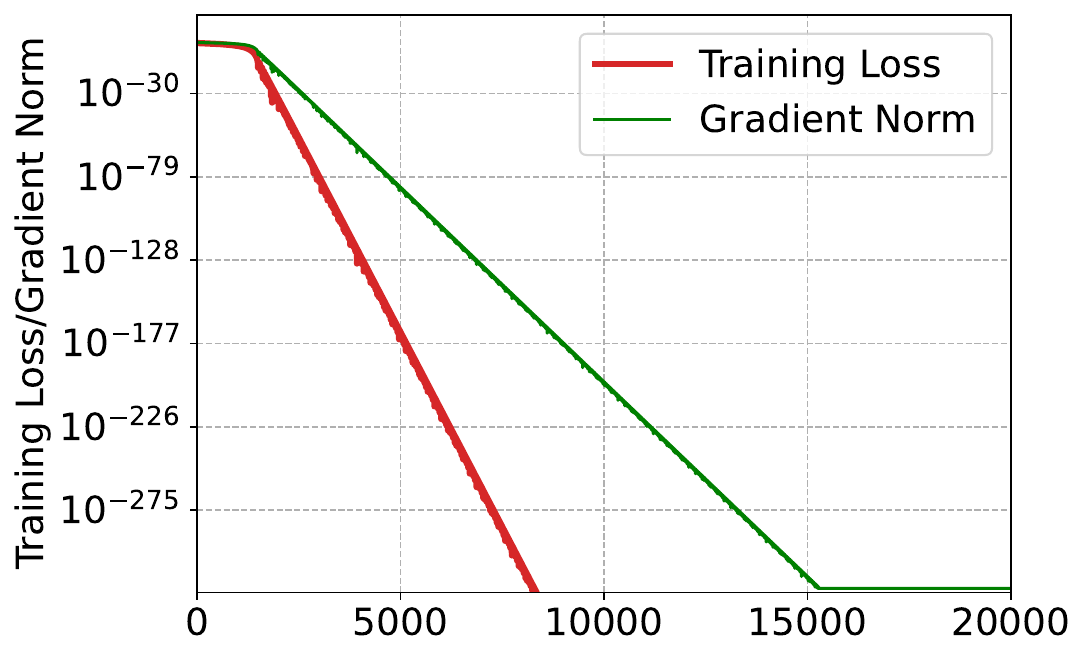}}
    \subfloat[]{\includegraphics[width=0.5\linewidth]{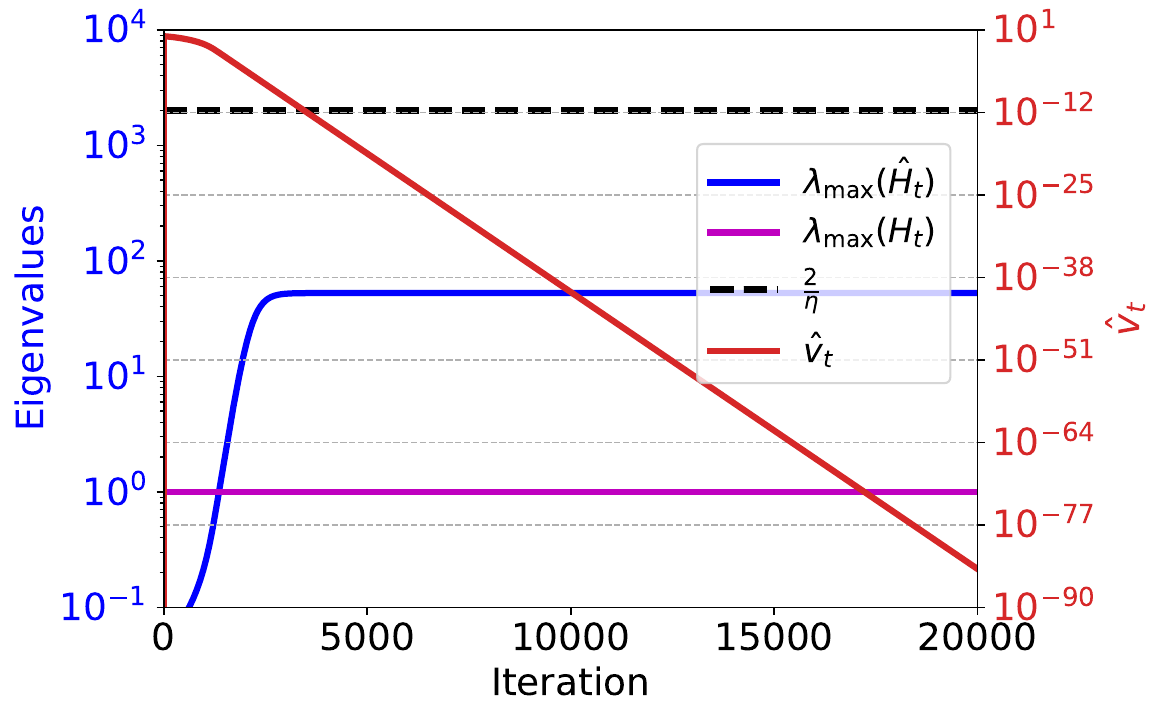}}
    \caption{The loss spike in Figure~\ref{fig:1d-quadratic} is not caused by rounding errors. Adam optimization on \(f(\theta) = \frac{1}{2}\theta^2\) with a large  \(\epsilon=10^{-3}\) values and learning rate $0.001$. (a) Evolution of training loss and gradient norm. (b) Evolution of the second moment estimate \(\hat{\vv}_t\) and the maximum eigenvalue of the preconditioned Hessian. We increase Adam's $\epsilon$ parameter to $10^{-3}$ to ensure that $\lambda_{\text{grad}}(\hat{H}_t)$ can not exceed $2/\eta$, Adam can converge to loss values as low as $10^{-300}$.}
    \label{fig:1d-quadratic-epsilon-1e-3}
\end{figure}

\begin{figure}[!htb]
    \centering
    
    \subfloat[$\eta = 0.15$]{\includegraphics[width=0.33\linewidth]{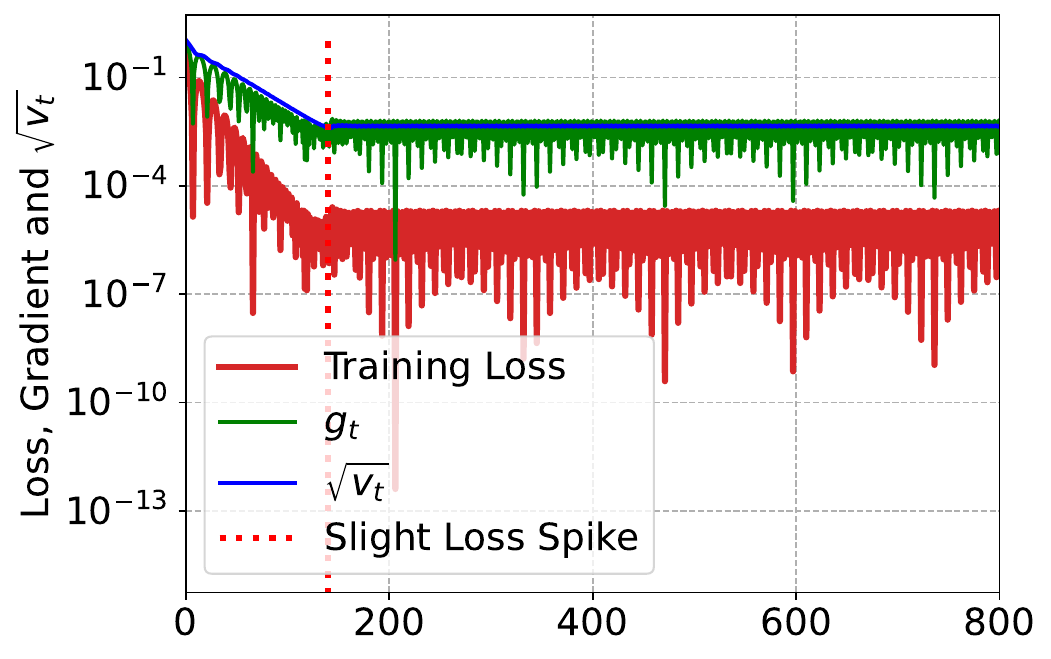}}
    \subfloat[$\eta = 1.5$]{\includegraphics[width=0.33\linewidth]{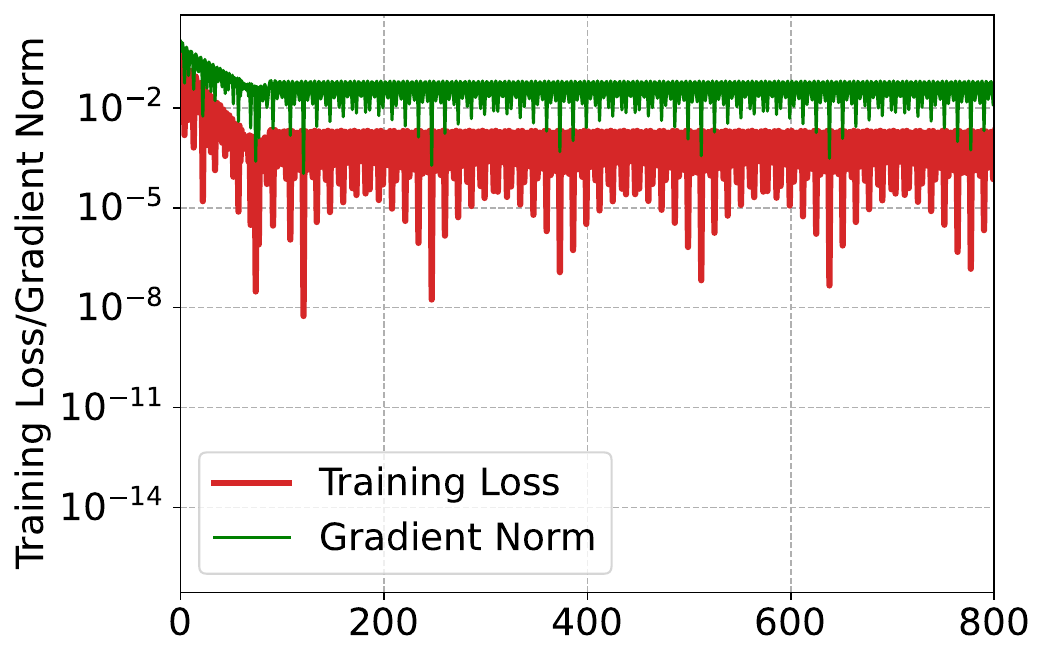}}
    \subfloat[$\eta = 15$]{\includegraphics[width=0.33\linewidth]{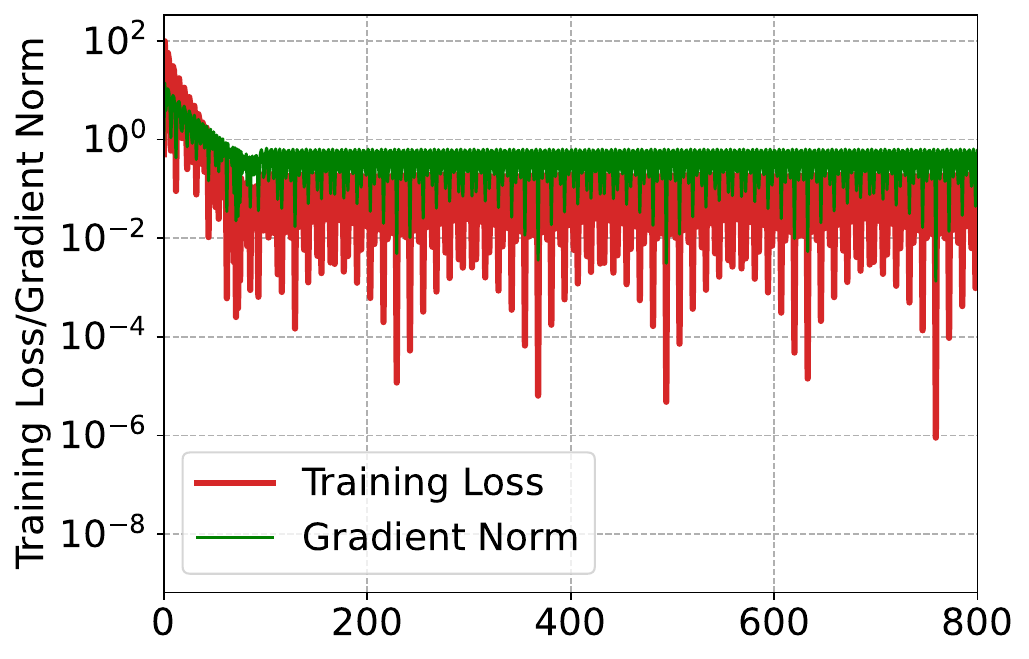}}\\
    \subfloat[$\eta = 0.15$]{\includegraphics[width=0.33\linewidth]{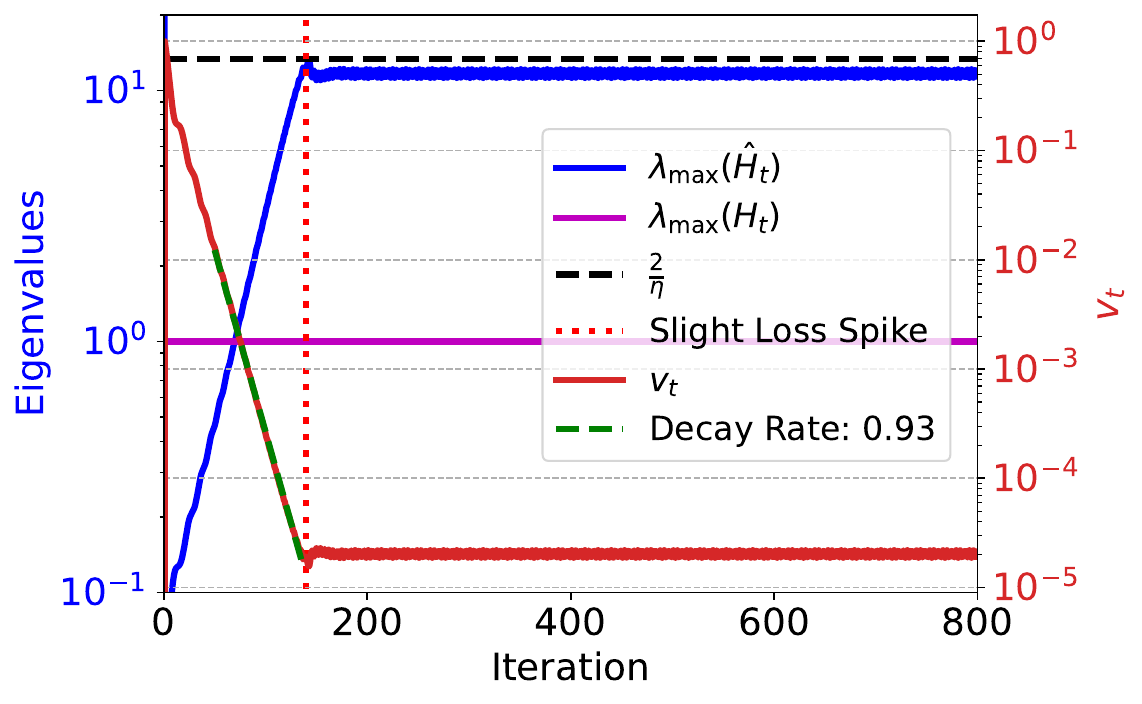}}
    \subfloat[$\eta = 1.5$]{\includegraphics[width=0.33\linewidth]{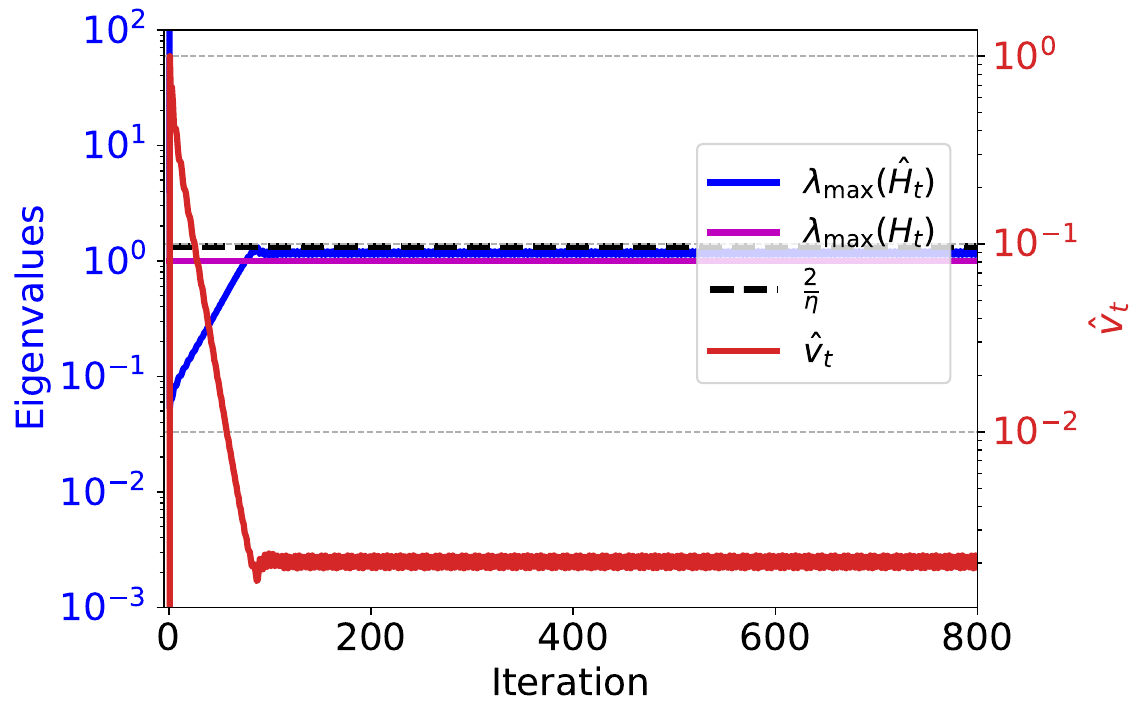}}
    \subfloat[$\eta = 15$]{\includegraphics[width=0.33\linewidth]{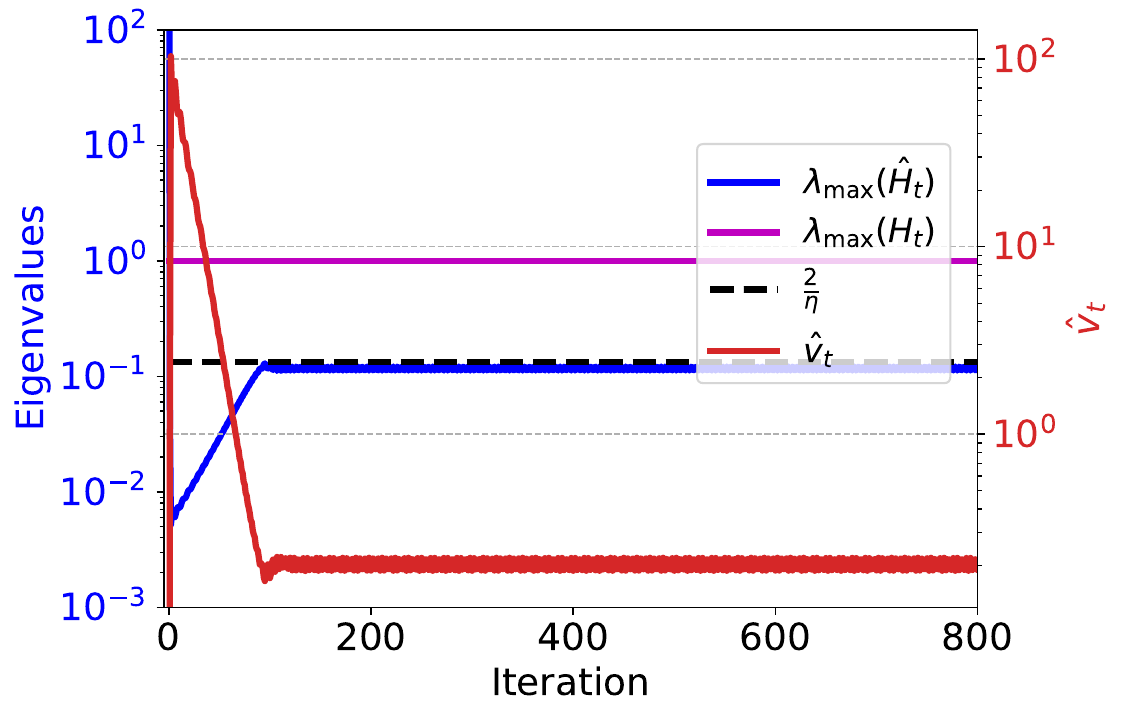}}
    \caption{Stable loss decrease is still observed initially even with larger learning rates in the case of $\beta_2=0.9$. Our results show that when the learning rate is particularly large, $v_t$ grows rapidly in the early stages of optimization. This rapid growth of $v_t$ effectively reduces the preconditioned step size $\eta/\sqrt{v_t}$, which allows the loss to decrease stably at the beginning even under large nominal learning rates. }
    \label{fig:placeholder}
\end{figure}

\begin{figure}[!htb]
    \centering
    \subfloat[Loss, $\beta_2 = 0.999$]{\includegraphics[height=0.25\linewidth]{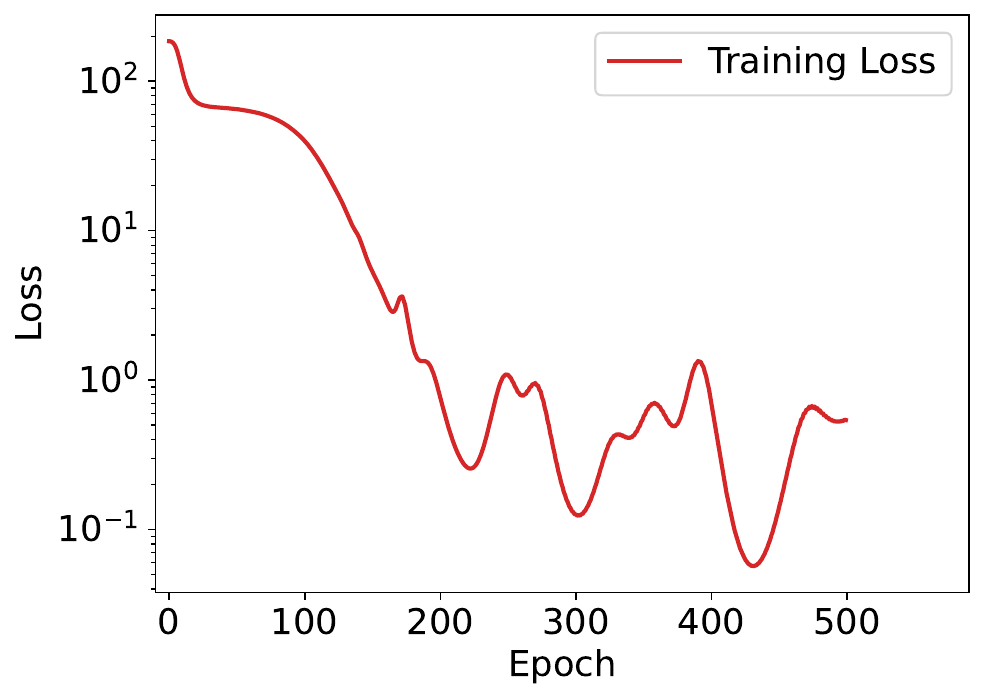}}
    \subfloat[Eigenvalues, $\beta_2 = 0.999$]{\includegraphics[height=0.25\linewidth]{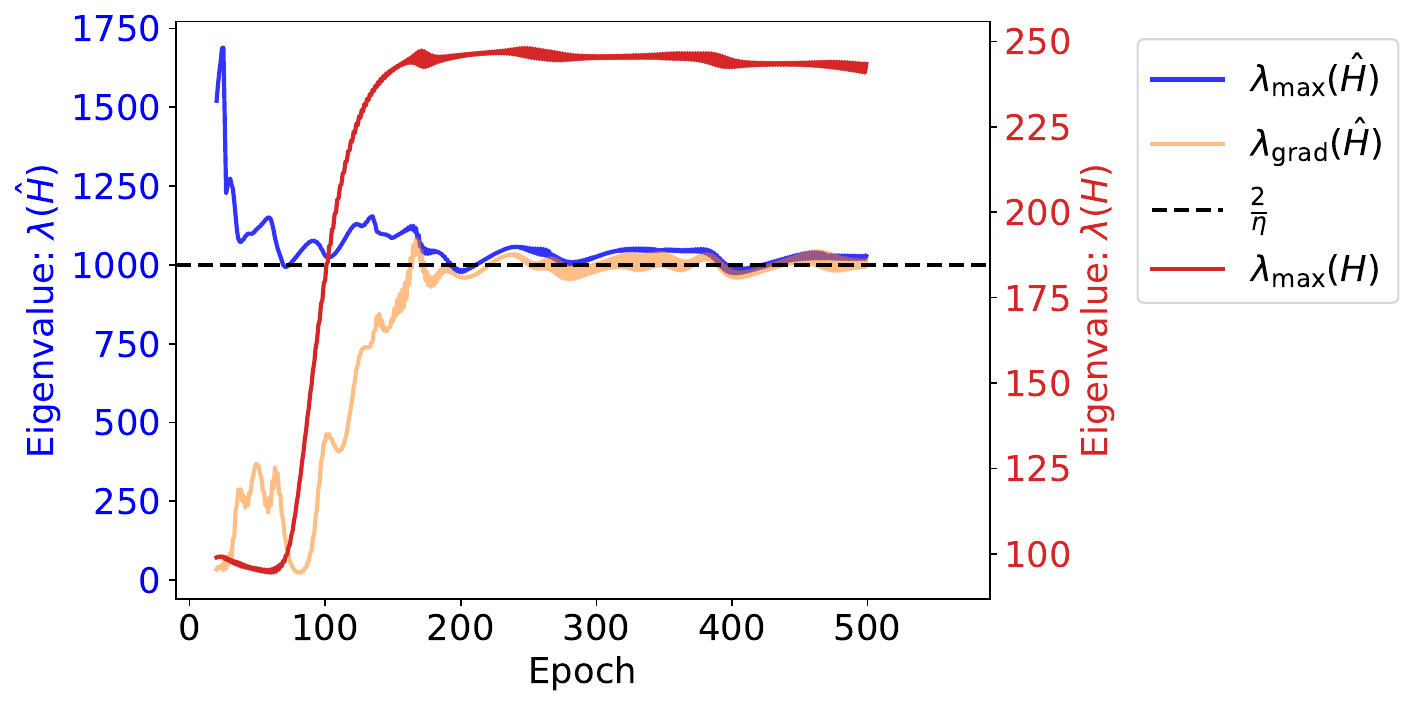}}\\
    \subfloat[Loss, $\beta_2 = 0.9$]{\includegraphics[height=0.25\linewidth]{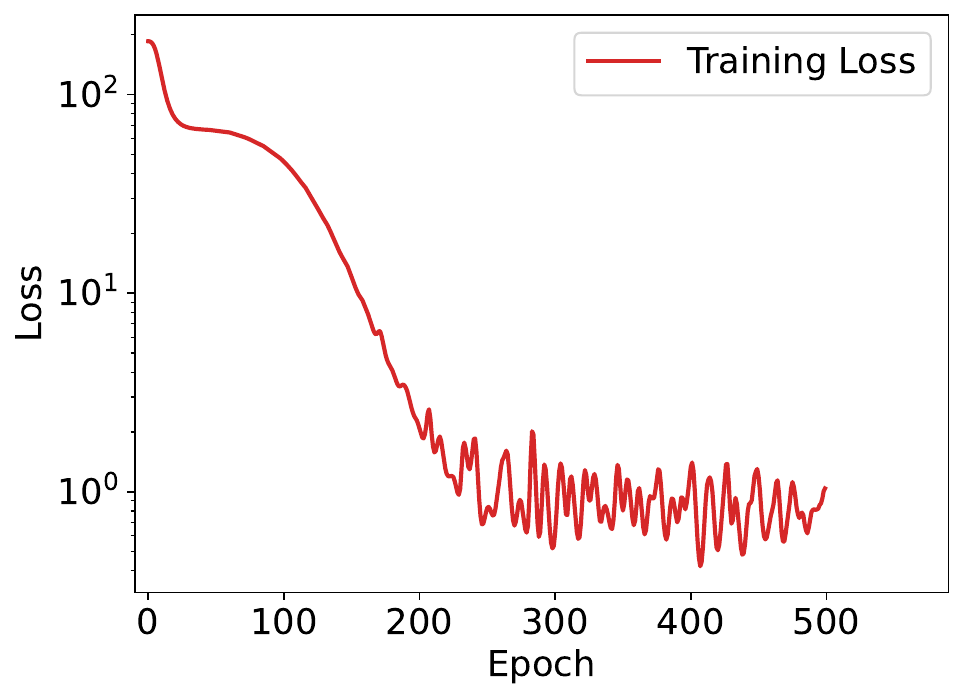}}
    \subfloat[Eigenvalues, $\beta_2 = 0.9$]{\includegraphics[height=0.25\linewidth]{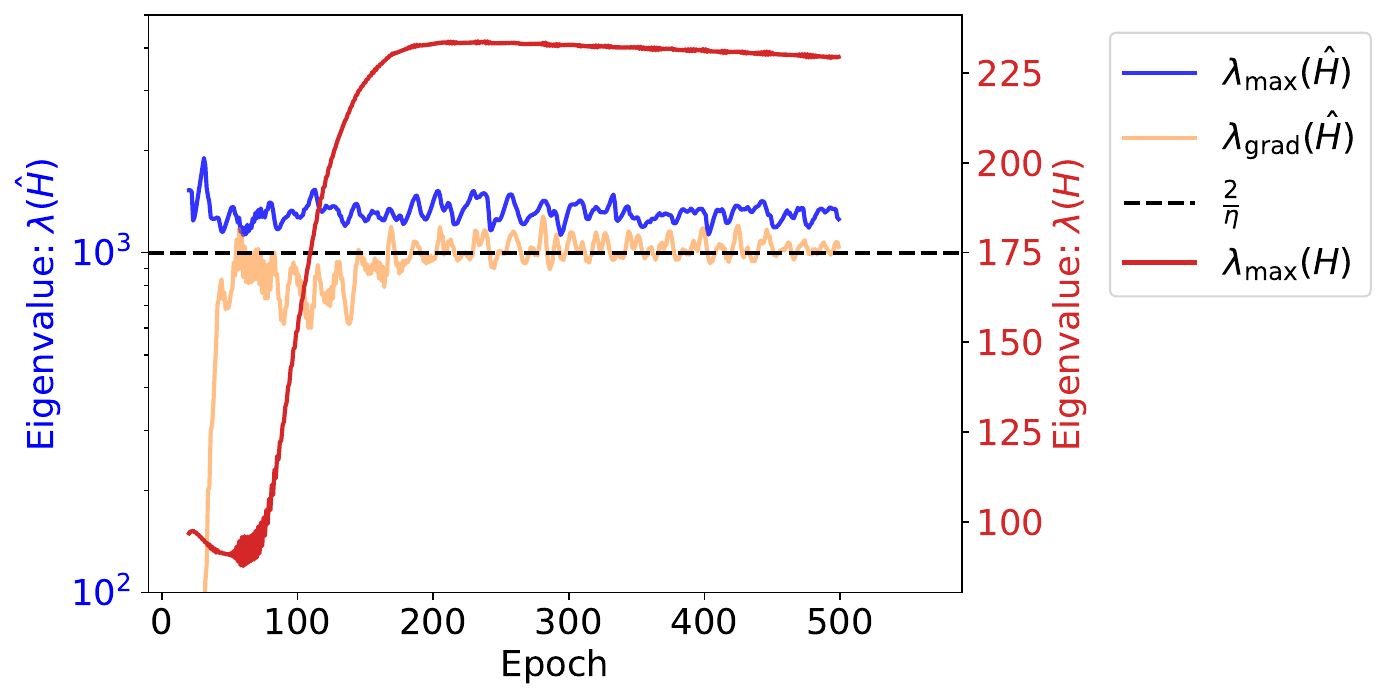}}\\
    \subfloat[Loss, $\beta_2 = 0.5$]{\includegraphics[height=0.25\linewidth]{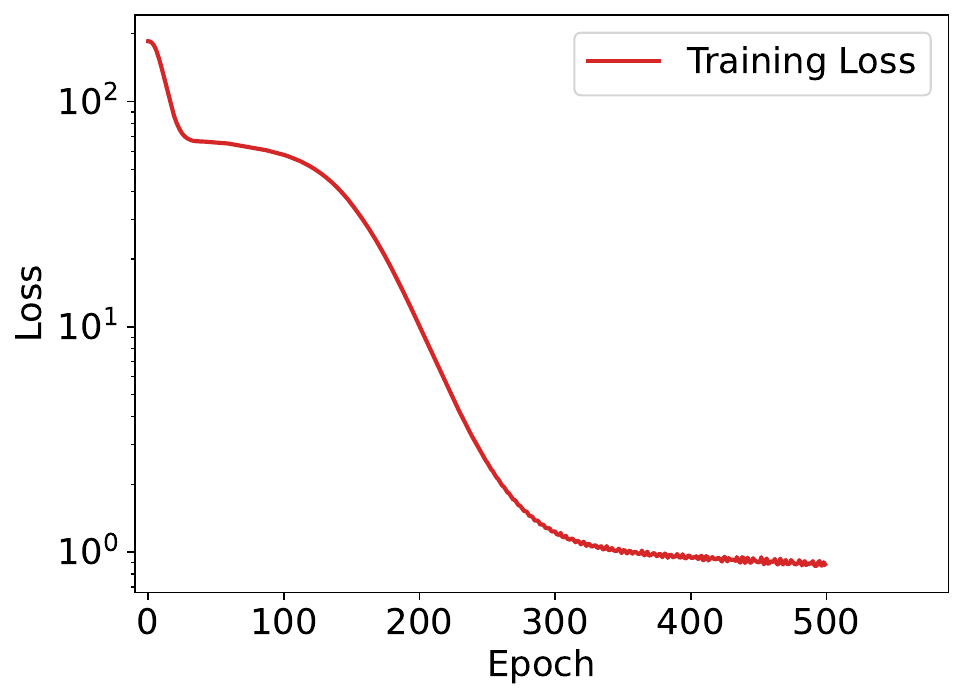}}
    \subfloat[Eigenvalues, $\beta_2 = 0.5$]{\includegraphics[height=0.25\linewidth]{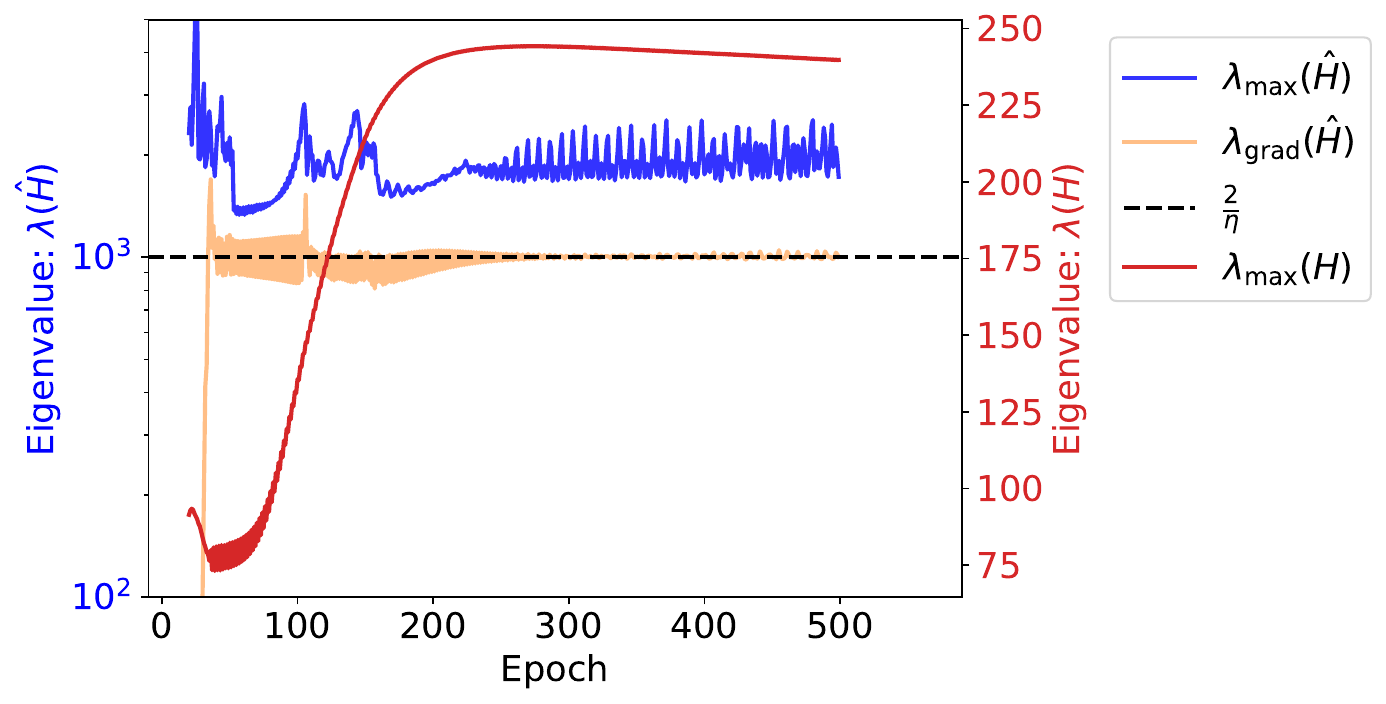}}\\
    \subfloat[Loss, $\beta_2 = 0.1$]{\includegraphics[height=0.25\linewidth]{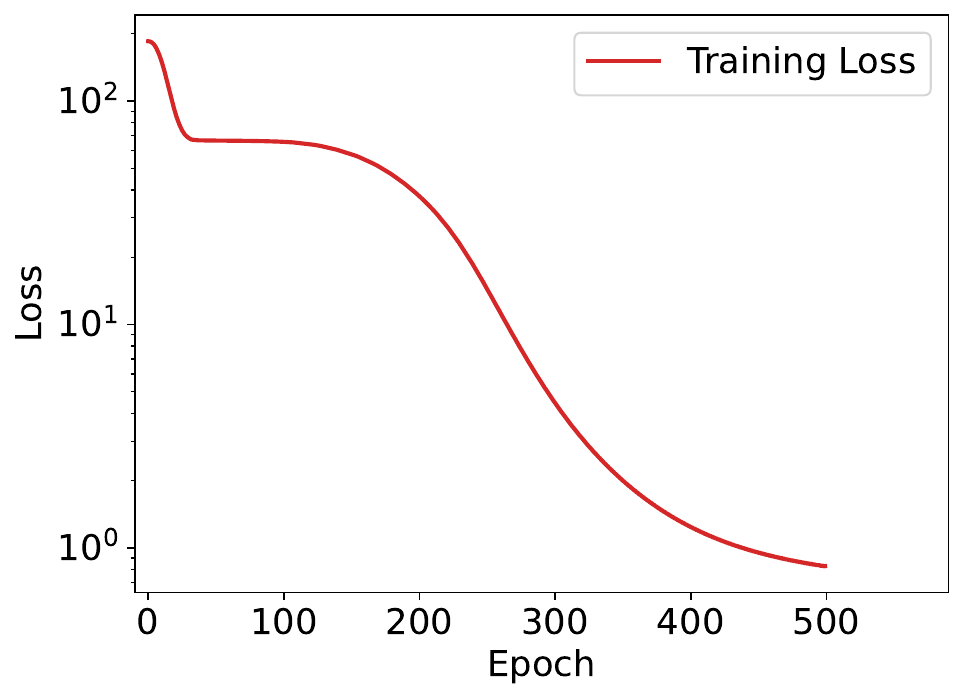}}
    \subfloat[Eigenvalues, $\beta_2 = 0.1$]{\includegraphics[height=0.25\linewidth]{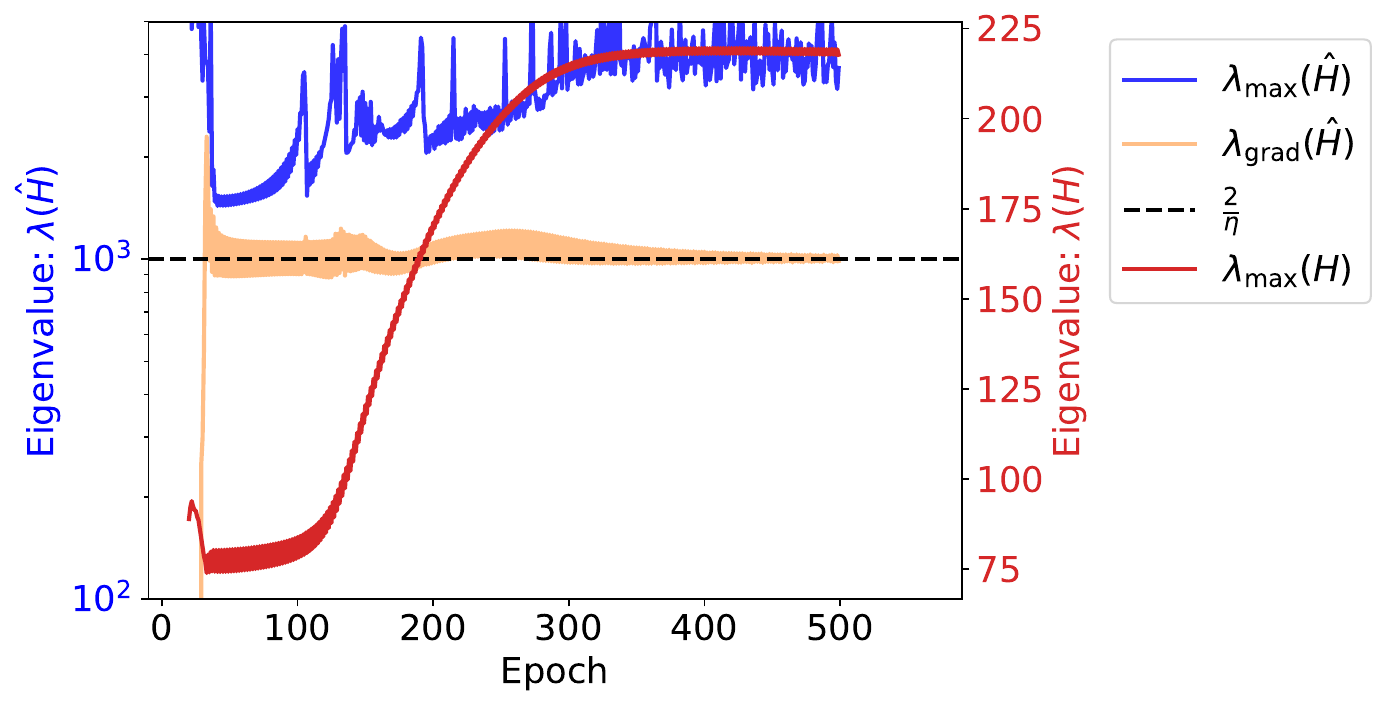}}
    \caption{Training trajectories and eigenvalue evolution for varying $\beta_2$ values with $\beta_1=0$ (to isolate the effect of adaptive learning rate from momentum). Each row shows the loss curve and corresponding evolution of $\lambda_{\max}(\hat{H}_t)$ and $\lambda_{\mathrm{grad}}(\hat{H}_t)$ for a different $\beta_2$ setting. Larger $\beta_2$ values produce more pronounced spikes in the loss, while smaller $\beta_2$ values lead to denser oscillations, mirroring the behavior observed for the quadratic function in Fig.~\ref{fig:1d-quadratic-diff-beta2}. Notably, loss spikes and oscillations correlate with $\lambda_{\mathrm{grad}}$ approaching $2/\eta$, rather than with $\lambda_{\max}(\hat{H}_t)$, providing empirical validation for the practical utility of our proposed $\lambda_{\mathrm{grad}}$ metric.}
    \label{fig:vary_beta2}
\end{figure}



\begin{thebibliography}{47}
\providecommand{\natexlab}[1]{#1}
\providecommand{\url}[1]{\texttt{#1}}
\expandafter\ifx\csname urlstyle\endcsname\relax
  \providecommand{\doi}[1]{doi: #1}\else
  \providecommand{\doi}{doi: \begingroup \urlstyle{rm}\Url}\fi

\bibitem[Ahn et~al.(2022)Ahn, Zhang, and Sra]{ahn2022understanding}
Ahn, K., Zhang, J., and Sra, S.
\newblock Understanding the unstable convergence of gradient descent.
\newblock In \emph{International conference on machine learning}, pp.\  247--257. PMLR, 2022.

\bibitem[Arora et~al.(2022)Arora, Li, and Panigrahi]{arora2022understanding}
Arora, S., Li, Z., and Panigrahi, A.
\newblock Understanding gradient descent on the edge of stability in deep learning.
\newblock In \emph{International Conference on Machine Learning}, pp.\  948--1024. PMLR, 2022.

\bibitem[Chen et~al.(2019)Chen, Liu, Sun, and Hong]{chen2018on}
Chen, X., Liu, S., Sun, R., and Hong, M.
\newblock On the convergence of a class of adam-type algorithms for non-convex optimization.
\newblock In \emph{International Conference on Learning Representations}, 2019.
\newblock URL \url{https://openreview.net/forum?id=H1x-x309tm}.

\bibitem[Chowdhery et~al.(2023)Chowdhery, Narang, Devlin, Bosma, Mishra, Roberts, Barham, Chung, Sutton, Gehrmann, et~al.]{chowdhery2023palm}
Chowdhery, A., Narang, S., Devlin, J., Bosma, M., Mishra, G., Roberts, A., Barham, P., Chung, H.~W., Sutton, C., Gehrmann, S., et~al.
\newblock Palm: Scaling language modeling with pathways.
\newblock \emph{Journal of Machine Learning Research}, 24\penalty0 (240):\penalty0 1--113, 2023.

\bibitem[Cohen et~al.(2021)Cohen, Kaur, Li, Kolter, and Talwalkar]{cohen2021gradient}
Cohen, J., Kaur, S., Li, Y., Kolter, J.~Z., and Talwalkar, A.
\newblock Gradient descent on neural networks typically occurs at the edge of stability.
\newblock In \emph{International Conference on Learning Representations}, 2021.
\newblock URL \url{https://openreview.net/forum?id=jh-rTtvkGeM}.

\bibitem[Cohen et~al.(2023)Cohen, Ghorbani, Krishnan, Agarwal, Medapati, Badura, Suo, Nado, Dahl, and Gilmer]{cohen2023adaptive}
Cohen, J., Ghorbani, B., Krishnan, S., Agarwal, N., Medapati, S., Badura, M., Suo, D., Nado, Z., Dahl, G.~E., and Gilmer, J.
\newblock Adaptive gradient methods at the edge of stability.
\newblock In \emph{NeurIPS 2023 Workshop Heavy Tails in Machine Learning}, 2023.

\bibitem[Cohen et~al.(2025)Cohen, Damian, Talwalkar, Kolter, and Lee]{cohen2025understanding}
Cohen, J., Damian, A., Talwalkar, A., Kolter, J.~Z., and Lee, J.~D.
\newblock Understanding optimization in deep learning with central flows.
\newblock In \emph{The Thirteenth International Conference on Learning Representations}, 2025.
\newblock URL \url{https://openreview.net/forum?id=sIE2rI3ZPs}.

\bibitem[Da~Silva \& Gazeau(2020)Da~Silva and Gazeau]{da2020general}
Da~Silva, A.~B. and Gazeau, M.
\newblock A general system of differential equations to model first-order adaptive algorithms.
\newblock \emph{Journal of Machine Learning Research}, 21\penalty0 (129):\penalty0 1--42, 2020.

\bibitem[Damian et~al.(2023)Damian, Nichani, and Lee]{damian2023selfstabilization}
Damian, A., Nichani, E., and Lee, J.~D.
\newblock Self-stabilization: The implicit bias of gradient descent at the edge of stability.
\newblock In \emph{The Eleventh International Conference on Learning Representations}, 2023.
\newblock URL \url{https://openreview.net/forum?id=nhKHA59gXz}.

\bibitem[D{\'e}fossez et~al.(2022)D{\'e}fossez, Bottou, Bach, and Usunier]{defossez2022a}
D{\'e}fossez, A., Bottou, L., Bach, F., and Usunier, N.
\newblock A simple convergence proof of adam and adagrad.
\newblock \emph{Transactions on Machine Learning Research}, 2022.
\newblock ISSN 2835-8856.
\newblock URL \url{https://openreview.net/forum?id=ZPQhzTSWA7}.

\bibitem[Dereich et~al.(2025{\natexlab{a}})Dereich, Do, Jentzen, and von Wurstemberger]{dereich2025adam}
Dereich, S., Do, T., Jentzen, A., and von Wurstemberger, P.
\newblock Adam symmetry theorem: characterization of the convergence of the stochastic adam optimizer.
\newblock \emph{arXiv preprint arXiv:2511.06675}, 2025{\natexlab{a}}.

\bibitem[Dereich et~al.(2025{\natexlab{b}})Dereich, Jentzen, and Kassing]{dereich2025ode}
Dereich, S., Jentzen, A., and Kassing, S.
\newblock Ode approximation for the adam algorithm: General and overparametrized setting.
\newblock \emph{arXiv preprint arXiv:2511.04622}, 2025{\natexlab{b}}.

\bibitem[Dereich et~al.(2025{\natexlab{c}})Dereich, Jentzen, and Riekert]{dereich2025sharp}
Dereich, S., Jentzen, A., and Riekert, A.
\newblock Sharp higher order convergence rates for the adam optimizer.
\newblock \emph{arXiv preprint arXiv:2504.19426}, 2025{\natexlab{c}}.

\bibitem[Dereich et~al.(2026)Dereich, Do, and Jentzen]{dereich2026uniform}
Dereich, S., Do, T., and Jentzen, A.
\newblock Uniform a priori bounds and error analysis for the adam stochastic gradient descent optimization method.
\newblock \emph{arXiv preprint arXiv:2603.18899}, 2026.

\bibitem[Ding et~al.(2021)Ding, Yang, Hong, Zheng, Zhou, Yin, Lin, Zou, Shao, Yang, et~al.]{ding2021cogview}
Ding, M., Yang, Z., Hong, W., Zheng, W., Zhou, C., Yin, D., Lin, J., Zou, X., Shao, Z., Yang, H., et~al.
\newblock Cogview: Mastering text-to-image generation via transformers.
\newblock \emph{Advances in neural information processing systems}, 34:\penalty0 19822--19835, 2021.

\bibitem[Dubey et~al.(2024)Dubey, Jauhri, Pandey, Kadian, Al-Dahle, Letman, Mathur, Schelten, Yang, Fan, et~al.]{dubey2024llama}
Dubey, A., Jauhri, A., Pandey, A., Kadian, A., Al-Dahle, A., Letman, A., Mathur, A., Schelten, A., Yang, A., Fan, A., et~al.
\newblock The llama 3 herd of models.
\newblock \emph{arXiv e-prints}, pp.\  arXiv--2407, 2024.

\bibitem[Elaydi(2005)]{elaydi2005difference}
Elaydi, S.
\newblock \emph{An Introduction to Difference Equations}.
\newblock Undergraduate Texts in Mathematics. Springer Science \& Business Media, 3rd edition, 2005.
\newblock ISBN 9780387230598.

\bibitem[Jastrzebski et~al.(2020)Jastrzebski, Szymczak, Fort, Arpit, Tabor, Cho*, and Geras*]{Jastrzebski2020The}
Jastrzebski, S., Szymczak, M., Fort, S., Arpit, D., Tabor, J., Cho*, K., and Geras*, K.
\newblock The break-even point on optimization trajectories of deep neural networks.
\newblock In \emph{International Conference on Learning Representations}, 2020.
\newblock URL \url{https://openreview.net/forum?id=r1g87C4KwB}.

\bibitem[Jastrzębski et~al.(2019)Jastrzębski, Kenton, Ballas, Fischer, Bengio, and Storkey]{jastrzębski2018on}
Jastrzębski, S., Kenton, Z., Ballas, N., Fischer, A., Bengio, Y., and Storkey, A.
\newblock On the relation between the sharpest directions of {DNN} loss and the {SGD} step length.
\newblock In \emph{International Conference on Learning Representations}, 2019.
\newblock URL \url{https://openreview.net/forum?id=SkgEaj05t7}.

\bibitem[Kingma \& Ba(2014)Kingma and Ba]{kingma2014adam}
Kingma, D.~P. and Ba, J.
\newblock Adam: A method for stochastic optimization.
\newblock \emph{arXiv preprint arXiv:1412.6980}, 2014.

\bibitem[Lewkowycz et~al.(2020)Lewkowycz, Bahri, Dyer, Sohl-Dickstein, and Gur-Ari]{lewkowycz2020large}
Lewkowycz, A., Bahri, Y., Dyer, E., Sohl-Dickstein, J., and Gur-Ari, G.
\newblock The large learning rate phase of deep learning: the catapult mechanism.
\newblock \emph{arXiv preprint arXiv:2003.02218}, 2020.

\bibitem[Li \& Orabona(2019)Li and Orabona]{li2019convergence}
Li, X. and Orabona, F.
\newblock On the convergence of stochastic gradient descent with adaptive stepsizes.
\newblock In \emph{The 22nd international conference on artificial intelligence and statistics}, pp.\  983--992. PMLR, 2019.

\bibitem[Li et~al.(2025)Li, Xu, and Zhang]{li2023loss}
Li, X., Xu, Z.-Q.~J., and Zhang, Z.
\newblock Loss spike in training neural networks.
\newblock \emph{Journal of Computational Mathematics}, 2025.

\bibitem[Liu et~al.(2019)Liu, Jiang, He, Chen, Liu, Gao, and Han]{liuvariance}
Liu, L., Jiang, H., He, P., Chen, W., Liu, X., Gao, J., and Han, J.
\newblock On the variance of the adaptive learning rate and beyond.
\newblock In \emph{International Conference on Learning Representations}, 2019.

\bibitem[Lyu et~al.(2022)Lyu, Li, and Arora]{lyu2022understanding}
Lyu, K., Li, Z., and Arora, S.
\newblock Understanding the generalization benefit of normalization layers: Sharpness reduction.
\newblock \emph{Advances in Neural Information Processing Systems}, 35:\penalty0 34689--34708, 2022.

\bibitem[Ma et~al.(2022{\natexlab{a}})Ma, Kunin, Wu, and Ying]{ma2022beyond}
Ma, C., Kunin, D., Wu, L., and Ying, L.
\newblock Beyond the quadratic approximation: The multiscale structure of neural network loss landscapes.
\newblock \emph{Journal of Machine Learning}, 1\penalty0 (3):\penalty0 247--267, 2022{\natexlab{a}}.
\newblock ISSN 2790-2048.
\newblock \doi{https://doi.org/10.4208/jml.220404}.
\newblock URL \url{http://global-sci.org/intro/article_detail/jml/21028.html}.

\bibitem[Ma et~al.(2022{\natexlab{b}})Ma, Wu, and E]{ma2022qualitative}
Ma, C., Wu, L., and E, w.
\newblock A qualitative study of the dynamic behavior for adaptive gradient algorithms.
\newblock In \emph{Mathematical and scientific machine learning}, pp.\  671--692. PMLR, 2022{\natexlab{b}}.

\bibitem[Molybog et~al.(2023)Molybog, Albert, Chen, DeVito, Esiobu, Goyal, Koura, Narang, Poulton, Silva, et~al.]{molybog2023theory}
Molybog, I., Albert, P., Chen, M., DeVito, Z., Esiobu, D., Goyal, N., Koura, P.~S., Narang, S., Poulton, A., Silva, R., et~al.
\newblock A theory on adam instability in large-scale machine learning.
\newblock \emph{arXiv preprint arXiv:2304.09871}, 2023.

\bibitem[Mueller et~al.(2023)Mueller, Vlaar, Rolnick, and Hein]{mueller2023normalization}
Mueller, M., Vlaar, T.~J., Rolnick, D., and Hein, M.
\newblock Normalization layers are all that sharpness-aware minimization needs.
\newblock In \emph{Thirty-seventh Conference on Neural Information Processing Systems}, 2023.
\newblock URL \url{https://openreview.net/forum?id=lArwl3y9x6}.

\bibitem[Orvieto \& Gower(2025)Orvieto and Gower]{orvieto2025search}
Orvieto, A. and Gower, R.
\newblock In search of adam's secret sauce.
\newblock \emph{arXiv preprint arXiv:2505.21829}, 2025.

\bibitem[Reddi et~al.(2018)Reddi, Kale, and Kumar]{j.2018on}
Reddi, S.~J., Kale, S., and Kumar, S.
\newblock On the convergence of adam and beyond.
\newblock In \emph{International Conference on Learning Representations}, 2018.
\newblock URL \url{https://openreview.net/forum?id=ryQu7f-RZ}.

\bibitem[Shi et~al.(2021)Shi, Li, Hong, and Sun]{shi2021rmsprop}
Shi, N., Li, D., Hong, M., and Sun, R.
\newblock {RMS}prop converges with proper hyper-parameter.
\newblock In \emph{International Conference on Learning Representations}, 2021.
\newblock URL \url{https://openreview.net/forum?id=3UDSdyIcBDA}.

\bibitem[Taniguchi et~al.(2024)Taniguchi, Harada, Minegishi, Oshima, Jeong, Nagahara, Iiyama, Suzuki, Iwasawa, and Matsuo]{taniguchiadopt}
Taniguchi, S., Harada, K., Minegishi, G., Oshima, Y., Jeong, S.~C., Nagahara, G., Iiyama, T., Suzuki, M., Iwasawa, Y., and Matsuo, Y.
\newblock Adopt: Modified adam can converge with any $\beta_2$ with the optimal rate.
\newblock In \emph{The Thirty-eighth Annual Conference on Neural Information Processing Systems}, 2024.

\bibitem[Touvron et~al.(2023)Touvron, Martin, Stone, Albert, Almahairi, Babaei, Bashlykov, Batra, Bhargava, Bhosale, et~al.]{touvron2023llama}
Touvron, H., Martin, L., Stone, K., Albert, P., Almahairi, A., Babaei, Y., Bashlykov, N., Batra, S., Bhargava, P., Bhosale, S., et~al.
\newblock Llama 2: Open foundation and fine-tuned chat models.
\newblock \emph{arXiv preprint arXiv:2307.09288}, 2023.

\bibitem[Wang et~al.(2025)Wang, Zhuo, Zeng, Zhou, Yang, and Li]{wang2025scale}
Wang, Y., Zhuo, Z., Zeng, Y., Zhou, X., Yang, J., and Li, X.
\newblock Scale-distribution decoupling: Enabling stable and effective training of large language models.
\newblock \emph{arXiv preprint arXiv:2502.15499}, 2025.

\bibitem[Wang et~al.(2022)Wang, Li, and Li]{wang2022analyzing}
Wang, Z., Li, Z., and Li, J.
\newblock Analyzing sharpness along gd trajectory: Progressive sharpening and edge of stability.
\newblock \emph{Advances in Neural Information Processing Systems}, 35:\penalty0 9983--9994, 2022.

\bibitem[Wu et~al.(2018)Wu, Ma, and E]{wu2018sgd}
Wu, L., Ma, C., and E, W.
\newblock How sgd selects the global minima in over-parameterized learning: A dynamical stability perspective.
\newblock \emph{Advances in Neural Information Processing Systems}, 31, 2018.

\bibitem[Xie et~al.(2020)Xie, Wu, and Ward]{xie2020linear}
Xie, Y., Wu, X., and Ward, R.
\newblock Linear convergence of adaptive stochastic gradient descent.
\newblock In \emph{International conference on artificial intelligence and statistics}, pp.\  1475--1485. PMLR, 2020.

\bibitem[Xing et~al.(2018)Xing, Arpit, Tsirigotis, and Bengio]{xing2018walk}
Xing, C., Arpit, D., Tsirigotis, C., and Bengio, Y.
\newblock A walk with sgd.
\newblock \emph{arXiv preprint arXiv:1802.08770}, 2018.

\bibitem[Yin et~al.(2025)Yin, Huang, Song, Tang, Wu, Guo, Guo, Wang, Meng, Wang, Li, Chen, Tu, Li, Yu, Tang, Wang, Wang, Wang, Wang, Liu, Zhang, Tang, Mi, Jin, Wei, Qin, Li, Zhao, Deng, Li, Xu, Zhang, Zheng, Li, Ruan, Cheng, Guo, He, Li, Liu, Liu, Dai, Dong, Pan, Li, Wang, Li, Ni, Liu, Zhang, and Liu]{yin2025panguultrapushinglimits}
Yin, Y., Huang, W., Song, K., Tang, Y., Wu, X., Guo, W., Guo, P., Wang, Y., Meng, X., Wang, Y., Li, D., Chen, C., Tu, D., Li, Y., Yu, F., Tang, R., Wang, Y., Wang, B., Wang, B., Wang, B., Liu, B., Zhang, C., Tang, D., Mi, F., Jin, H., Wei, J., Qin, J., Li, J., Zhao, J., Deng, L., Li, L., Xu, M., Zhang, N., Zheng, N., Li, Q., Ruan, R., Cheng, S., Guo, T., He, W., Li, W., Liu, W., Liu, W., Dai, X., Dong, Y., Pan, Y., Li, Y., Wang, Y., Li, Y., Ni, Y., Liu, Z., Zhang, Z., and Liu, Z.
\newblock Pangu ultra: Pushing the limits of dense large language models on ascend npus, 2025.
\newblock URL \url{https://arxiv.org/abs/2504.07866}.

\bibitem[Zhai et~al.(2023)Zhai, Likhomanenko, Littwin, Busbridge, Ramapuram, Zhang, Gu, and Susskind]{zhai2023stabilizing}
Zhai, S., Likhomanenko, T., Littwin, E., Busbridge, D., Ramapuram, J., Zhang, Y., Gu, J., and Susskind, J.~M.
\newblock Stabilizing transformer training by preventing attention entropy collapse.
\newblock In \emph{International Conference on Machine Learning}, pp.\  40770--40803. PMLR, 2023.

\bibitem[Zhang et~al.(2022)Zhang, Chen, Shi, Sun, and Luo]{zhang2022adam}
Zhang, Y., Chen, C., Shi, N., Sun, R., and Luo, Z.-Q.
\newblock Adam can converge without any modification on update rules.
\newblock \emph{Advances in neural information processing systems}, 35:\penalty0 28386--28399, 2022.

\bibitem[Zhang et~al.(2024)Zhang, Lin, Wang, Zhang, and Xu]{zhang2024initialization}
Zhang, Z., Lin, P., Wang, Z., Zhang, Y., and Xu, Z.-Q.~J.
\newblock Initialization is critical to whether transformers fit composite functions by reasoning or memorizing.
\newblock \emph{Advances in Neural Information Processing Systems}, 37:\penalty0 14093--14126, 2024.

\bibitem[Zhang et~al.(2025{\natexlab{a}})Zhang, Lin, Wang, Zhang, and Xu]{zhang2025complexity}
Zhang, Z., Lin, P., Wang, Z., Zhang, Y., and Xu, Z.-Q.~J.
\newblock Complexity control facilitates reasoning-based compositional generalization in transformers.
\newblock \emph{IEEE Transactions on Pattern Analysis and Machine Intelligence}, 2025{\natexlab{a}}.

\bibitem[Zhang et~al.(2025{\natexlab{b}})Zhang, Wang, Yao, Zhou, Li, E, and Xu]{zhang2024anchorfunctiontypebenchmark}
Zhang, Z., Wang, Z., Yao, J., Zhou, Z., Li, X., E, W., and Xu, Z.-Q.~J.
\newblock Anchor function: a type of benchmark functions for studying language models.
\newblock In \emph{ICLR 2025 Workshop Bridging the Gap Between Practice and Theory in Deep Learning}, 2025{\natexlab{b}}.
\newblock URL \url{https://arxiv.org/abs/2401.08309}.

\bibitem[Zhou et~al.(2024)Zhou, Chen, Cao, Yang, and Gu]{zhou2024on}
Zhou, D., Chen, J., Cao, Y., Yang, Z., and Gu, Q.
\newblock On the convergence of adaptive gradient methods for nonconvex optimization.
\newblock \emph{Transactions on Machine Learning Research}, 2024.
\newblock ISSN 2835-8856.
\newblock URL \url{https://openreview.net/forum?id=Gh0cxhbz3c}.
\newblock Featured Certification.

\bibitem[Zou et~al.(2019)Zou, Shen, Jie, Zhang, and Liu]{zou2019sufficient}
Zou, F., Shen, L., Jie, Z., Zhang, W., and Liu, W.
\newblock A sufficient condition for convergences of adam and rmsprop.
\newblock In \emph{Proceedings of the IEEE/CVF Conference on computer vision and pattern recognition}, pp.\  11127--11135, 2019.

\end{thebibliography}
\end{document}